\newcommand\numberthis{\addtocounter{equation}{1}\tag{\theequation}}
\newcommand{\vertiii}[1]{{\left\vert\kern-0.25ex\left\vert\kern-0.25ex\left\vert #1 
    \right\vert\kern-0.25ex\right\vert\kern-0.25ex\right\vert}}
\newenvironment{talign}
 {\align}
 {\endalign}
\newenvironment{talign*}
 {\csname align*\endcsname}
 {\endalign}
\theoremstyle{plain}
\newtheorem{theorem}{Theorem}[section]
\newtheorem{proposition}[theorem]{Proposition}
\newtheorem{lem}[theorem]{Lemma}
\newtheorem{corollary}[theorem]{Corollary}
\theoremstyle{definition}
\newtheorem{assumption}[theorem]{Assumption}
\theoremstyle{remark}
\icmltitlerunning{Warm-Start Actor-Critic: From Approximation Error to Sub-optimality Gap}
\begin{document}


\twocolumn[
\icmltitle{Warm-Start Actor-Critic: From Approximation Error to Sub-optimality Gap}





\begin{icmlauthorlist}
\icmlauthor{Hang Wang}{ucd}
\icmlauthor{Sen Lin}{osu}
\icmlauthor{Junshan Zhang}{ucd}
\end{icmlauthorlist}

\icmlaffiliation{ucd}{Department of ECE, University of California, Davis, CA, USA}
\icmlaffiliation{osu}{Department of ECE, The Ohio State University, Columbus, OH, USA}

\icmlcorrespondingauthor{Hang Wang}{whang@ucdavis.edu}
\icmlcorrespondingauthor{Sen Lin}{lin.4282@osu.edu}
\icmlcorrespondingauthor{Junshan Zhang}{jazh@ucdavis.edu}

\icmlkeywords{Reinforcement Learning, Warm-start, Offline RL, Finite-time Analysis}

\vskip 0.3in
]



\printAffiliationsAndNotice{}  

\begin{abstract}
Warm-Start reinforcement learning (RL), aided by a prior policy obtained from offline training, is emerging as a  promising RL approach for practical applications. Recent empirical studies have demonstrated that the performance of Warm-Start RL can be improved \textit{quickly} in some cases but become \textit{stagnant} in other cases, especially when the function approximation is used. To this end, the primary objective of this work is to  build a fundamental understanding on ``\textit{whether and when online learning can be significantly accelerated by a warm-start policy from offline RL?}''. 
Specifically, we consider the widely used Actor-Critic (A-C) method with a prior policy.  We first quantify the approximation errors in the Actor update and the Critic update, respectively. Next, we cast the Warm-Start A-C algorithm as Newton's method with perturbation, and study the impact of the approximation errors on the finite-time learning performance with inaccurate Actor/Critic updates. Under some general technical conditions,  we derive the upper bounds, which shed light on achieving the desired finite-learning performance in the Warm-Start A-C algorithm. In particular, our findings reveal that it is essential to reduce the algorithm bias in online learning.
 We also obtain lower bounds on the sub-optimality gap of the Warm-Start A-C algorithm to quantify the impact of the bias and error propagation. 
\end{abstract}
\vspace{-0.1 in}

\section{Introduction}
\label{introduction}
Online reinforcement learning (RL) \cite{kaelbling1996reinforcement,sutton2018reinforcement} often faces the formidable challenge of high sample complexity and intensive computational cost \cite{kumar2020discor,xie2021policy},  which  hinders its applicability  in real-world tasks. Indeed, this is  the case in  portfolio management \cite{choi2009reinforcement}, vehicles control \cite{wu2017flow,shalev2016safe} and other time-sensitive settings \cite{li2017deep,garcia2015comprehensive}. To tackle this challenge, Warm-Start RL has recently garnered much attention \cite{nair2020accelerating,gelly2007combining,uchendu2022jump}, by enabling  online policy adaptation from an initial policy pre-trained using offline data (e.g., via behavior cloning or offline RL). One main insight of Warm-Start RL is that online learning can be significantly accelerated, thanks   to the bootstrapping by an initial policy. 

Despite the encouraging empirical successes \cite{silver2017mastering,silver2018general, uchendu2022jump}, a fundamental understanding of the learning performance of Warm-Start RL is  lacking, especially in the practical settings with function approximation by neural networks. In this work, we focus on the widely used Actor-Critic (A-C) method \cite{grondman2012survey,peters2008natural}, which combines the merits of both policy iteration and value iteration approaches \cite{sutton2018reinforcement} and has great potential for RL applications \cite{uchendu2022jump}. Notably, in the framework of abstract dynamic programming (ADP) \cite{bertsekas2022abstract}, the  policy iteration method \cite{sutton1999policy} has been studied extensively,   for warm-start learning {\em under the  assumption of accurate updates}.  In such a setting, 
 policy iteration can be regarded as a second-order method in convex optimization \cite{grand2021convex} from the perspective of ADP, and can achieve \textit{super-linear} convergence rate   \cite{santos2004convergence,puterman1979convergence,boyd2004convex}.  Nevertheless, when the A-C method is implemented in practical applications, {\em the approximation errors are inevitable in the Actor/Critic updates} due to many implementation issues,  including function approximation using neural networks, the finite sample size, and the finite number of gradient iterations. Moreover,  the error propagation from iteration to iteration may exacerbate the `slowing down' of the convergence and have intricate impact therein. Clearly, the (stochastic) accumulated errors may throttle the convergence rate significantly and degrade the learning performance dramatically  \cite{fujimoto2018addressing,farahmand2010error,dalal2020tale,lazaric2016analysis}. 
 Thus, it is of great importance to characterize the learning performance of  Warm-Start RL in practical scenarios; and the primary objective of this study is to take steps to build a fundamental understanding of the impact of the approximation errors on the finite-time sub-optimality gap for the Warm-Start A-C algorithm, i.e., 
 
 \begin{center}
 \begin{framed}
 \textit{ Whether and when online learning can be significantly accelerated by a warm-start policy from offline RL?}
 \end{framed}
  \vspace{-0.1 in}
 \end{center} 

To this end, we address the question in two steps: 

{\bf (1) We first focus on the characterization of the approximation errors via finite time analysis, based on which we quantify its impact on the sub-optimality gap of the  A-C algorithm in Warm-Start RL.}  In particular, we analyze the A-C algorithm in a more realistic setting where the samples are Markovian in the rollout trajectories for the Critic update  (different from the widely used i.i.d. assumption). Further, we consider that  the Actor update and the Critic update take place on the \textit{single-time scale}, indicating that the time-scale decomposition is not applicable to the finite-time analysis here.  We tackle these challenges using recent advances on Bernstein's Inequality for Markovian samples \cite{jiang2018bernstein,fan2021hoeffding}. By delving into the coupling due to the interleaved updates of the Actor and the Critic, we provide upper bounds on the approximation errors in the Critic update and the Actor update of online exploration, respectively, from which we pinpoint the root causes of the approximation errors.

{\bf (2) We  analyze the impact of the approximation errors on the finite-time learning performance of Warm-Start A-C.} Based on the approximation error characterization,  we treat the Warm-Start  A-C algorithm as  Newton's method with perturbation, and study the impact of the approximation errors on the finite-time learning performance of Warm-Start A-C.  We first establish the upper bound of the bias term in the perturbation. Then we derive the upper bounds on the learning performance gap for both biased and unbiased cases. 
Our findings reveal that it is essential to reduce the algorithm bias in online learning.
When the approximation errors are biased, we derive lower bounds on the sub-optimality gap, which reveals that even with a sufficiently good warm-start, the performance gap of online policy adaptation to the optimal policy is still bounded away from zero when the biases are not negligible. We present the experiments results to further elucidate our findings in Appendix \ref{appendix:numerical}. 
We remark that the primary objective of this work is to understand the convergence behavior, which is essential before  answering further  questions related to the convergence rate and  sampling complexity.

\begin{table}[t]
\caption{Related work in terms of (1) Warm-start setting, (2) Actor function approximation and (3) Critic function approximation.}
\label{table:compare}
\begin{center}
\resizebox{\columnwidth}{!}{
\begin{small}
\begin{sc}
\begin{tabular}{lccccr}
\toprule
Paper & Warm-start & Actor & Critic \\
\midrule
\cite{munos2003error}    &  &  & $\surd$ \\
\cite{farahmand2010error}& &$\surd$ & $\surd$\\
\cite{lazaric2016analysis}& &$\surd$ & $\surd$\\
\cite{fu2020single} & &$\surd$ & $\surd$\\
\cite{xie2021policy} & $\surd$ & & \\
\cite{bertsekas2022lessons}&$\surd$ & & $\surd$\\
\textbf{This work} &$\surd$   & $\surd$ & $\surd$ \\
\bottomrule
\end{tabular}
\end{sc}
\end{small}
}
\end{center}{\vspace{-0.1 in}}
\end{table}

{\bf{Related Work.}} {\bf (Warm-Start RL)}  The Warm-start RL considered in our work has the same setup as in \cite{bertsekas2022abstract} and recent successful applications including AlphaZero \cite{silver2017mastering}, where the offline pretrained policy is utilized as the initialization for online learning and this policy is \textit{updated} while interacting with the MDP online. 
In a  line of very recent works  \cite{gupta2020relay}\cite{ijspeert2002learning}\cite{kim2013learning} on Warm-Start RL,  the policy is  initialized via behavior cloning from offline data and then is fine-tuned with online reinforcement learning. A variant of this scheme is proposed in 
Advanced Weighted Actor Critic \cite{nair2020accelerating} which enables quick learning of skills across a suite of benchmark tasks. In the same spirit, Offline-Online Ensemble \cite{lee2022offline} leverages multiple Q-functions trained pessimistically offline as the initial function approximation for online learning. However,  we remark the theoretical  characterization of the finite-time performance of Warm-Start RL is still lacking. Our work aims to take steps to quantify the impact of approximation error on online RL with a warm-start policy.

In particular, it is worth to mention that some works \cite{bagnell2003policy}\cite{uchendu2022jump}\cite{xie2021policy} consider a different warm-start setting from ours. For instance, \cite{xie2021policy} considers the case where the reference policy is used to collect samples but remains \textit{fixed} during the online learning. Under this setting, \cite{xie2021policy} provides a quantitative understanding on the policy fine-tuning problem in episodic Markov Decision Processes (MDPs) and establishes the lower bound for the sample complexity, where  function approximation is not used. Jump-start RL \cite{uchendu2022jump} utilizes a guided-policy to initialize online RL in the early phase with a separate online exploration-policy. 

Meanwhile, we remark the major differences from ``offline-focus'' works, which aim to derive conditions on the quality of the offline part in the warm-start RL, e.g., coverage. Notably, the focus of \cite{wagenmaker2022leveraging}\cite{song2022hybrid} is on the offline policy quality while requiring the online learning part to satisfy certain conditions (either through delicate design or assumptions), e.g., \cite{song2022hybrid} requires the Bellman error to be upper bounded and \cite{wagenmaker2022leveraging} requires the online exploration to satisfy certain conditions. In \cite{xie2021policy}, the online algorithm needs to output a lower value estimate which is not available in standard online RL algorithms. On the contrary, motivated by recent empirical studies, which have demonstrated that a ``good'' warm-start policy does not necessary improve the online learning performance, especially when the function approximation is used \cite{nair2020accelerating}\cite{uchendu2022jump}, we consider the widely used Actor-Critic (A-C) method for online learning and aim to build a deep understanding on how the approximation errors in the \textit{online} Actor and Critic step has impact on the learning performance. Furthermore, we summarize the comparison between our work and related work in \Cref{table:compare}. The detailed comparison in terms of the assumptions on the MDP and the function approximation is available in \Cref{appendix:related}.

{\bf(Actor-Critic as Newton's Method)} The intrinsic connection between the A-C method and Newton's method can be traced back to the convergence analysis of policy iteration in MDPs  with continuous action spaces \cite{puterman1979convergence}. The connection is further examined later in a special MDP with discretized continuous state space \cite{santos2004convergence}. Recent work \cite{bertsekas2022lessons} points out that the success of  Warm-Start RL, e.g., AlphaZero, can be attributed  to the equivalence between policy iteration and Newton's method in the ADP framework, which leads to the superlinear convergence rate for online policy adaptation. Under the generalized differentiable assumption, it has also been proved theoretically that policy iteration is the instances of semi-smooth Newton-type methods to solve the Bellman equation \cite{gargiani2022dynamic}. While some prior works \cite{grand2021convex} have provided theoretical investigation of the connections between policy iteration and Newton's Method, the studies are carried out in the abstract dynamic programming (ADP) framework, assuming accurate updates in iterations. {Departing from the ADP framework,  this work treats the A-C algorithm as Newton's method in the presence of approximation errors, and focuses on the finite-time learning performance of  Warm-Start RL.}

{\bf (Finite-time analysis for Actor-Critic methods)} Among the existing works on the finite time analysis of A-C methods with function approximation,  \cite{yang2019provably} establishes the global convergence under the linear quadratic regulator. \cite{kumar2023sample} considers the sample complexity under i.i.d. assumptions where the Actor update and Critic update can be `decoupled'. \cite{khodadadian2022finite} considers the  two-timescale setting with Markovian samples. \cite{fu2020single} focuses on the more general single-time scale setting but constrains the policy function approximation in the energy based function class. While the  analysis in approximate policy/value iteration \cite{lazaric2016analysis}\cite{munos2003error}\cite{farahmand2010error} present the error propagation in the upper bound, it is unclear how the error from each update step behave. In this work, we provide the analysis on the approximation error for each learning step explicitly and based on which we establish the error propagation in both the upper bound and lower bound. 


\section{Background} \label{background}
{\bf Markov Decision Processes.} We consider a MDP defined by a tuple $(\mathcal{S},\mathcal{A}, P, r ,\gamma)$, where $\mathcal{S}=\{1,2,\cdots,n\}$, $n < \infty$ and $\mathcal{A}=\{1,2,\cdots,A\}$, $A< \infty$ represent the finite state space and finite action space, respectively.  $P(s'|s,a): \mathcal{S} \times \mathcal{A} \times \mathcal{S} \rightarrow [0,1]$ is the probability of the transition from state $s$ to state $s'$ by applying action $a$ and $r(s,a): \mathcal{S} \times \mathcal{A} \rightarrow \mathbb{R}$ is the corresponding reward. {$\gamma \in (0,1)$} is the discount factor.  At each step $t$, an agent moves from the current state $s_t$ to next state $s_{t+1}$ by taking an action $a_t$ following the policy $\pi \in \Pi: \mathcal{S} \rightarrow \mathcal{A}$ and receives the reward $r_t$. In the Warm-Start RL, we assume that the initial policy $\pi_0$ is given, e.g., in the form of a neural network \cite{li2017deep}, and obtained by offline training.  For brevity, we use bold symbols  $\boldsymbol{r}_{\pi}\in \mathbb{R}^n: [r_{\pi}]_{s} = r(s,\pi(s))$ and $\boldsymbol{P}_{\pi} \in \mathbb{R}^{n \times n}: [P^{\pi}]_{s,s'}\triangleq P(s' \vert s,\pi(s))$ to denote the reward vector and the transition matrix  induced by policy $\pi$. We further denote by $d^{\pi}: \mathcal{S} \to [0,1]$ and $\rho^{\pi}: \mathcal{S}\times \mathcal{A} \to [0,1]$ the stationary state distribution and state-action transition distribution induced by policy $\pi$. We use $\rho_0$ to represent the initial state distribution. We use $\|\cdot\|$ or $\|\cdot\|_2$ to represent the Euclidean norm.

{\bf Value Functions.} For any policy $\pi$, define the value function $v^{\pi}(s): \mathcal{S}\to \mathbb{R}$  as $ v^{\pi}(s) = \mathbf{E}_{a_t \sim \pi(\cdot \vert s_t), s_{t+1}\sim P(\cdot \vert s_t,a_t)}\left[\sum_{t=0}^{\infty} \gamma^t r_t \vert s_0=s \right]$ to measure the average accumulative reward staring from state $s$ by following policy $\pi$. We define $Q$-function $Q^{\pi}(s,a): \mathcal{S} \times \mathcal{A} \to \mathbb{R}$ as $Q^{\pi}(s,a) = \mathbf{E}[\sum_{t=0}^{\infty}\gamma^{t}r_t \vert s_0=s,a_0=a]$ to represent the expected return when the action $a$ is chosen at the state $s$. By using the transition matrix and reward vector defined above, we have the compact form of the value function $\boldsymbol{v}^{\pi} = (\boldsymbol{I}-\gamma \boldsymbol{P}_{\pi})^{-1}\boldsymbol{r}_{\pi}$, where $\boldsymbol{I}\in \mathbb{R}^{n\times n}$ is the identity matrix and $\boldsymbol{v}^{\pi} \in \mathbb{R}^n$ is the value vector with the component-wise values $[v^{\pi}]_s \triangleq v^{\pi}(s)$, with
\begin{align*}
v^{\pi}(s)  \triangleq 
\mathbf{E}_{a\sim \pi(\cdot \vert s)}[Q^{\pi}(s,a)].    
\end{align*}
The main objective is to find an optimal policy $\pi^{*}$ such that the value function is maximized, i.e., 
\begin{align}\label{eqn:mapping}
    \max_{\pi} \mathbf{E}_{s\sim \rho_0}[{v}^{\pi}(s)] \triangleq  \max_{\pi} \mathbf{E}_{s \sim \rho_{0}, a\sim \pi(\cdot \vert s)} [Q^{\pi}(s,a)].  
\end{align}
In what follows,  we use both $Q$-function and value function $v(s)$ for convenience, and the relation between the two is given in Eqn.~\eqref{eqn:mapping}. 
 

{\bf Bellman Operator.} For $\boldsymbol{v} \in \mathbb{R}^n$, define the Bellman evaluation operator $T^{\pi}:\mathbb{R}^n \to \mathbb{R}^n$ and the Bellman operator $T: \mathbb{R}^n \to \mathbb{R}^n$ as
\begin{align*}\textstyle
     T^{\pi}(\boldsymbol{v}) = & \boldsymbol{r}_{\pi} + \gamma \boldsymbol{P}_{\pi}\boldsymbol{v},\\
     T(\boldsymbol{v}) =&  \max_{\pi} \{\boldsymbol{r}_{\pi} + \gamma \boldsymbol{P}_{\pi}\boldsymbol{v}\} = \max_{\pi}   T^{\pi}(\boldsymbol{v}).
\end{align*}
It is well known that the  Bellman operator $T$ is a contraction mapping and has order-preserving property. 
Note that the Bellman operator $T$ may not be differentiable everywhere due to the $\max$ operator, and the value $\boldsymbol{v}^{*}$ of the optimal policy $\pi^{*}$ is the only fixed point of the Bellman operator $T$ \cite{puterman2014markov}. From the definition of the Bellman Evaluation Operator $T^{\pi}$, we have $\boldsymbol{v}^{\pi}$ to be the fixed point of $T^{\pi}$, i.e.,  $\boldsymbol{v}^{\pi} = T^{\pi}(\boldsymbol{v}^{\pi})$.

\subsection{Policy Iteration as Newton's Method in Abstract Dynamic Programming }\label{subsec:adpNewton}
Policy iteration carries out policy learning 
by alternating between two steps: policy improvement and policy evaluation.
At time $t$, the policy evaluation step seeks to learn the value function $\boldsymbol{v}^{\pi_t}$ for the current policy $\pi_t$ by solving the fixed point equation of the Bellman evaluation operator:
\begin{align*}
    \boldsymbol{v} =  T^{\pi_t}(\boldsymbol{v}). 
\end{align*}
Denote $\boldsymbol{v}_t=\boldsymbol{v}^{\pi_t}$ for simplicity. 
Then in the policy improvement step, a new  policy $\pi_{t+1}$ is obtained by maximizing the learnt value function $\boldsymbol{v}_t$ in the policy evaluation step, in a greedy manner, i.e., 
\begin{align}
    \pi_{t+1} = \arg\max_{\pi}T^{\pi}(\boldsymbol{v}_t).\label{eqn:pi}
\end{align}
To introduce the connection between policy iteration and Newton's Method, we first define operator $F: \boldsymbol{v} \to \boldsymbol{v} - T(\boldsymbol{v})$ for convenience. As in  \cite{grand2021convex,puterman2014markov}, {$F$ can be treated as the ``gradient'' of an unknown function.} Under the assumption that $F(\boldsymbol{v})$ is differentiable at $\boldsymbol{v}$, the Jacobian $\boldsymbol{J}_{\boldsymbol{v}}$ of $F$ at $\boldsymbol{v}$ can be obtained as $ \boldsymbol{J}_{\boldsymbol{v}} = I - \gamma \boldsymbol{P}_{\pi(\boldsymbol{v})}$, where $\pi(\boldsymbol{v}) \triangleq \arg\max_{\pi}T^{\pi}(\boldsymbol{v})$. Note that $  \boldsymbol{J}^{-1}_{\boldsymbol{v}} = \sum_{i=1}^{\infty} (\gamma \boldsymbol{P}_{\pi(\boldsymbol{v})})^i $ is invertible \cite{puterman2014markov}. {Since it can be shown that $\boldsymbol{v}^{\pi_{t+1}} = (\boldsymbol{I}-\gamma \boldsymbol{P}_{\pi_{t+1}})^{-1}\boldsymbol{r}_{\pi_{t+1}}=\boldsymbol{J}_{\boldsymbol{v}^{\pi_{t}}}^{-1} \boldsymbol{r}_{\pi_{t+1}}$ for the policy evaluation of $\pi_{t+1}$}, we have that,
\begin{align} 
    \boldsymbol{v}^{\pi_{t+1}} = \boldsymbol{v}^{\pi_t}-\boldsymbol{J}_{\boldsymbol{v}^{\pi_t}}^{-1}F(\boldsymbol{v}^{\pi_t}),
\label{eqn:DPnewton}
\end{align}
which indicates that the analytic representation of policy iteration in the abstract dynamic programming framework reduces to Newton's Method. It is worth mentioning that the convergence behavior of  policy iteration near the optimal value $\boldsymbol{v}^{*}$ cannot be directly obtained by using the results from convex optimization \cite{boyd2004convex} since the Bellman operator $T$ may not be differentiable at any given value vector $\boldsymbol{v}$. The full proof is included in Appendix \ref{appendix:example}. 

\begin{figure}[t]
     \centering
      \includegraphics[width=0.46\textwidth]{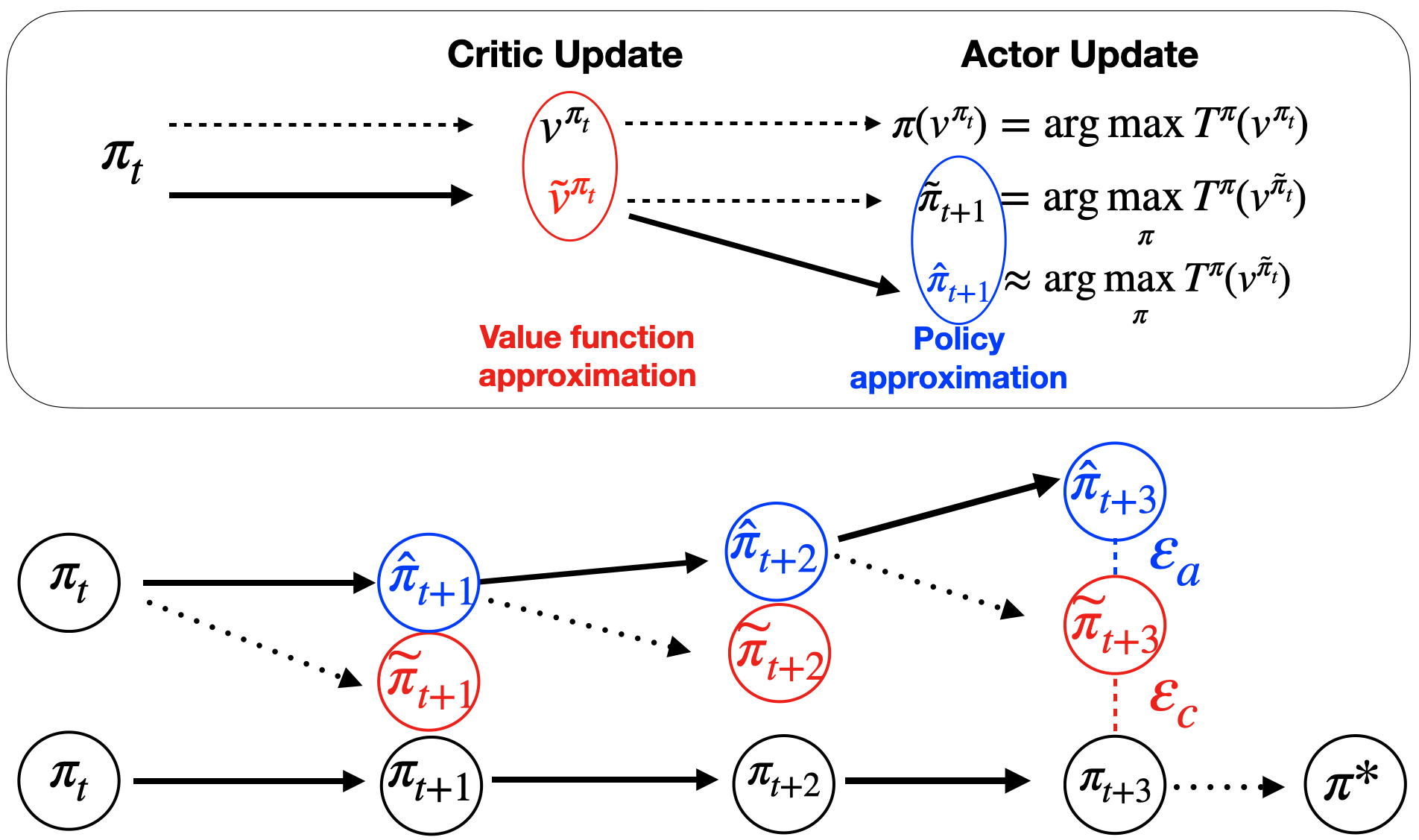}
        \caption{Illustration of error propagation effect in the A-C method: The approximation errors from Critic update ($\mathcal{E}_c$) and Actor update ($\mathcal{E}_a$) are carried forward and may get amplified due to accumulation. (To distinguish the approximation errors between Critic update and Actor update, we use tilde symbol ($~~\widetilde{}~~$) above variables, such as policy $\widetilde{\pi}$ and value vector $\widetilde{\boldsymbol{v}}$, to represent the policy and the value vector obtained in the presence of Critic update error. We use hat symbol ($~~\hat{}~~$) above the variables to represent the results with approximation error in Actor update.) } 
        \label{fig:1}
\end{figure}
\subsection{An Illustrative Example of the  Error Propagation in Actor-Critic Updates}\label{subsec:example}
The A-C method can be viewed as a generalization of policy iteration in ADP, where the Critic update corresponds to the policy evaluation of the current policy and the Actor update performs the policy improvement. In practice, function approximation (e.g., via neural networks) is often used to approximate both the Critic and the Actor, which inevitably incurs approximation errors for the policy update and evaluation. Moreover, the approximation errors could propagate along with the iterative updates in the A-C method. We have the illustrative example to get a more concrete sense of the impact of the approximation errors on the policy update.

As illustrated in Figure \ref{fig:1},
for a given policy $\pi_t$ with the underlying true policy value $\boldsymbol{v}^{\pi_t}$, we denote $\widetilde{\boldsymbol{v}}^{\pi_t}$ as the learnt value estimation of $\boldsymbol{v}^{\pi_t}$ in the Critic step.
We further denote $\pi_{t+1}$ and $\widetilde{\pi}_{t+1}$ as the greedy policy obtained in the Actor update Eqn. \eqref{eqn:pi} by using $\boldsymbol{v}^{\pi_t}$ and $\widetilde{\boldsymbol{v}}^{\pi_t}$, respectively. Let $\hat{\pi}_{t+1}$ be the policy estimation of $\widetilde{\pi}_{t+1}$ with function approximation in the Actor step. Intuitively, $\pi_{t+1}$ is the underlying true policy update from $\pi_t$ using one step policy iteration without any error, $\widetilde{\pi}_{t+1}$ is the policy update from $\pi_t$ with approximation errors in the Critic update, and $\hat{\pi}_{t+1}$ is the policy update from $\pi_t$ with approximation errors in both the Critic step and the Actor step.
To characterize the impact of the approximation errors on the policy update, i.e., the difference between $\boldsymbol{v}^{\pi_{t+1}}$ and $\boldsymbol{v}^{\hat{\pi}_{t+1}}$, we evaluate the Critic error, i.e., the difference between $\boldsymbol{v}^{\pi_{t+1}}$ and $\boldsymbol{v}^{\widetilde{\pi}_{t+1}}$, and the Actor error, i.e., the difference between $\boldsymbol{v}^{\widetilde{\pi}_{t+1}}$ and $\boldsymbol{v}^{\hat{\pi}_{t+1}}$, in a separate manner. More specifically, to quantify the Critic error, we can first have the following update based on the same reasoning with Eqn. \eqref{eqn:DPnewton}:
\begin{align*}
\boldsymbol{v}^{\widetilde{\pi}_{t+1}}
=&\boldsymbol{v}_{t}-\boldsymbol{J}_{\widetilde{\boldsymbol{v}}_{t}}^{-1}\left(\boldsymbol{v}_{t}-(\boldsymbol{r}_{\widetilde{\pi}_{t+1}}+\gamma \boldsymbol{P}_{\widetilde{\pi}_{t+1}} \boldsymbol{v}_{t})\right) \\
\triangleq&\boldsymbol{v}_{t}-\boldsymbol{J}_{\widetilde{\boldsymbol{v}}_{t}}^{-1}\left(\boldsymbol{v}_{t}-\widetilde{T}\left(\boldsymbol{v}_{t}\right)\right),
\end{align*}
where $ \widetilde{T}(\boldsymbol{v_t}) = \boldsymbol{r}_{{\widetilde{\pi}_{t+1}}}+\gamma \boldsymbol{P}_{{\widetilde{\pi}_{t+1}}} \boldsymbol{v}_{t}$ and $\boldsymbol{J}_{\widetilde{\boldsymbol{v}}_{t}}=\boldsymbol{I}-\gamma \boldsymbol{P}_{\widetilde{\pi}_{t+1}}$. 
Denote the approximation error (random variable) in the Bellman operator and the Jacobian by $\mathcal{E}_{T,t}$ and $\mathcal{E}_{J,t}$, i.e.,
\begin{align*}
     \widetilde{T}(\boldsymbol{v}_t)-T({\boldsymbol{v}_t})
        \triangleq & \mathcal{E}_{T,t},~~
    \boldsymbol{J}_{\widetilde{\boldsymbol{v}}_{t}}^{-1}-\boldsymbol{J}_{{\boldsymbol{v}}_{t}}^{-1} \triangleq  \mathcal{E}_{J,t},
\end{align*}
where it is clear that both error terms stem from the function approximation errors in the Critic update. 
To quantify the Actor error,  we assume that 
\begin{align*}
        \boldsymbol{v}^{\hat{\pi}_{t+1}} = \boldsymbol{v}^{\widetilde{\pi}_{t+1}} + \mathcal{E}_{a,t}, 
\end{align*}
where $\mathcal{E}_{a,t} $ is the error term. Therefore, by casting the A-C method as  Newton's method with perturbation, we can characterize the approximation errors on the policy update:
\begin{align*}
\boldsymbol{v}^{\hat{\pi}_{t+1}} & =\boldsymbol{v}^{\pi_{t+1}} + \mathcal{E}_{c,t} + \mathcal{E}_{a,t},
\end{align*}
where  $\mathcal{E}_{c,t}  \triangleq - \mathcal{E}_{J,t} (\boldsymbol{v}_{t}-{T}(\boldsymbol{v}_t)) + (\boldsymbol{J}_{{\boldsymbol{v}}_{t}}^{-1} +  \mathcal{E}_{J,t})\mathcal{E}_{T,t} $  and $\mathcal{E}_{a,t} $ capture the impact of the approximation error from Critic update step and Actor update step, respectively. Intuitively, as illustrated in Figure \ref{fig:1}, both errors from the previous update in the A-C method may propagate to the next update and thus affect the convergence behavior of the algorithm substantially, in contrast to idealized policy iteration without approximation errors.  This phenomenon has also been observed in the empirical results \cite{fujimoto2018addressing,thrun1993issues}. 
In this work, we strive to systematically analyze the impact of the approximation errors,  through (1) a detailed characterization of the approximation errors in the Critic update and the Actor update in Section \ref{sec:characterize} and (2) a thorough analysis of the error propagation effect and biases in Section \ref{sec:impact}. We also provide the illustration on our theoretical results in Fig. \ref{fig:illustration}.

\section{ Characterization of Approximation Errors}\label{sec:characterize}
{\bf{Actor-Critic Methods with Function Approximation.}} In what follows, we consider that the policy is parameterized by $\theta \in \Theta $, which in general corresponds to a non-linear function class. Following \cite{ konda1999Actor,peters2008natural,kumar2023sample,kumar2020discor,santos2004convergence}, the Q-function is parameterized  by a linear function class with feature vector $\phi(s,a): \mathcal{S}\times \mathcal{A}\to \mathbb{R}^d$ and parameter $\omega \in \Omega \subset \mathbb{R}^d$, i.e., $Q_{\omega}(s,a) = \omega^{\top}\phi(s,a)$.  We  note that the modeling of the   Q-function via  linear value function is often used to extract insight in the A-C method. Similar to the policy iteration, the update in the A-C method alternates between the following two steps \footnote{We remark that our analysis framework and theoretical results are able to be applied to off-policy setting with the extra assumption on the behavior policy. We include the details in Appendix \ref{appendix:offpolicy}.}.

\emph{Critic update:} The Critic updates its parameter $\omega$ to evaluate the current policy $\pi_t$, e.g., through $m$-step ($m \geq 1$) Bellman evaluation operator $T^{\pi}$ to the current $Q$-function estimator (namely, $m$-step return), which leads to the following update rule at time step $t$,
\begin{talign*}
     & Q_{{t+1}}(s, a) \leftarrow \mathbf{E}_{\pi_{t}}\large[(1-\gamma) \cdot \sum_{i=0}^{m-1}\gamma^{i} r\left(s_{i}, a_{i}\right)\\
     & \quad \quad \quad \quad \quad +\gamma^m \cdot  Q_{\omega_{t}}\left(s_{m}, a_{m}\right) \mid s_{0}=s, a_{0}=a\large],\\
     &\omega_{t+1} \leftarrow  {\arg\min}_{\omega} \mathbf{E}_{(s,a) \sim \rho^{\pi_t}}\left[  Q_{{t+1}} - \omega^{\top} \phi \right]^2(s,a).\numberthis  \label{eqn:crticupdate}
\end{talign*}
\emph{Actor update:} The Actor is updated  through a greedy step to maximize Q-function $Q_{\omega_{t+1}}$, i.e.,
\begin{align}
     \pi_{t+1} \leftarrow & \arg\max_{\pi} \mathbf{E}_{(s,a)\sim \rho^{\pi}}\left[ Q_{\omega_{t+1}}(s, a)\right].
     \label{eqn:Actorupdate}
\end{align}
\vspace{-0.25in}
\subsection{Approximation Error in the Critic Update}\label{subsec:Critic}

Solving the minimization problem in Eqn. \eqref{eqn:crticupdate} involves the expectation over the { stationary state-action distribution $\rho^{\pi_t}$ induced by the current policy $\pi_t$}, which can be approximated by sample average in practice.
Therefore, we consider the Critic update below based on two groups of samples,  $\{(s_{l},a_{l})\}_{l=1}^N$ and $\{\tau_l\}_{l=1}^N$ where $\tau_l =\{s_{l,t},a_{l,t},r_{l,t}\}_{t=0}^{m} $, which are collected by following $\pi_{t}$:
 \begin{talign*} 
 \vspace{-0.1in}
& \omega_{t+1} = \Gamma_{R}\Big\{ \left(\sum_{l=1}^{N} \phi\left(s_{l}, a_{l}\right) \phi\left(s_{l}, a_{l}\right)^{\top}\right)^{-1} \\
\cdot & \textstyle{  \sum_{l=1}^{N}\left((1-\gamma)  \sum_{i=0}^{m-1} \gamma^i r_{l, i}+\gamma^m  Q_{\omega_{t}}\right) \phi\left(s_{l, m}, a_{l, m}\right)}\Big\}, \numberthis \label{eqn:Criticsample}
\vspace{-0.15in}
\end{talign*}
where $\Gamma$ is the projection operator onto the Critic parameter space $\Omega$ with radius $R$ in $\mathbb{R}^d$.  Since the samples in each trajectory $\tau_l$  are obtained via rolllouts,  in general the samples in each trajectory follow a Markovian process  \cite{dalal2018finite,kumar2023sample}. We assume the samples are from the stationary distribution induced by the current policy.

In what follows, we use  $\omega$ and $\widetilde{\omega}$ to distinguish the difference between the sample-based update and the solution from Eqn. \eqref{eqn:crticupdate}, such that the approximation error in the Critic update can be quantified as $|Q_{\widetilde{\omega}_t} - Q_{\omega_t}|$. 
We first impose the following standard assumptions on the Bellman evaluation operator $T^{\pi}$, the feature vector $\phi(s,a)$ and the MDP.
\begin{assumption}
For given Critic parameter ${\omega}$ and policy parameter $\theta$, the following condition holds:
\begin{align*}
        \inf_{\bar{\omega} \in \Omega}\mathbf{E}_{\rho^{\pi_{\theta}}}[\left(({T}^{\pi_{\theta}})^m Q_{\omega} - \bar{\omega}^{\top}\phi\right)(s,a)] = 0,
\end{align*}
where $\rho^{\pi_{\theta}}$ is the stationary state-action transition probability induced by policy $\pi_{\theta}$.
\label{asu:exist}
\vspace{-0.05in}
\end{assumption}
Assumption \ref{asu:exist} \cite{fu2020single} indicates that the solution of the Critic update given in  Eqn.~\eqref{eqn:crticupdate} lies in the Critic parameter space $\Omega$. We note that this assumption is used for ease of exposition, and our results can be modified by incorporating an additional constant term when this assumption does not hold. The proof sketch in this case can be found in Appendix \ref{appendix:theorem1}.

\begin{assumption} 
The feature vector $\phi(s,a)$ in the Critic satisfies the following two conditions: (1) $\|\phi(s,a)\|_2 \leq 1$, $\forall~(s,a)\in \mathcal{S} \times \mathcal{A} $; and (2) the smallest singular value for $\mathbf{E}_{\rho^{\pi_{\theta}}}[\phi(s,a)\phi(s,a)^{\top}]$ is lower bounded by a  positive constant $\sigma^{*}$ for policy $\pi_{\theta}$, where $\theta$ is the actor parameter obtained from the Actor update.
\label{asu:wellcondition}
\vspace{-0.1in}
\end{assumption}
Assumption \ref{asu:wellcondition} is widely used in  the A-C method to
guarantee that the minimization in Eqn. \eqref{eqn:crticupdate} can be attained by a unique minimizer \cite{fu2020single,bhandari2018finite}.

\begin{assumption}
The reward $r(s,a)$ satisfies the following two conditions: (1) The reward is upper bounded by a positive constant $r_{\max}$ for all $(s,a)\in \mathcal{S} \times \mathcal{A} $; and (2) the stationary state-action transition matrix $\boldsymbol{P}^{\pi}$ has non-zero  spectral gap $1-\lambda>0$ for all $\pi$.
\label{asu:reward}
\vspace{-0.1in}
\end{assumption}
The first condition in Assumption \ref{asu:reward} is often used for discounted MDPs to ensure a finite value function (e.g., $Q(s,a) \leq Q_{\max}$)\cite{thrun1993issues,fujimoto2018addressing,fu2020single}. Moreover, since the samples in the same trajectories are generally correlated,   the second condition is  adopted to guarantee the concentration properties of the Markov chain, which is generally true for the stationary Markov chain \cite{jiang2018bernstein,ortner2020regret}.

For any $\lambda\in (-1,1)$, let $\alpha_{1}(\lambda)={(1+\lambda)}/{(1-\lambda)}$, $\alpha_{2}(\lambda)=5/(1-\lambda)$ where 
$\alpha_{2}(0)=1/3$ and $\alpha_3(\lambda)=\max \left\{\lambda, 0\right\}$.  Define $
    \tilde{r}_m= \frac{\sqrt{\alpha_{2}^2r_{\max}^2\alpha_3^2 \ln^2{p} - 2m \alpha_{1}\alpha_3\ln{p} } - \alpha_{2}\alpha_3\ln{p}}{m}+r_{\max}$ and then we can have the following main result on the approximation error in the Critic update step.
\begin{proposition}[Approximation Error in Critic Update]
Under Assumptions \ref{asu:exist}, \ref{asu:wellcondition}, \ref{asu:reward}, the following inequality holds with probability at least $1-p$,  for any $t>0$, ${{(s,a) \in \mathcal{S}\times  \mathcal{A}}}$:
\begin{talign*}
  & |Q_{\omega_t}{{(s,a)}} - Q_{\widetilde{\omega}_t}{{(s,a)}}| \leq  \frac{4((1-\gamma)\tilde{r}_m + \gamma^mR)}{\sqrt{N}(\sigma^{*})^2}\\
     & ~~~~~~~~~~~~~~~~\cdot \Large(-\frac{2}{3N}\operatorname{log}\frac{p}{4d} 
      +\sqrt{\frac{4}{9N^2}\operatorname{log}^2\frac{p}{4d} - \frac{2}{N}\operatorname{log}\frac{p}{4d}} \huge)\\
     & ~~~~~~~~~~~~~~~~:= \epsilon_p,
\end{talign*}
where $d$ is the dimension of the Critic parameter $\omega$ and $R$ is the radius of Critic parameter space $\Omega$ as in Eqn. \eqref{eqn:Criticsample}.
\label{theorem:Critic}
\end{proposition}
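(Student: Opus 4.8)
The plan is to write both the idealized Critic solution of Eqn.~\eqref{eqn:crticupdate} and its sample-based counterpart of Eqn.~\eqref{eqn:Criticsample} in closed form and then control their difference by a matrix-perturbation argument combined with concentration. Fix an iteration with bootstrap target $Q_{\omega_t}$, $\|\omega_t\|\le R$, and set $A \triangleq \mathbf{E}_{\rho^{\pi_t}}[\phi(s,a)\phi(s,a)^{\top}]$ and $b \triangleq \mathbf{E}_{\rho^{\pi_t}}[((T^{\pi_t})^m Q_{\omega_t})(s,a)\,\phi(s,a)]$. By Assumptions~\ref{asu:exist} and \ref{asu:wellcondition} the minimizer in Eqn.~\eqref{eqn:crticupdate} is $\omega_{t+1}=A^{-1}b\in\Omega$, while (after cancelling the $1/N$ factors) Eqn.~\eqref{eqn:Criticsample} reads $\widetilde{\omega}_{t+1}=\Gamma_R(\widehat{A}^{-1}\widehat{b})$ with $\widehat{A}\triangleq\frac1N\sum_l\phi(s_l,a_l)\phi(s_l,a_l)^{\top}$ and $\widehat{b}\triangleq\frac1N\sum_l\big((1-\gamma)\sum_{i=0}^{m-1}\gamma^i r_{l,i}+\gamma^m Q_{\omega_t}(s_{l,m},a_{l,m})\big)\phi(s_{l,m},a_{l,m})$. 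Since $\Gamma_R$ is the Euclidean projection onto the convex set $\Omega$ and $\Gamma_R\omega_{t+1}=\omega_{t+1}$, it is non-expansive, so $\|\widetilde{\omega}_{t+1}-\omega_{t+1}\|\le\|\widehat{A}^{-1}\widehat{b}-A^{-1}b\|$; and by $\|\phi(s,a)\|_2\le1$ (Assumption~\ref{asu:wellcondition}(1)), $|Q_{\omega_{t+1}}(s,a)-Q_{\widetilde{\omega}_{t+1}}(s,a)|=|(\omega_{t+1}-\widetilde{\omega}_{t+1})^{\top}\phi(s,a)|\le\|\omega_{t+1}-\widetilde{\omega}_{t+1}\|$, so it suffices to bound the parameter error in $\ell_2$ (the proposition's index $t$ then being a relabelling of $t+1$).

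The second step is the perturbation identity
\begin{align*}
\widehat{A}^{-1}\widehat{b}-A^{-1}b=\widehat{A}^{-1}(\widehat{b}-b)+\widehat{A}^{-1}(A-\widehat{A})A^{-1}b,
\end{align*}
giving $\|\widehat{A}^{-1}\widehat{b}-A^{-1}b\|\le\|\widehat{A}^{-1}\|\big(\|\widehat{b}-b\|+\|A-\widehat{A}\|\,\|A^{-1}\|\,\|b\|\big)$. Here $\|A^{-1}\|\le1/\sigma^{*}$ by Assumption~\ref{asu:wellcondition}(2); on the event $\|\widehat A-A\|\le\sigma^{*}/2$ a Neumann-series bound yields $\|\widehat A^{-1}\|\le2/\sigma^{*}$; and $\|b\|$, $\|\widehat b\|$, and each summand of $\widehat b$ are all at most $B\triangleq(1-\gamma)\tilde r_m+\gamma^m R$ — the $\gamma^m R$ part from $|Q_{\omega_t}|=|\omega_t^{\top}\phi|\le R$, and the $(1-\gamma)\tilde r_m$ part from a high-probability bound on the discounted $m$-step reward sum. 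The quantity $\tilde r_m$ is produced exactly here: the samples within a rollout $\tau_l$ are Markovian, so I would apply a Bernstein-type inequality for Markov chains \cite{jiang2018bernstein,fan2021hoeffding} to $\sum_{i=0}^{m-1}\gamma^i r_{l,i}$ and solve the resulting quadratic for the deviation, with $\alpha_1,\alpha_2,\alpha_3$ encoding the spectral gap $1-\lambda$ of Assumption~\ref{asu:reward}(2) and $r_{\max}$ the mean bound of Assumption~\ref{asu:reward}(1); that computation is exactly the displayed definition of $\tilde r_m$.

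It then remains to concentrate $\widehat A$ around $A$ and $\widehat b$ around $b$. Both are averages of $N$ bounded summands (the $\phi\phi^{\top}$ are PSD with operator norm $\le1$; the $\widehat b$-summands have norm $\le B$), so a matrix/vector Bernstein inequality — in its Markov-chain form unless the $N$ rollout starting points are treated as i.i.d.\ draws from $\rho^{\pi_t}$ — gives, with failure probability $p/(4d)$, the deviation
\begin{align*}
-\tfrac{2}{3N}\log\tfrac{p}{4d}+\sqrt{\tfrac{4}{9N^2}\log^2\tfrac{p}{4d}-\tfrac{2}{N}\log\tfrac{p}{4d}},
\end{align*}
which is precisely the positive root $x$ of the Bernstein quadratic $Nx^2=2\log\tfrac{4d}{p}\,\big(1+\tfrac{2x}{3}\big)$, the factor $d$ coming from a union bound over coordinates (equivalently the intrinsic dimension in matrix Bernstein). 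Allocating the budget $p$ across the $\tilde r_m$ event and the two concentration events by a union bound explains the $p/4$, and collecting $\|\widehat A^{-1}\|\le2/\sigma^{*}$, $\|A^{-1}\|\le1/\sigma^{*}$ and $\|b\|,\|\widehat b\|\le B$ produces the prefactor $4B/(\sqrt N(\sigma^{*})^2)$, which is exactly $\epsilon_p$.

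The main obstacle is the Markovian dependence in the data: one has to invoke the correct Bernstein inequality for Markov chains twice — once for the within-trajectory discounted reward sum (to obtain $\tilde r_m$) and once for the cross-sample matrix and vector averages (to obtain the $\log(p/4d)$ factor) — and then thread both estimates through the matrix-inverse perturbation while keeping every constant ($4$, $2/3$, $(\sigma^{*})^2$, the $d$) aligned with the stated bound. Once the right concentration tools are in place, the remainder is careful bookkeeping; the one point needing a word of care is invertibility of $\widehat A$, which is ensured on the same high-probability event $\|\widehat A-A\|\le\sigma^{*}/2$ used for $\|\widehat A^{-1}\|\le2/\sigma^{*}$.
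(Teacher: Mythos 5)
Your proposal follows essentially the same route as the paper's proof in Appendix~\ref{appendix:theorem1}: write both updates in closed least-squares form, control the difference via a matrix-inverse perturbation bound using $\|\widehat{A}^{-1}\|\le 2/\sigma^{*}$ and $\|A^{-1}\|\le 1/\sigma^{*}$, invoke the Markov-chain Bernstein inequality on the within-rollout discounted reward sum to produce $\tilde{r}_m$, and apply a matrix Bernstein bound to $\|\widehat{A}-A\|$ to obtain the $\log\frac{p}{4d}$ deviation term, combined by a union bound. The only differences are cosmetic (which of $\|\widehat{v}\|$ versus $\|b\|$ and $\Phi$ versus $\widehat{A}^{-1}$ carries each cross term), so the argument is correct and matches the paper's.
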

Proposition \ref{theorem:Critic} establishes the upper bound for the approximation error in the Critic update,
which encapsulates the impact of the finite sample size and the finite-step rollout with Bellman evaluation operator  $T^{\pi}$. {It can be seen from Proposition  \ref{theorem:Critic} that in order to obtain an accurate evaluation of the policy, we can increase the sample size $N$ in the update Eqn. \eqref{eqn:Criticsample} and have more steps of rollout with Bellman evaluation operator  $T^{\pi}$}. We remark that Proposition  \ref{theorem:Critic}  considers the correlation across samples, and we appeal to the recent advances in Bernstein's Inequality for Markovian samples \cite{jiang2018bernstein}\cite{fan2021hoeffding} to tackle this challenge. 
The proof of Proposition \ref{theorem:Critic} can be found in Appendix \ref{appendix:inequality} and Appendix \ref{appendix:theorem1}. 
\vspace{-0.1 in}
\subsection{Approximation Error in the Actor Update}\label{subsec:Actor}
In practice, the greedy  search step for solving Eqn. \eqref{eqn:Actorupdate} is generally approximated by multiple (e.g., $N_a$) steps of policy gradient.  Based on the policy gradient theorem \cite{silver2014deterministic,sutton1999policy}, we can have the following update at gradient step $k \in [1, N_a]$ in the $t$-th Actor update:
\begin{talign*}
    &{\theta}_{t,k+1} = \theta_{t,k} + \alpha\mathbf{E}_{(s,a)\sim\rho^{\pi_{\theta_{t,k}}}}[Q_{\omega_{t+1}}(s,a)\nabla_{\theta}\pi_{\theta_{t,k}}(a\vert s)],\\
    &~~~~~~~~~~~~~~~~~~~~{\theta}_{t,1}=\theta_t,~~{\theta}_{t,N_a}= \theta_{t+1}, \numberthis
        \label{eqn:Actorgradient}
\end{talign*}

where $\alpha$ is the learning rate. For simplicity, we drop the subscript $t$ in $\theta_{t,k}$ when no confusion will arise and denote $\rho^k := \rho^{\pi_{\theta_k}} $. As in the Critic update, 
we sample a trajectory with length $l$ by following the current policy ${\pi_{\theta_k}}$, i.e., $\{s_1,a_1, s_2,a_2, \cdots, s_l,a_l\}$, to approximate the expectation in Eqn. \eqref{eqn:Actorgradient}. Then we can have that
\begin{talign*}
        \theta_{k+1} = &\theta_k + \alpha\frac{1}{l}\sum_{i=1}^{l}  [Q_{\omega_{t+1}}(s_i,a_i)\nabla_{\theta}{\pi_{\theta_k}}(a_i\vert s_i)]\\
        :=&\theta_k + \alpha (C_{k,t,1} + C_{k,t,2}) + \alpha f_{k,t} , \numberthis
        \label{eqn:Actorsample}
\end{talign*}
where $C_{k,t,1}$, $C_{k,t,2}$ and $f_{k,t}$ are defined as follows 
\begin{talign*}
     C_{k,t,1}:= & 1/l \sum_{i=1}^l ( Q_{\omega_{t+1}}-Q_{\widetilde{\omega}_{t+1}}) (s_i,a_i)\nabla_{\theta}\pi_{\theta_{k}}(a_i\vert s_i),\\
    C_{k,t,2}:= & 1/l \sum_{i=1}^l (Q_{\widetilde{\omega}_{t+1}}-Q^{\pi_{\theta_{t}}})(s_i,a_i)\nabla_{\theta}\pi_{\theta_{k}}(a_i\vert s_i),\\
    f_{k,t}:= & 1/l \sum_{i=1}^l Q^{\pi_{\theta_{t}}}(s_i,a_i)\nabla_{\theta}\pi_{\theta_{k}}(a_i\vert s_i).
\end{talign*}
Here $C_{k,t,1}$ captures the error resulted from using samples to estimate expectation in the Critic update. Based on our result in Proposition \ref{theorem:Critic}, with high probability, this term will go to $0$ when we have infinite samples or infinite rollout length $m$. Note that $({T}^{\pi_{\theta_{t}}})^m Q_{\omega_{t}} = Q_{\widetilde{\omega}_{t+1}}$ (Critic update) and $\lim_{m\to \infty}({T}^{\pi_{\theta_{t}}})^m Q_{\omega_{t}} = Q^{\pi_{\theta_{t}}}$.  And $C_{k,t,2}$ implies the approximation error when applying the Bellman operator limited ($m$) times.  This term will go to $0$ when $m \to \infty$. $f_{k,t}$ is an unbiased estimation of the gradient of $\mathbf{E}_{(s,a)\sim \rho^{k}}[Q^{\pi_{\theta_{t}}}(s,a)]$, i.e., $\mathbf{E}[f_{k,t}] = \mathbf{E}_{(s,a)\sim \rho^{{k}}}[Q^{\pi_{\theta_{t}}}(s,a)\nabla_{\theta}\pi_{\theta_{k}}(a \vert s)] $.

Based on Eqn. \eqref{eqn:Actorsample},  it is clear that the Actor update with the approximation error resulted from the Critic update can be viewed as a stochastic gradient update with some perturbation  $C_{k,t} = C_{k,t,1} + C_{k,t,2}$. For convenience, we define
 \begin{align*}
     h(\omega,\theta):= \mathbf{E}_{(s,a)\sim \rho^{\pi_{\theta}}}\left[ Q_{\omega}(s, a)\right]=\mathbf{E}_{s \sim d^{\pi_{\theta}}}[v^{\pi_{\omega}}(s)].
 \end{align*} 
Note that in the Actor update,  the Critic parameter $\omega$ is fixed, and the Actor parameter $\theta$ is updated.  Let  $\theta_{t+1}^{*}$ denote the solution to   Eqn. \eqref{eqn:Actorupdate}.

Denote the score function $\psi_{\theta}(a \vert s):=\nabla_{\theta}\pi_{\theta}(a\vert s)$.
We have the following  assumptions on $\psi_{\theta}$.
\begin{assumption}
    For any $\theta, \theta' \in \mathbb{R}^d$ and state-action pair $(s,a) \in \mathcal{S} \times \mathcal{A}$, there exist positive constants $L_{\psi}$, $C_{\psi}$ and $C_{\pi}$ such that the following holds: (1) $\|\psi_{\theta} - \psi_{\theta'}\| \leq L_{\psi}\|\theta-\theta'\|$; (2)  $\|\psi_{\theta}\| \leq C_{\psi}  $ and (3) $\|\pi_{\theta}(\cdot \vert s) - \pi_{\theta'}(\cdot \vert s)\|_{\text{TV}} \leq C_{\pi}\|\theta-\theta'\|$, where $\|\cdot\|_{TV}$ is the total-variation distance.
\label{asu:pi}
\end{assumption}
The smoothness and bounded property of the score function as stated in the  (1) and (2) in Assumption \ref{asu:pi} are widely adopted in the literature \cite{xu2020non,zou2019finite,agarwal2020optimality,kumar2023sample}, and it has been shown \cite{xu2020improving}  that (3) in Assumption \ref{asu:pi} can be satisfied for any smooth policy with bounded action space. 

Let $L_0=Q_{\max}L_{\psi}$, $\alpha \leq \frac{1}{2L_0}$, $\kappa = C_{\psi}\frac{r_{\max}}{1-\gamma}$,  $\sigma=3\kappa$, $\mu = \frac{ g_{\min} }{h_{\max}^{*} -  h_{\max}} $, where $h_{\max} = \max_{\theta \neq \theta^{*}}h(\theta,\omega), h_{\max}^{*} = \max_{\theta = \theta^{*}}h(\theta,\omega), $ $g_{\min} = \min_{\theta \neq \theta^{*}} \|\nabla h(\omega,\theta)\|$. Denote $\Upsilon=(1-\alpha\mu)^{N_a}$. Finally, we present the upper bound of the approximation error in the Actor update. 
\begin{proposition} [Approximation Error in Actor Update]
 Given Actor parameter $\theta_{t-1}$, the following inequality holds: 
\begin{align*}
& \mathbf{E}_{\theta_{t}}[h(\omega,\theta_t^{*}) - h(\omega,\theta_t)\vert \theta_{t-1}] \\
&\leq  \Upsilon(h(\omega,\theta_t^{*}) -h(\omega,\theta_{t-1}))+\Xi_{{p}} ,
\end{align*}
where $ \Xi_{{p}}=((C_{\psi}\epsilon_p + 2\kappa)^2+2\alpha L \sigma^2)/2\mu$.
\label{theorem:Actor}
\end{proposition}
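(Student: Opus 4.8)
The plan is to treat the inner loop of the Actor update, Eqns.~\eqref{eqn:Actorgradient}--\eqref{eqn:Actorsample}, as $N_a$ iterations of \emph{stochastic gradient ascent} on $\theta\mapsto h(\omega,\theta)$ with a \emph{biased} and \emph{noisy} gradient oracle, and then combine a smoothness estimate with a gradient-domination (PL-type) inequality to get a geometric one-step recursion for the inner sub-optimality $h(\omega,\theta_t^*)-h(\omega,\theta_k)$. First I would establish that $\nabla_\theta h(\omega,\cdot)$ is Lipschitz with constant of order $L_0=Q_{\max}L_\psi$: writing the policy gradient as an expectation of $Q_\omega(s,a)\psi_\theta(a\mid s)$ against the occupancy of $\pi_\theta$, Lipschitzness follows from $|Q_\omega|\le Q_{\max}$ (Assumption~\ref{asu:reward}(1)), the smoothness and boundedness of $\psi_\theta$ (Assumption~\ref{asu:pi}(1)--(2)), and the TV-Lipschitz dependence of $\pi_\theta$ on $\theta$ (Assumption~\ref{asu:pi}(3)) to control the change of the occupancy measure. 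Second, from the definitions $g_{\min}=\min_{\theta\ne\theta^*}\|\nabla h(\omega,\theta)\|$, $h_{\max}$, $h_{\max}^*$ and $\mu=g_{\min}/(h_{\max}^*-h_{\max})$ I would extract the domination-type inequality that turns the first-order progress term into a multiple of the current gap, giving the contraction factor $(1-\alpha\mu)$; this replaces strong concavity, which is unavailable because $h(\omega,\cdot)$ is generally nonconcave for a nonlinear policy class.

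Next, using the decomposition $\theta_{k+1}=\theta_k+\alpha(C_{k,t,1}+C_{k,t,2}+f_{k,t})$ I would bound the oracle's bias and variance. On the event of Proposition~\ref{theorem:Critic} (critic sampling error $\le\epsilon_p$), $\|C_{k,t,1}\|\le C_\psi\epsilon_p$ since $\|\psi_\theta\|\le C_\psi$; and $\|C_{k,t,2}\|\le 2\kappa$ because $Q_{\widetilde\omega_{t+1}}=(T^{\pi_{\theta_t}})^m Q_{\omega_t}$ and $Q^{\pi_{\theta_t}}$ are each bounded by $Q_{\max}=r_{\max}/(1-\gamma)$, so the total bias is at most $B:=C_\psi\epsilon_p+2\kappa$. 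For the zero-mean part $f_{k,t}-\mathbf{E}[f_{k,t}]$ I would bound the second moment by $\sigma^2=9\kappa^2$ from $|Q^{\pi_{\theta_t}}|\le Q_{\max}$ and $\|\psi_\theta\|\le C_\psi$, together with the mixing of the rollout trajectory (the spectral-gap condition in Assumption~\ref{asu:reward}(2)), which is needed because the $l$ samples used to form $f_{k,t}$ are Markovian rather than i.i.d.; the same mixing estimate certifies that $\mathbf{E}[f_{k,t}]$ agrees with the stationary-distribution policy gradient of $h$ up to the bias already accounted for.

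Then I would carry out the analysis of a single inner step: the descent lemma for the $L$-smooth $h$ gives $h(\omega,\theta_{k+1})\ge h(\omega,\theta_k)+\alpha\langle\nabla h(\omega,\theta_k),C_{k,t,1}+C_{k,t,2}+f_{k,t}\rangle-\tfrac{L\alpha^2}{2}\|C_{k,t,1}+C_{k,t,2}+f_{k,t}\|^2$; taking the conditional expectation given $\theta_k$, splitting the cross term by Young's inequality (retaining half of $\|\nabla h(\omega,\theta_k)\|^2$ against $\tfrac12 B^2$), using $\alpha\le 1/(2L_0)$ to absorb the residual $\|\nabla h(\omega,\theta_k)\|^2$ produced by the quadratic term, and invoking the domination inequality from the first paragraph, I expect
\[
\mathbf{E}\!\left[h(\omega,\theta_t^*)-h(\omega,\theta_{k+1})\mid\theta_k\right]\le(1-\alpha\mu)\big(h(\omega,\theta_t^*)-h(\omega,\theta_k)\big)+\tfrac{\alpha}{2}B^2+\alpha^2L\sigma^2 .
\]
Unrolling this recursion over the $N_a$ inner steps with $\theta_{t,1}=\theta_{t-1}$, $\theta_{t,N_a}=\theta_t$, and summing $\sum_{j=0}^{N_a-1}(1-\alpha\mu)^j\le 1/(\alpha\mu)$, the leading term becomes $\Upsilon=(1-\alpha\mu)^{N_a}$ times $h(\omega,\theta_t^*)-h(\omega,\theta_{t-1})$ and the accumulated error is $(B^2+2\alpha L\sigma^2)/(2\mu)=\Xi_{p}$; a union bound folds in the probability $1-p$ from Proposition~\ref{theorem:Critic}.

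The main obstacle is the second paragraph together with the single-time-scale coupling: the critic error $\epsilon_p$ must be propagated into the actor's gradient bias while carefully tracking the high-probability event, and the unbiasedness and variance of $f_{k,t}$ have to be established for \emph{Markovian} rollout samples rather than i.i.d.\ ones, which is where the spectral-gap assumption and Bernstein-type mixing arguments enter. A secondary difficulty is making the constant $\mu$ genuinely play the role of strong concavity for the nonconcave $h(\omega,\cdot)$, so that each inner step contracts the value gap and not merely the gradient norm.
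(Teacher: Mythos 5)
Your proposal matches the paper's own proof in all essentials: the paper likewise treats the inner Actor loop as biased SGD, proves $L$-smoothness of $h(\omega,\cdot)$ from Assumption~\ref{asu:pi}, a PL-type condition with the same $\mu$, bounds the bias by $\zeta_p=C_\psi\epsilon_p+2\kappa$ and the noise second moment by $\sigma^2=9\kappa^2$ exactly as you do, and then unrolls the one-step descent recursion $(1-\alpha\mu)$-contraction over $N_a$ steps with the geometric sum $\le 1/(\alpha\mu)$ to arrive at $\Upsilon$ and $\Xi_p$. The only cosmetic difference is that the paper's noise lemma uses crude boundedness of $Q$ and $\psi_\theta$ rather than any mixing argument, so your appeal to the spectral gap for the variance of $f_{k,t}$ is stronger than what is actually needed there.
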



It can be seen in Proposition \ref{theorem:Actor} that the Critic approximation error has direct impact on the Actor update through $\Xi_{{p}}$. Proposition \ref{theorem:Actor} reveals that due to the bias and noise induced by the Critic approximation error, running more gradient iterations (the first term on the RHS) do not necessarily guarantee the convergence to the optimal policy $\pi_{\theta^{*}_{t}}$.  The proof can be found in Appendix \ref{appendix:theorem2}.


\vspace{-0.1 in}
\section{The Impact of Approximation Errors on Warm-Start Actor-Critic}\label{sec:impact}

We next quantify the impact of the approximations errors on the sub-optimality gap of the Warm-Start A-C method with inaccurate Actor/Critic updates. 
We first cast the A-C method as Newton's Method with perturbation,  and then present both the finite-time upper bound and lower bound on the finite-time learning performance. 

{\bf Actor-Critic Method as Newton's Method with  Perturbation.}
As mentioned earlier, the Critic update follows Eqn. \eqref{eqn:Criticsample} with finite samples and finite step rollout with Bellman evaluation operator $T^{\pi}$ and the Actor update follows Eqn. \eqref{eqn:Actorsample}. Given the policy $\pi_t$ at time $t$, 
we denote the resulting policy of one A-C update as $\hat{\pi}_{t+1}$. Recall that we use $\widetilde{\pi}_{t+1}$ to denote the policy attained the $\max$ in $T({\boldsymbol{v}}^{\pi_t})$ as illustrated in Figure \ref{fig:1}. Furthermore, we define the following notations for ease of our discussion: (1) Denote $\mathcal{E}_{v,t} = \boldsymbol{v}^{\hat{\pi}_{t+1}} - \boldsymbol{v}^{\widetilde{\pi}_{t+1}}$ as the approximation error in the Actor update;
(2) Denote $\mathcal{E}_{r,t}=\boldsymbol{r}_{\widetilde{\pi}_{t+1}}-\boldsymbol{r}_{\hat{\pi}_{t+1}}$ as the error in the reward vector, which is induced by the approximation error in the Actor update step;
(3) Denote $\mathcal{E}_{P,t}=\boldsymbol{P}_{\widetilde{\pi}_{t+1}}-\boldsymbol{P}_{\hat{\pi}_{t+1}}$ as the error in the transition matrix $\boldsymbol{P}$;
(4) Denote $\mathcal{E}_{\hat{J},t}=\boldsymbol{J}_{\widetilde{\boldsymbol{v}}_{t}}^{-1}-\boldsymbol{J}_{\hat{\boldsymbol{v}}_{t}}^{-1}$ where $ {\boldsymbol{J}}_{\hat{\boldsymbol{v}}_{t}}  = \boldsymbol{I}-\gamma \boldsymbol{P}_{\hat{\pi}_{t+1}}$ and ${\boldsymbol{J}}_{\widetilde{\boldsymbol{v}}_{t}}  = \boldsymbol{I}-\gamma \boldsymbol{P}_{\widetilde{\pi}_{t+1}}$. 

Following the same line as in Section \ref{subsec:example}, we treat the A-C algorithm  as Newton's method with perturbation  $\mathcal{E}_{t}$, i.e., 
\begin{align}
\boldsymbol{v}^{\hat{\pi}_{t+1}} 
 := \boldsymbol{v}^{\hat{\pi}_{t}} -\hat{\mathcal{L}}(t),
\label{eqn:newtonAC}
\end{align}
where $\hat{\mathcal{L}}(t)=\boldsymbol{J}_{\hat{\boldsymbol{v}}_{t}}^{-1}(\boldsymbol{v}^{\hat{\pi}_{t}} -T(\boldsymbol{v}^{\hat{\pi}_{t}})) - \mathcal{E}_{t}$ is the stochastic estimator  of Newton's update $
{\mathcal{L}}(t)=\boldsymbol{J}_{\hat{\boldsymbol{v}}^{\pi_t}}^{-1}\left({\boldsymbol{v}}^{\hat{\pi}_t}-T\left({\boldsymbol{v}}^{\hat{\pi}_t}\right)\right) $, and
\begin{talign*}
    \mathcal{E}_{t}=&\mathcal{E}_{v,t} + \mathcal{E}_{\hat{J},t}(\boldsymbol{v}^{\hat{\pi}_{t+1}} -(\boldsymbol{r}_{\widetilde{\pi}_{t+1}}+\gamma \boldsymbol{P}_{\widetilde{\pi}_{t+1}} \boldsymbol{v}^{\hat{\pi}_{t+1}} ))\\
    &- \boldsymbol{J}_{\hat{\boldsymbol{v}}_{t}}^{-1}(\mathcal{E}_{r,t} + \gamma \mathcal{E}_{P,t}\boldsymbol{v}^{\hat{\pi}_t}), 
\end{talign*}
which can be further decomposed   into bias and Martingale difference noise as follows: 
\begin{align*}
    \mathcal{B}({t}) \triangleq &\mathbf{E}[\hat{\mathcal{L}}(t)] -{\mathcal{L}}(t)=\mathbf{E}[\mathcal{E}_{t}],  \\
    \mathcal{N}({t})\triangleq& \hat{\mathcal{L}}(t)-\mathbf{E}[\hat{\mathcal{L}}(t)]=\mathcal{E}_{t} - \mathbf{E}[\mathcal{E}_{t}].
    \vspace{-0.1 in}
\end{align*}
We have a few observations in order. It can be seen that the perturbation $\mathcal{E}_{t}$ results from both Actor approximation error (e.g., $\mathcal{E}_{r,t}$, $\mathcal{E}_{P,t}$) and Critic approximation error (e.g., $\mathcal{E}_{v,t}$). Meanwhile, the learnt $Q$ function   in the Critic update Eqn. \eqref{eqn:Criticsample} is biased in general due to finite rollout steps $m$ which further leads to the biased gradients in the Actor update Eqn. \eqref{eqn:Actorsample} \cite{kumar2023sample}. {More importantly, due to the error propagation effect (see Fig. \ref{fig:1}), the approximation errors from previous step may get amplified.} Clearly, the estimation bias plays an important role in affecting the learning performance, especially when deep neural networks are used as function approximations, which has been  extensively investigated using empirical studies \cite{fujimoto2018addressing,elfwing2018sigmoid,van2016deep}.

Next, we examine  the bias $\mathcal{B}(t)$ based on the approximation errors in the Actor/Critic updates. Combining the results in Proposition \ref{theorem:Critic}  and \ref{theorem:Actor} on the approximation error in the Critic/Actor updates, we define 
\begin{align*}
   \textstyle H_t \triangleq \sum_{i=0}^t\Upsilon^{i}\Xi_{{p}}+\Upsilon^{t+1}(h(\omega,\theta_t^{*}) - h(\omega,\theta_{0})).
   \vspace{-0.05 in}
\end{align*}
Then we have the following result on the bias $B(t)$. The detailed derivation is given in Appendix \ref{appendix:theorem3}. 
 
\begin{proposition}[Upper Bound on the Bias]
 Suppose Assumption \ref{asu:pi} holds. Let  $S_{\epsilon}( \cdot )$ be an open ball of radius $\epsilon$.
There exist positive constants $ L_{b}$, and $\epsilon$,
such that when  $\theta_{t+1} \in S_{\epsilon}(\theta_{t+1}^{*})$, the following holds for any $t>0$,
\begin{align*}
    \|  \mathcal{B}(t)\| 
    \leq  L_b H_t                                                     
\end{align*}
\label{proposition:warmstart}
 \vspace{-0.25 in}
\end{proposition}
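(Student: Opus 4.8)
The plan is to bound $\|\mathcal{B}(t)\| = \|\mathbf{E}[\mathcal{E}_t]\|$ by tracking how each of the three terms in the decomposition of $\mathcal{E}_t$ contributes, and then to recognize that all of these are ultimately controlled by the accumulated Actor/Critic errors captured in $H_t$. First I would expand
\[
\mathcal{E}_t = \mathcal{E}_{v,t} + \mathcal{E}_{\hat{J},t}\bigl(\boldsymbol{v}^{\hat{\pi}_{t+1}} - (\boldsymbol{r}_{\widetilde{\pi}_{t+1}} + \gamma \boldsymbol{P}_{\widetilde{\pi}_{t+1}} \boldsymbol{v}^{\hat{\pi}_{t+1}})\bigr) - \boldsymbol{J}_{\hat{\boldsymbol{v}}_t}^{-1}(\mathcal{E}_{r,t} + \gamma \mathcal{E}_{P,t} \boldsymbol{v}^{\hat{\pi}_t}),
\]
and take expectations. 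Using $\|\boldsymbol{J}_{\hat{\boldsymbol{v}}_t}^{-1}\| \le \frac{1}{1-\gamma}$ (the Neumann series bound, valid since $\gamma \boldsymbol{P}_{\hat{\pi}_{t+1}}$ has spectral radius $\gamma<1$) and the bound $\|\boldsymbol{v}^{\hat{\pi}}\|_\infty \le \frac{r_{\max}}{1-\gamma} = Q_{\max}$, each term reduces to a constant multiple of one of the primitive error quantities: $\|\mathbf{E}[\mathcal{E}_{v,t}]\|$, $\|\mathbf{E}[\mathcal{E}_{\hat{J},t}]\|$, $\|\mathbf{E}[\mathcal{E}_{r,t}]\|$, $\|\mathbf{E}[\mathcal{E}_{P,t}]\|$.

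Next I would show that all four of these are governed by the discrepancy between $\hat{\pi}_{t+1}$ and $\widetilde{\pi}_{t+1}$ measured in total variation — which is exactly the Actor approximation error of Proposition~\ref{theorem:Actor}. Concretely: $\mathcal{E}_{r,t} = \boldsymbol{r}_{\widetilde{\pi}_{t+1}} - \boldsymbol{r}_{\hat{\pi}_{t+1}}$ and $\mathcal{E}_{P,t} = \boldsymbol{P}_{\widetilde{\pi}_{t+1}} - \boldsymbol{P}_{\hat{\pi}_{t+1}}$ are both Lipschitz in the per-state total variation distance $\|\pi_{\theta_{t+1}}(\cdot|s) - \widetilde{\pi}_{t+1}(\cdot|s)\|_{\mathrm{TV}}$ (using the reward bound and that transition-matrix rows are probability vectors), and $\mathcal{E}_{\hat{J},t}$ is controlled by $\mathcal{E}_{P,t}$ via the resolvent identity $\boldsymbol{J}_{\widetilde{\boldsymbol{v}}_t}^{-1} - \boldsymbol{J}_{\hat{\boldsymbol{v}}_t}^{-1} = \gamma \boldsymbol{J}_{\widetilde{\boldsymbol{v}}_t}^{-1} \mathcal{E}_{P,t} \boldsymbol{J}_{\hat{\boldsymbol{v}}_t}^{-1}$, again bounded through $\frac{\gamma}{(1-\gamma)^2}$. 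Finally $\mathcal{E}_{v,t} = \boldsymbol{v}^{\hat{\pi}_{t+1}} - \boldsymbol{v}^{\widetilde{\pi}_{t+1}}$ is itself a value-difference that the performance-difference lemma bounds by (a constant times) the same TV gap. Under Assumption~\ref{asu:pi}(3), the per-state TV gap is at most $C_\pi \|\theta_{t+1} - \widetilde{\theta}_{t+1}\|$, and this parameter gap is in turn controlled by $h(\omega,\theta_{t+1}^*) - h(\omega,\theta_{t+1})$ — because the local strong-concavity-type condition encoded in $\mu$ (the ratio $g_{\min}/(h^*_{\max}-h_{\max})$), valid on the ball $S_\epsilon(\theta_{t+1}^*)$, converts a small suboptimality gap in $h$ into a small parameter distance to $\theta_{t+1}^*$ and hence a small TV gap to $\widetilde{\pi}_{t+1}$. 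This is where the hypothesis $\theta_{t+1} \in S_\epsilon(\theta_{t+1}^*)$ and the existence of the constants $L_b$, $\epsilon$ are used.

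The last step is to unroll the recursion from Proposition~\ref{theorem:Actor}. That proposition gives $\mathbf{E}[h(\omega,\theta_t^*) - h(\omega,\theta_t)] \le \Upsilon(h(\omega,\theta_t^*) - h(\omega,\theta_{t-1})) + \Xi_p$; iterating this inequality from step $0$ to step $t$ (and using monotonicity / the bound $h(\omega,\theta_t^*) - h(\omega,\theta_{t-1}) \le h(\omega,\theta_t^*) - h(\omega,\theta_0)$ up to propagation of earlier errors) yields precisely $H_t = \sum_{i=0}^t \Upsilon^i \Xi_p + \Upsilon^{t+1}(h(\omega,\theta_t^*) - h(\omega,\theta_0))$ as an upper bound on the suboptimality gap in $h$ at step $t$. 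Combining with the per-step reductions above gives $\|\mathcal{B}(t)\| \le L_b H_t$ after absorbing all the $(1-\gamma)$, $C_\pi$, $C_\psi$, $\mu$ and resolvent constants into a single $L_b$.

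I expect the main obstacle to be the second step — rigorously converting the small-$h$-gap hypothesis into a uniform control on all four primitive errors, and in particular handling $\mathcal{E}_{v,t}$: the chain ``suboptimality in $h$ $\Rightarrow$ parameter proximity to $\theta_{t+1}^*$ $\Rightarrow$ TV proximity to $\widetilde{\pi}_{t+1}$ $\Rightarrow$ value proximity'' requires the local geometry of $h$ near $\theta_{t+1}^*$ to be nondegenerate, which is exactly what the constant $\mu$ and the ball $S_\epsilon(\theta_{t+1}^*)$ are there to guarantee, and the error-propagation bookkeeping (the previous step's $\hat{\pi}_t$ feeding into $\theta_{t+1}^*$) must be tracked carefully so that the geometric factor $\Upsilon^{t+1}$ and the summed bias terms land exactly in the form of $H_t$. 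The resolvent and performance-difference estimates themselves are routine once the TV-gap control is in hand.
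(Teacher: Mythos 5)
Your proposal matches the paper's proof in all essentials: the same decomposition of $\mathcal{E}_t$, the same resolvent identity and Lipschitz-in-TV bounds reducing every primitive error to $\|\widetilde{\pi}_{t+1}-\hat{\pi}_{t+1}\|_{\mathrm{TV}}$, the same local-geometry lemma (Taylor expansion plus the PL constant $\mu$ on the ball $S_\epsilon(\theta_{t+1}^*)$, then Assumption \ref{asu:pi}(3)) converting the $h$-suboptimality gap into a TV gap, and the same unrolling of Proposition \ref{theorem:Actor}'s recursion to produce $H_t$. The only cosmetic difference is that you bound $\|\boldsymbol{J}^{-1}\|$ by $1/(1-\gamma)$ and invoke the performance-difference lemma for $\mathcal{E}_{v,t}$, where the paper uses the constant $M$ and a generic smoothness constant $L_v$; both are equivalent for the purpose of absorbing constants into $L_b$.
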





\subsection{Upper Bound on Sub-optimality Gap}
In order to address the question ``\textit{Under what condition online learning can be significantly accelerated by a warm-start policy}?'', we derive the upper bound on the sub-optimality gap. 

{\bf Case 1: Unbiased Case.}  We first consider the finite-time upper bound in the unbiased case, i.e., $B(t)=0,~\forall t$. In this case, we introduce the following standard assumption on the Jacobian $\boldsymbol{J_v}$.

\begin{assumption}[Local Lipschitz Continuity]
    For some $0<q<1$ there exist constant $0<L_J<+\infty$ and constant $0<M<+\infty$  such that starting from the warm-start policy $\pi_0$, the policies $\{\hat{\pi}_t,~t=1,2,\cdots\}$ generated by the A-C algorithm satisfy
    \begin{align*}
        \|{\boldsymbol{J}}_{{\boldsymbol{v}}^{*}}-{\boldsymbol{J}}_{{\boldsymbol{v}}^{\hat{\pi}_t}}\| \leq L_J \|{{\boldsymbol{v}}^{\hat{\pi}_t}} -{{\boldsymbol{v}}^{*}} \|^q,
    \end{align*}
    and $\|{\boldsymbol{J}}^{-1}_{{\boldsymbol{v}}^{\hat{\pi}_t}}\|\leq M.$
    \label{asu:J}
\end{assumption}

Intuitively, \cref{asu:J} means that the difference of Jacobian $ \|{\boldsymbol{J}}_{{\boldsymbol{v}}^{\pi_t}}-{\boldsymbol{J}}_{{\boldsymbol{v}}^{*}}\| $ is small whenever the underlying value functions that induces the policies are close. We note that the conditions of this type are commonly used in the convergence analysis of policy iteration algorithms for exact dynamic programming \cite{puterman1979convergence,grand2021convex}. In particular, we remark that the Jacobian function (of $\pi$ or $\boldsymbol{v}^{\pi}$)  is non-linear and this assumption implies the learned policy initialized with it is essential for the warm-start policy to be reasonably ``close'' to the optimal policy. Next, we present the finite-time upper bound in the unbiased case. 

\begin{proposition}[Unbiased Case]
    In the unbiased case, i.e., $\mathcal{B}_t = 0,~\forall~t \geq 0$, we have
    \begin{talign}
        \|  \mathbf{E}[{{\boldsymbol{v}}^{*}-\boldsymbol{v}}^{\hat{\pi}_{t+1}}  ] \|
        \leq & L\| \mathbf{E} [{\boldsymbol{v}}^{*}-\boldsymbol{v}^{\hat{\pi}_{t}}]\|^{1+q}
    \label{eqn:unbias}
    \end{talign}
    where $L:=ML_J$ with $M$ and $L_J$ defined in \cref{asu:J}. By applying Eqn.  \eqref{eqn:unbias} recursively, we obtain,
    \begin{align*}
        \|  \mathbf{E}[{{\boldsymbol{v}}^{*}-\boldsymbol{v}}^{\hat{\pi}_{t+1}} ] \| \leq L^{\frac{(1+q)^{t+1}-1}{q}}\|  {{\boldsymbol{v}}^{*}-\boldsymbol{v}}^{{\pi}_{0}} \|^{(1+q)^{(t+1)}}
    \end{align*}
   \label{prop:unbias} 
\end{proposition}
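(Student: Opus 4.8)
The plan is to read a clean, deterministic Newton recursion off Eqn.~\eqref{eqn:newtonAC} and then close it using the quadratic-type estimate that the (nonsmooth) Bellman operator $T$ enjoys near its fixed point. Write $g(\boldsymbol{v}) \triangleq \boldsymbol{v} - \boldsymbol{J}_{\boldsymbol{v}}^{-1}(\boldsymbol{v}-T(\boldsymbol{v})) = \boldsymbol{v} - \boldsymbol{J}_{\boldsymbol{v}}^{-1}F(\boldsymbol{v})$ for the exact Newton map. Since in the unbiased case $\mathcal{B}(t)=\mathbf{E}[\hat{\mathcal{L}}(t)\mid\mathcal{F}_t]-\mathcal{L}(t)=0$, where $\mathcal{F}_t$ is the history up to step $t$, taking the conditional expectation in Eqn.~\eqref{eqn:newtonAC} gives $\mathbf{E}[\boldsymbol{v}^{\hat{\pi}_{t+1}}\mid\mathcal{F}_t]=\boldsymbol{v}^{\hat{\pi}_t}-\mathcal{L}(t)=\boldsymbol{v}^{\hat{\pi}_t}-\boldsymbol{J}_{\boldsymbol{v}^{\hat{\pi}_t}}^{-1}F(\boldsymbol{v}^{\hat{\pi}_t})=g(\boldsymbol{v}^{\hat{\pi}_t})$. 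Because $\boldsymbol{v}^{*}=T(\boldsymbol{v}^{*})$ forces $F(\boldsymbol{v}^{*})=0$ and hence $g(\boldsymbol{v}^{*})=\boldsymbol{v}^{*}$, the one-step error is $\boldsymbol{v}^{*}-\mathbf{E}[\boldsymbol{v}^{\hat{\pi}_{t+1}}\mid\mathcal{F}_t]=g(\boldsymbol{v}^{*})-g(\boldsymbol{v}^{\hat{\pi}_t})$, so the whole proposition reduces to bounding $\|g(\boldsymbol{v}^{*})-g(\boldsymbol{v})\|$ for $\boldsymbol{v}=\boldsymbol{v}^{\hat{\pi}_t}$.

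The core step is a subgradient ``sandwich'' on $F$ that comes from convexity and the order-preserving property of $T$. Since $T(\boldsymbol{v})=\max_{\pi}(\boldsymbol{r}_{\pi}+\gamma\boldsymbol{P}_{\pi}\boldsymbol{v})$ is a componentwise maximum of affine maps, $\gamma\boldsymbol{P}_{\pi(\boldsymbol{v})}$ is a subgradient of $T$ at $\boldsymbol{v}$, so $T(\boldsymbol{v}')\ge T(\boldsymbol{v})+\gamma\boldsymbol{P}_{\pi(\boldsymbol{v})}(\boldsymbol{v}'-\boldsymbol{v})$ componentwise for all $\boldsymbol{v},\boldsymbol{v}'$. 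Taking $\boldsymbol{v}'=\boldsymbol{v}^{*}$ and rearranging yields $F(\boldsymbol{v})\ge \boldsymbol{J}_{\boldsymbol{v}}(\boldsymbol{v}-\boldsymbol{v}^{*})$; taking the subgradient at $\boldsymbol{v}^{*}$ (maximizer $\pi(\boldsymbol{v}^{*})=\pi^{*}$) with $\boldsymbol{v}'=\boldsymbol{v}$ yields $F(\boldsymbol{v})\le \boldsymbol{J}_{\boldsymbol{v}^{*}}(\boldsymbol{v}-\boldsymbol{v}^{*})$. Subtracting $\boldsymbol{J}_{\boldsymbol{v}}(\boldsymbol{v}-\boldsymbol{v}^{*})$ gives $0\le F(\boldsymbol{v})-\boldsymbol{J}_{\boldsymbol{v}}(\boldsymbol{v}-\boldsymbol{v}^{*})\le (\boldsymbol{J}_{\boldsymbol{v}^{*}}-\boldsymbol{J}_{\boldsymbol{v}})(\boldsymbol{v}-\boldsymbol{v}^{*})$. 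Now $g(\boldsymbol{v}^{*})-g(\boldsymbol{v})=\boldsymbol{J}_{\boldsymbol{v}}^{-1}\!\left(F(\boldsymbol{v})-\boldsymbol{J}_{\boldsymbol{v}}(\boldsymbol{v}-\boldsymbol{v}^{*})\right)$, and since $\boldsymbol{J}_{\boldsymbol{v}}^{-1}=\sum_{i\ge 0}(\gamma\boldsymbol{P}_{\pi(\boldsymbol{v})})^{i}$ has nonnegative entries, multiplying the sandwich by $\boldsymbol{J}_{\boldsymbol{v}}^{-1}$ preserves both inequalities. Using that $0\le a\le b$ componentwise implies $\|a\|_2\le\|b\|_2$, then operator-norm submultiplicativity, yields $\|g(\boldsymbol{v}^{*})-g(\boldsymbol{v})\|\le\|\boldsymbol{J}_{\boldsymbol{v}}^{-1}\|\,\|\boldsymbol{J}_{\boldsymbol{v}^{*}}-\boldsymbol{J}_{\boldsymbol{v}}\|\,\|\boldsymbol{v}-\boldsymbol{v}^{*}\|$. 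Applying \cref{asu:J} with $\boldsymbol{v}=\boldsymbol{v}^{\hat{\pi}_t}$ bounds the first factor by $M$ and the second by $L_J\|\boldsymbol{v}^{*}-\boldsymbol{v}^{\hat{\pi}_t}\|^{q}$, hence $\|g(\boldsymbol{v}^{*})-g(\boldsymbol{v}^{\hat{\pi}_t})\|\le M L_J\|\boldsymbol{v}^{*}-\boldsymbol{v}^{\hat{\pi}_t}\|^{1+q}$, which is Eqn.~\eqref{eqn:unbias} with $L=ML_J$; passing from the conditional-expectation bound to the stated full-expectation form uses $\|\mathbf{E}[\cdot]\|\le\mathbf{E}\|\cdot\|$ (the unbiased case being tracked at the level of the expected iterate).

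Finally I would unroll the recursion. Setting $a_t\triangleq\|\mathbf{E}[\boldsymbol{v}^{*}-\boldsymbol{v}^{\hat{\pi}_t}]\|$, Eqn.~\eqref{eqn:unbias} reads $a_{t+1}\le L\,a_t^{1+q}$, so by induction $a_{t+1}\le L^{\,1+(1+q)+\cdots+(1+q)^{t}}\,a_0^{(1+q)^{t+1}}$; the finite geometric sum $\sum_{i=0}^{t}(1+q)^{i}=\big((1+q)^{t+1}-1\big)/q$ together with $a_0=\|\boldsymbol{v}^{*}-\boldsymbol{v}^{\pi_0}\|$ (the deterministic warm start) gives the displayed bound. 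The main obstacle is the second paragraph: obtaining the $(1+q)$-power one-step estimate for a Bellman operator that is only convex and order-preserving, not differentiable. The right device is the subgradient sandwich on $F$ combined with the entrywise nonnegativity of $\boldsymbol{J}_{\boldsymbol{v}}^{-1}$ — this is precisely what converts the first-order, $q$-Hölder-type control of \cref{asu:J} into a superlinear contraction; a secondary, cosmetic point is that Jensen's inequality runs the ``wrong'' way for $x\mapsto x^{1+q}$, which is why the unbiased analysis is phrased along the expected iterate.
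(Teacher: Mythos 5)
Your proof is correct and follows essentially the same route as the paper's (the argument of Appendix~\ref{appendix:theorem5} specialized to $\mathcal{B}(t)=0$): identify the unbiased update with the exact Newton step, control $F(\boldsymbol{v})-\boldsymbol{J}_{\boldsymbol{v}}(\boldsymbol{v}-\boldsymbol{v}^{*})$ through the subgradient inequality of $T$ at $\boldsymbol{v}^{*}$, apply Assumption~\ref{asu:J} to obtain the exponent $1+q$, and unroll the resulting recursion with the geometric sum $\sum_{i=0}^{t}(1+q)^{i}=((1+q)^{t+1}-1)/q$. Your write-up is in fact more careful on two points the paper glosses over: the matching lower half of the sandwich $F(\boldsymbol{v})\ge\boldsymbol{J}_{\boldsymbol{v}}(\boldsymbol{v}-\boldsymbol{v}^{*})$ combined with the entrywise nonnegativity of $\boldsymbol{J}_{\boldsymbol{v}}^{-1}$ is what legitimately converts the componentwise inequality into a norm bound (the paper simply takes norms of a one-sided inequality), and you correctly flag that Jensen runs the wrong way for the convex map $x\mapsto x^{1+q}$, whereas the paper's step $\mathbf{E}[\|\cdot\|^{q}]\le\|\mathbf{E}[\cdot]\|^{q}$ is asserted in the invalid direction.
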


In \cref{prop:unbias}, as the Warm-start policy is close to the optimal policy, we establish the superlinear convergence of $   \mathbf{E}[{{\boldsymbol{v}}^{*}-\boldsymbol{v}}^{\hat{\pi}_{t+1}}  ] $ in the presence of approximation error $\mathcal{E}(t)$ from both Actor update and Critic update. This observation corroborates the most recent empirically finding \cite{bertsekas2022lessons}\cite{silver2017mastering}, where the online RL can further improve the warm-start policy by only few adaptation steps.

{\bf Case 2: Bounded Bias.} Next, We present the finite-time upper bound in the general case when the bias is upper bounded (as given in \cref{proposition:warmstart}). 
  \begin{corollary}
  If Assumption \ref{asu:J} holds in the biased case,  we have that for any $t>0$, 
    \begin{talign}
        \|  \mathbf{E}[{{\boldsymbol{v}}^{*}-\boldsymbol{v}}^{\hat{\pi}_{t+1}} ] \|
        \leq & L\| \mathbf{E} [{\boldsymbol{v}}^{*}-\boldsymbol{v}^{\hat{\pi}_{t}}]\|^{1+q}+L_bH_t.
    \label{eqn:bias}
     \end{talign}
    By applying Eqn.  \eqref{eqn:bias} recursively, we obtain,
    \begin{align*}
       \| \mathbf{E}[ {{\boldsymbol{v}}^{*}-\boldsymbol{v}}^{\hat{\pi}_{t+1}}]\| &\leq  \| {{\boldsymbol{v}}^{*}-\boldsymbol{v}}^{\pi_{0}} \|^{(1+q)^{1+t}} \\
        & \cdot ( L \cdots ((L+u_1)^{1+q}+u_2)^{1+q}\cdots +u_t),
    \end{align*}
    where $u_t := \frac{L_bH_t}{\|\boldsymbol{v}^{*}-\boldsymbol{v}^{\pi_0}\|^{(1+q)^{(1+t)}}}$ and $L_bH_t$ is the upper bound of the bias as in \cref{proposition:warmstart}. 
    \label{theorem:upper} 
    \vspace{-0 in}
   \end{corollary}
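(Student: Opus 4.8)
The plan is to establish the one-step recursion \eqref{eqn:bias} first and then obtain the nested closed-form bound by unrolling it. For the one-step recursion I would start from the Newton-with-perturbation representation \eqref{eqn:newtonAC}, i.e. $\boldsymbol{v}^{\hat{\pi}_{t+1}} = \boldsymbol{v}^{\hat{\pi}_t} - \hat{\mathcal{L}}(t)$, and decompose $\hat{\mathcal{L}}(t) = \mathcal{L}(t) + \mathcal{B}(t) + \mathcal{N}(t)$ exactly as in the definitions preceding \cref{proposition:warmstart}, where $\mathcal{L}(t) = \boldsymbol{J}_{\hat{\boldsymbol{v}}^{\pi_t}}^{-1}(\boldsymbol{v}^{\hat{\pi}_t} - T(\boldsymbol{v}^{\hat{\pi}_t}))$ is the exact Newton update, $\mathcal{B}(t)$ is the bias and $\mathcal{N}(t)$ the martingale-difference noise. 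Subtracting from $\boldsymbol{v}^{*}$ and taking expectations, the noise term vanishes, leaving $\mathbf{E}[\boldsymbol{v}^{*} - \boldsymbol{v}^{\hat{\pi}_{t+1}}] = \mathbf{E}[\boldsymbol{v}^{*} - (\boldsymbol{v}^{\hat{\pi}_t} - \mathcal{L}(t))] + \mathbf{E}[\mathcal{B}(t)]$. The first summand is precisely the quantity controlled in the unbiased analysis: using $F(\boldsymbol{v}^{*})=0$, $\|\boldsymbol{J}^{-1}_{\boldsymbol{v}^{\hat{\pi}_t}}\|\le M$, and the local Lipschitz continuity of the Jacobian in \cref{asu:J}, it is bounded as in \cref{prop:unbias} (Eqn. \eqref{eqn:unbias}) by $L\|\mathbf{E}[\boldsymbol{v}^{*}-\boldsymbol{v}^{\hat{\pi}_t}]\|^{1+q}$ with $L = ML_J$. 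The second summand is controlled by \cref{proposition:warmstart}: since $\theta_{t+1}\in S_\epsilon(\theta_{t+1}^{*})$ along the generated trajectory, $\|\mathcal{B}(t)\|\le L_bH_t$. The triangle inequality then yields \eqref{eqn:bias}; in other words, \eqref{eqn:bias} is just the unbiased bound \eqref{eqn:unbias} plus the bias term $L_bH_t$.

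Next I would unroll the recursion. Set $a_t := \|\mathbf{E}[\boldsymbol{v}^{*}-\boldsymbol{v}^{\hat{\pi}_t}]\|$, so $a_0 = \|\boldsymbol{v}^{*}-\boldsymbol{v}^{\pi_0}\|$ (the warm start $\pi_0$ is deterministic) and $a_{t+1}\le L a_t^{1+q} + L_bH_t$ for all $t$. Normalizing, $\beta_t := a_t/\|\boldsymbol{v}^{*}-\boldsymbol{v}^{\pi_0}\|^{(1+q)^t}$, converts this into $\beta_{t+1}\le L\beta_t^{1+q} + u_t$ with $\beta_0 = 1$ and $u_t = L_bH_t/\|\boldsymbol{v}^{*}-\boldsymbol{v}^{\pi_0}\|^{(1+q)^{1+t}}$, matching the definition of $u_t$ in the statement. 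A direct induction on $t$ then gives $\beta_{t+1}\le L(\cdots((L+u_1)^{1+q}+u_2)^{1+q}\cdots+u_t)$, and multiplying back by $\|\boldsymbol{v}^{*}-\boldsymbol{v}^{\pi_0}\|^{(1+q)^{1+t}}$ recovers the claimed bound on $\|\mathbf{E}[\boldsymbol{v}^{*}-\boldsymbol{v}^{\hat{\pi}_{t+1}}]\|$.

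The hard part will not be the algebra of the unrolling or the triangle-inequality bookkeeping, which are routine, but three more delicate points. First, one must justify that the exact-Newton residual $\boldsymbol{v}^{*} - (\boldsymbol{v}^{\hat{\pi}_t} - \mathcal{L}(t))$ really is quadratically (more precisely, order-$(1+q)$) small despite the non-differentiability of $T$ caused by the $\max$ operator; this is exactly why \cref{asu:J} is phrased as a Lipschitz-continuity condition on the Jacobian rather than via a second derivative, and it is the content already supplied in the proof of \cref{prop:unbias}. Second, one must verify that $\mathcal{N}(t)$ is a genuine martingale-difference term with respect to the filtration generated by the rollout samples of the first $t$ iterations, so that it indeed vanishes in expectation; this hinges on the sampling structure of the Actor/Critic updates \eqref{eqn:Criticsample}--\eqref{eqn:Actorsample}. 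Third, one must ensure the hypothesis $\theta_{t+1}\in S_\epsilon(\theta_{t+1}^{*})$ of \cref{proposition:warmstart} persists for every $t$ along the trajectory, which is precisely where the "sufficiently good warm-start" requirement enters and which must be propagated through the recursion together with the error bound.
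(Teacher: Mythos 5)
Your proposal is correct and follows essentially the same route as the paper's proof in Appendix J: subtract the perturbed Newton update from $\boldsymbol{v}^{*}$, use the fact that $\boldsymbol{v}^{*}$ is a fixed point together with \cref{asu:J} to bound the exact-Newton residual by $L\|\mathbf{E}[\boldsymbol{v}^{*}-\boldsymbol{v}^{\hat{\pi}_t}]\|^{1+q}$, absorb the perturbation into $\mathcal{B}(t)\le L_bH_t$ after taking expectations, and unroll the scalar recursion $a_{t+1}\le La_t^{1+q}+b_t$. Your normalization $\beta_t=a_t/\|\boldsymbol{v}^{*}-\boldsymbol{v}^{\pi_0}\|^{(1+q)^t}$ is just a cleaner packaging of the paper's step-by-step substitution $b_t=u_ta_0^{(1+q)^{t+1}}$, and the three delicate points you flag are exactly the ones the paper's argument relies on.
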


{\bf Implication on Reducing the Performance Gap.} The upper bound in Corollary \ref{theorem:upper} sheds light on the impact of warm-start policy $\pi_0$ (the first term) and the bias $\{\mathcal{B}(t)\}$ ($u_t$) (the second term), thereby providing guidance on how to achieve desired finite-time learning performance. When the bias $\mathcal{B}(t) \neq \boldsymbol{0}$ ($u_t \neq 0$),  the upper bound hinges heavily on the biases in the approximation errors, even when the warm-start policy $\pi_0$ is close to the optimal policy (see the second term in Eqn. \eqref{eqn:bias}). In this case, recall the result on the upper bound of the bias $\mathcal{B}(t)$ in proposition  \ref{proposition:warmstart}, where we establish the connection between the bias and the approximation error. As expected, in order to reduce the performance gap, it is essential to decrease the bias in the approximation error, which can be achieved by increasing gradient steps, rollout length and sample sizes. 

{\bf ``Wash-out'' Phenomenon.} In \cref{theorem:upper}, the product structure between the warm-start term and bias term also implies that the imperfections of the Warm-start policy can be ``washed out'' by online learning when the bias is close to zero. For instance, when the value function $\boldsymbol{v}^{\pi_0}$ induced by the Warm-start policy $\pi_0$ is bounded away from  $\boldsymbol{v}^{*}$, e.g., $\epsilon<\|\boldsymbol{v}^{\pi_0}-\boldsymbol{v}^{*}\|< L^{-q}$ and the bias is sufficiently small, e.g., $u_t \leq \epsilon^{-q}-L$, then we have $\|\mathbf{E}[{\boldsymbol{v}^{\pi_1}-\boldsymbol{v}^{*}}]\|\leq \|\boldsymbol{v}^{\pi_0}-\boldsymbol{v}^{*}\|$. We note that this result corroborates with the observation in the very recent literature \cite{bertsekas2022lessons} and this phenomenon has not been formalized by previous works on error propagation \cite{munos2003error}\cite{lazaric2016analysis}. Furthermore, we clarify that the ``Wash-out'' phenomenon in \Cref{theorem:upper} would not hold in the case when \Cref{asu:J} is not satisfied, which may likely yield a policy far away from the optimal during the online learning. The proof of \cref{theorem:upper} is relegated to \cref{appendix:theorem5}.

{ { \textit{Remark}.}  In the case when  the bias is pronounced,
\cref{asu:J} can be stringent. Nevertheless, it is of more interest to find   lower bounds on the sub-optimality gap, which we turn our attention to next. }


  
\subsection{Lower Bound on Sub-optimality Gap}\label{subsec:bounds}

{{Aiming to understand ``\textit{whether online learning can be accelerated by a warm-start policy}'',}} we derive a lower bound to quantify the impact of the bias and the error propagation. {Let  $({\pi}_0, \hat{\pi}_1,\cdots,\hat{\pi}_{t})$ be the sequence of policies generated by running $t$-step A-C algorithm in Eqn. \eqref{eqn:Criticsample} and Eqn. \eqref{eqn:Actorsample}. Fro convenience, let filtration $\boldsymbol{\mathcal{F}}_t$ be the $\sigma$-algebra generated by  $({\pi}_0, \hat{\pi}_1,\cdots,\hat{\pi}_{t})$. We obtain the lower bound by unrolling the recursion of the Newton update (with perturbation) Eqn. \eqref{eqn:newtonAC}.}

\begin{theorem} Conditioned on the filtration $\boldsymbol{\mathcal{F}}_t=\sigma({\pi}_0, \hat{\pi}_1,\cdots,\hat{\pi}_{t})$, 
        the lower bound of $ \| \mathbf{E}[  {\boldsymbol{v}}^{*}-\boldsymbol{v}^{\hat{\pi}_{t+1}} | \boldsymbol{\mathcal{F}}_t] \|$ satisfies that
        \begin{talign*}
           \| \mathbf{E}\left[  {\boldsymbol{v}}^{*}-\boldsymbol{v}^{\hat{\pi}_{t+1}} \vert  \boldsymbol{\mathcal{F}}_t \right]\| &
             \geq   \| \gamma^{t+1}\bar{\boldsymbol{P}}_{t+1}({\boldsymbol{v}}^{*}-\boldsymbol{v}^{{\pi}_{0}}) \\
           +  \sum_{i=1}^{t} & \gamma^i \bar{\boldsymbol{P}}_{i} \mathcal{B}({t-i}) + \mathcal{B}({t})\|, \numberthis\label{eqn:lowerbound}
        \end{talign*}
        where $\bar{\boldsymbol{P}}_{t+1}=\mathbf{E}\left[ \left(\prod_{i=0}^{t} \boldsymbol{P}_{{\pi}_{t+1-i}} \right)\right]$.
    \label{theorem:lower}
\end{theorem}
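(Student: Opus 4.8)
The plan is to unroll the perturbed Newton recursion in Eqn.~\eqref{eqn:newtonAC} and show that the sub-optimality gap $\boldsymbol{v}^{*}-\boldsymbol{v}^{\hat{\pi}_{t+1}}$ decomposes into a ``warm-start propagation'' term and an accumulated-bias term, and then reach the stated lower bound by keeping only the non-expansive, order-theoretic part of the recursion. First I would establish a one-step identity: from $\boldsymbol{v}^{\hat{\pi}_{t+1}} = \boldsymbol{v}^{\hat{\pi}_t} - \hat{\mathcal{L}}(t)$ together with the definitions $\hat{\mathcal{L}}(t)=\boldsymbol{J}_{\hat{\boldsymbol{v}}_t}^{-1}(\boldsymbol{v}^{\hat{\pi}_t}-T(\boldsymbol{v}^{\hat{\pi}_t})) - \mathcal{E}_t$ and $\boldsymbol{J}_{\hat{\boldsymbol{v}}_t}=\boldsymbol{I}-\gamma\boldsymbol{P}_{\hat{\pi}_{t+1}}$, I would derive that
\begin{align*}
\boldsymbol{v}^{*}-\boldsymbol{v}^{\hat{\pi}_{t+1}} = \gamma\boldsymbol{P}_{\hat{\pi}_{t+1}}\bigl(\boldsymbol{v}^{*}-\boldsymbol{v}^{\hat{\pi}_t}\bigr) + \text{(Newton-residual term)} + \mathcal{E}_t,
\end{align*}
using the fixed-point relation $\boldsymbol{v}^{*}=T(\boldsymbol{v}^{*})$ and $T(\boldsymbol{v}^{\hat{\pi}_t})\ge T^{\hat{\pi}_{t+1}}(\boldsymbol{v}^{\hat{\pi}_t})$ so that the genuinely Newton (quadratic-convergence) piece has a definite sign and can be dropped when taking norms of the expectation — this is precisely what turns an equality into the claimed inequality $\ge$. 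The key structural fact is that $\boldsymbol{P}_{\hat{\pi}_{t+1}}$ is a stochastic matrix, hence non-expansive in the relevant norm, and crucially that its entries are nonnegative, so the sign of the dropped term is preserved under multiplication by $\prod \boldsymbol{P}_{\pi}$.

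Next I would iterate this one-step relation from $t$ down to $0$. Taking conditional expectations given $\boldsymbol{\mathcal{F}}_t$ and recalling $\mathcal{B}(s)=\mathbf{E}[\mathcal{E}_s]$ while the Martingale-difference part $\mathcal{N}(s)$ has zero conditional mean, the telescoping produces
\begin{align*}
\mathbf{E}[\boldsymbol{v}^{*}-\boldsymbol{v}^{\hat{\pi}_{t+1}}\mid\boldsymbol{\mathcal{F}}_t]
= \gamma^{t+1}\bar{\boldsymbol{P}}_{t+1}(\boldsymbol{v}^{*}-\boldsymbol{v}^{\pi_0}) + \sum_{i=1}^{t}\gamma^{i}\bar{\boldsymbol{P}}_i\,\mathcal{B}(t-i) + \mathcal{B}(t) + (\text{dropped nonnegative terms}),
\end{align*}
where $\bar{\boldsymbol{P}}_i=\mathbf{E}[\prod_{j=0}^{i-1}\boldsymbol{P}_{\pi_{t+1-j}}]$ matches the definition in the statement. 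I would then argue the dropped Newton-residual terms, after being propagated through the nonnegative matrices $\prod\boldsymbol{P}_\pi$ and averaged, contribute with a consistent sign, so that by the triangle-type inequality for the $\|\cdot\|$ norm — applied componentwise or via monotonicity of the norm on the ordered cone — the full conditional-expectation vector has norm at least that of the explicit two-term expression on the right-hand side. Care is needed here: one should use that $\|\boldsymbol{x}+\boldsymbol{y}\|\ge\|\boldsymbol{x}\|$ when $\boldsymbol{y}$ and $\boldsymbol{x}$ have a compatible coordinatewise sign structure (or, more simply, keep the argument at the level of the vector identity and only take $\|\cdot\|$ at the very end, bounding below by dropping the residual).

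The main obstacle I anticipate is justifying the sign/monotonicity step rigorously — i.e.\ showing that the ``Newton residual'' term (which encodes the super-linear contraction toward $\boldsymbol{v}^{*}$ in the exact case) can be discarded in a way that genuinely yields a \emph{lower} bound rather than just a heuristic one. This requires invoking the order-preserving property of the Bellman operator and of the stochastic matrices $\boldsymbol{P}_{\pi}$ (all stated as ``well known'' in the Background), and being careful that the perturbation term $\mathcal{E}_t$ is kept intact (it appears as $\mathcal{B}(t)$ after conditioning) while only the deterministic improvement term is dropped. A secondary technical point is bookkeeping the index shift in $\bar{\boldsymbol{P}}_i$ versus $\bar{\boldsymbol{P}}_{t+1}$ so that the product $\prod_{j=0}^{i-1}\boldsymbol{P}_{\pi_{t+1-j}}$ lines up exactly with the claimed $\bar{\boldsymbol{P}}_{t+1}=\mathbf{E}[\prod_{i=0}^{t}\boldsymbol{P}_{\pi_{t+1-i}}]$; this is routine but easy to get off-by-one. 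The full details are deferred to the appendix.
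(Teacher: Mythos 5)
Your proposal follows essentially the same route as the paper's proof in Appendix~\ref{appendix:theorem4}: unroll the perturbed Newton recursion one step at a time, use the order-preserving property of $T$ and the nonnegativity of $\boldsymbol{J}_{\hat{\boldsymbol{v}}_t}^{-1}$ and the stochastic matrices to drop the sign-definite Newton residual and obtain $\boldsymbol{v}^{*}-\boldsymbol{v}^{\hat{\pi}_{t+1}}\geq \gamma\boldsymbol{P}_{\cdot}(\boldsymbol{v}^{*}-\boldsymbol{v}^{\hat{\pi}_t})+\mathcal{B}(t)+\mathcal{N}(t)$, kill the martingale part by conditioning, iterate, and take norms at the end. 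Your explicit caveat about when a componentwise inequality can be converted into a norm inequality is a point the paper itself passes over silently, so the proposal is, if anything, slightly more careful at that final step.
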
    

{\bf Error Propagation and Accumulation.} It can be seen form Theorem \ref{theorem:lower} that the bias terms $\{\mathcal{B}(t)\}$ add up  over time, and the  propagation effect of the bias terms is encapsulated by the last two terms on the right side of Eqn. \eqref{eqn:lowerbound}. Clearly,  the first term on the right side, corresponding to the impact of the warm-start policy $\pi_0$,   diminishes  with A-C updates. To get a more concrete sense of Theorem \ref{theorem:lower}, we consider the following special settings. (1) When the bias is always positive, i.e., $\mathcal{B}({t}) >  \boldsymbol{0}$ for all $t \geq 0$, the lower bound in Theorem \ref{theorem:lower} is always positive, i.e., $ \|\mathbf{E}\left[  {\boldsymbol{v}}^{*}-\boldsymbol{v}^{\hat{\pi}_{t+1}} \right]\|\geq \|\mathcal{B}({t})\| > 0$. In this case, the sub-optimal gap remains bounded away from zero. Similar conclusion can be made when the bias is always negative. (2) When the bias term can be either positive or negative, the lower bound is shown as Eqn. \eqref{eqn:lowerbound}. In this case, the learning performance of the A-C algorithm largely depends on the behavior of the Bias term. It can be seen from Theorem  \ref{theorem:lower} that even when the warm-start policy is near-optimal, it is still challenging to guarantee that online fine-tuning can improve the policy if the approximation error is not handled correctly. We note that this has also been observed empirically \cite{nair2020accelerating,lee2022offline}. The proof of \cref{theorem:lower} is provided in \cref{appendix:theorem4}.

{\textit{Remark}.  The primary goal of this work is to make a first attempt to quantify the learning performance of Warm-start RL by studying its convergence behavior. It can be seen from  \cref{theorem:upper} and \cref{theorem:lower} that the bounds are in terms of the biases $\{B(t)\}$, and the structure of $\{B(t)\}$ remains open and is highly nontrivial.  Hence, we submit that the convergence rate and the sampling complexity are of great interest but it is beyond the scope of this work. 
}

{\textit{Remark}.  We clarify the connection between our work and previous works on the ``coverage'' requirements (e.g., Assumption A \cite{xie2021policy}). The concentrability condition \cite{xie2021policy} characterizes the distance between the visitation distributions of the warm-start policy  and some optimal policy  for \textit{every} state-action pair. Hence, this ``coverage'' assumption requires the state-action point-wise distance between the optimal policy and the policy  to be upper bounded in the worst-case scenario, implying the bias is also bounded above since the worse-case distance is larger than average distance in general. While in our setting, we evaluate the sub-optimality gap in the average sense, i.e., $\mathbf{E}[\boldsymbol{v}^{*}-\boldsymbol{v}^{\pi_t}] $, by characterizing the upper bound of the bias from the Actor update and Critic update. Meanwhile, the performance requirements for online learning algorithms in the previous work (e.g., Bellman error is upper bounded by  \cite{song2022hybrid}) correspond to the second term on the RHS of \Cref{prop:unbias}, \Cref{theorem:upper} and \Cref{theorem:lower}, where we show that upper bound of the approximation error  in the Actor update has direct impact on the sub-optimality.

}

\section{Conclusion}\label{sec:conclusion}  
We take a finite-time analysis approach to address the question  ``\textit{whether and when online learning can be significantly accelerated by a warm-start policy from offline RL?}'' in Warm-Start RL. 
By delving into the intricate coupling between the updates of the Actor and the Critic, we first provide upper bounds on the approximation errors in both the Critic update and Actor update of online adaptation, respectively, where the recent advances on Bernstein's Inequality  are leveraged to deal with the sample correlation therein. Based on these results, we next cast the Warm-Start A-C method as Newton's method with perturbation, which serves as the foundation for characterizing the impact of the approximation errors on the finite-time learning performance of Warm-Start A-C. In particular, we provide upper bounds on the sub-optimality gap, which provides guidance on the design of Warm-Start RL for achieving desired finite-time learning performance. And we also derive lower bounds on the sub-optimality gap under biased approximation errors, indicating that the performance gap can be bounded away from zero even with a good prior policy. We  note that as the biases structure remains open, the study on the efficiency of Warm-start RL calls for additional work. Finally, it is also worth to explore the setting beyond linear function approximation and further derive the practical warm-start RL algorithm utilizing the theoretical findings in this work.



\section*{Acknowledgements}
We acknowledge that this work is generously supported in part by NSF Grants CNS-2003081, CNS-2203239, CPS-1739344,  and CCSS-2121222. We also would like to express our great appreciation to all reviewers for their constructive comments and feedback to help us improve our work.



\bibliography{icml2023}

\begin{thebibliography}{59}
\providecommand{\natexlab}[1]{#1}
\providecommand{\url}[1]{\texttt{#1}}
\expandafter\ifx\csname urlstyle\endcsname\relax
  \providecommand{\doi}[1]{doi: #1}\else
  \providecommand{\doi}{doi: \begingroup \urlstyle{rm}\Url}\fi

\bibitem[Agarwal et~al.(2020)Agarwal, Kakade, Lee, and
  Mahajan]{agarwal2020optimality}
Agarwal, A., Kakade, S.~M., Lee, J.~D., and Mahajan, G.
\newblock Optimality and approximation with policy gradient methods in markov
  decision processes.
\newblock In \emph{Conference on Learning Theory}, pp.\  64--66. PMLR, 2020.

\bibitem[Ajalloeian \& Stich(2020)Ajalloeian and
  Stich]{ajalloeian2020convergence}
Ajalloeian, A. and Stich, S.~U.
\newblock On the convergence of {SGD} with biased gradients.
\newblock \emph{arXiv preprint arXiv:2008.00051}, 2020.

\bibitem[Bagnell et~al.(2003)Bagnell, Kakade, Schneider, and
  Ng]{bagnell2003policy}
Bagnell, J., Kakade, S.~M., Schneider, J., and Ng, A.
\newblock Policy search by dynamic programming.
\newblock \emph{Advances in neural information processing systems}, 16, 2003.

\bibitem[Bertsekas(2022{\natexlab{a}})]{bertsekas2022abstract}
Bertsekas, D.
\newblock \emph{Abstract Dynamic Programming}.
\newblock Athena Scientific, 2022{\natexlab{a}}.

\bibitem[Bertsekas(2022{\natexlab{b}})]{bertsekas2022lessons}
Bertsekas, D.
\newblock \emph{Lessons from Alphazero for Optimal, Model Predictive, and
  Adaptive Control}.
\newblock Athena Scientific, 2022{\natexlab{b}}.

\bibitem[Bhandari et~al.(2018)Bhandari, Russo, and Singal]{bhandari2018finite}
Bhandari, J., Russo, D., and Singal, R.
\newblock A finite time analysis of temporal difference learning with linear
  function approximation.
\newblock In \emph{Conference on learning theory}, pp.\  1691--1692. PMLR,
  2018.

\bibitem[Boyd et~al.(2004)Boyd, Boyd, and Vandenberghe]{boyd2004convex}
Boyd, S., Boyd, S.~P., and Vandenberghe, L.
\newblock \emph{Convex Optimization}.
\newblock Cambridge university press, 2004.

\bibitem[Choi et~al.(2009)Choi, Laibson, Madrian, and
  Metrick]{choi2009reinforcement}
Choi, J.~J., Laibson, D., Madrian, B.~C., and Metrick, A.
\newblock Reinforcement learning and savings behavior.
\newblock \emph{The Journal of finance}, 64\penalty0 (6):\penalty0 2515--2534,
  2009.

\bibitem[Dalal et~al.(2018)Dalal, Thoppe, Sz{\"o}r{\'e}nyi, and
  Mannor]{dalal2018finite}
Dalal, G., Thoppe, G., Sz{\"o}r{\'e}nyi, B., and Mannor, S.
\newblock Finite sample analysis of two-timescale stochastic approximation with
  applications to reinforcement learning.
\newblock In \emph{Conference On Learning Theory}, pp.\  1199--1233. PMLR,
  2018.

\bibitem[Dalal et~al.(2020)Dalal, Szorenyi, and Thoppe]{dalal2020tale}
Dalal, G., Szorenyi, B., and Thoppe, G.
\newblock A tale of two-timescale reinforcement learning with the tightest
  finite-time bound.
\newblock In \emph{Proceedings of the AAAI Conference on Artificial
  Intelligence}, volume~34, pp.\  3701--3708, 2020.

\bibitem[Elfwing et~al.(2018)Elfwing, Uchibe, and Doya]{elfwing2018sigmoid}
Elfwing, S., Uchibe, E., and Doya, K.
\newblock Sigmoid-weighted linear units for neural network function
  approximation in reinforcement learning.
\newblock \emph{Neural Networks}, 107:\penalty0 3--11, 2018.

\bibitem[Fan et~al.(2021)Fan, Jiang, and Sun]{fan2021hoeffding}
Fan, J., Jiang, B., and Sun, Q.
\newblock Hoeffding's inequality for general markov chains and its applications
  to statistical learning.
\newblock \emph{Journal of Machine Learning Research}, 22\penalty0
  (139):\penalty0 1--35, 2021.

\bibitem[Farahmand et~al.(2010)Farahmand, Szepesv{\'a}ri, and
  Munos]{farahmand2010error}
Farahmand, A.-m., Szepesv{\'a}ri, C., and Munos, R.
\newblock Error propagation for approximate policy and value iteration.
\newblock \emph{Advances in Neural Information Processing Systems}, 23, 2010.

\bibitem[Fu et~al.(2020)Fu, Yang, and Wang]{fu2020single}
Fu, Z., Yang, Z., and Wang, Z.
\newblock Single-timescale actor-critic provably finds globally optimal policy.
\newblock In \emph{International Conference on Learning Representations}, 2020.

\bibitem[Fujimoto et~al.(2018)Fujimoto, Hoof, and
  Meger]{fujimoto2018addressing}
Fujimoto, S., Hoof, H., and Meger, D.
\newblock Addressing function approximation error in actor-critic methods.
\newblock In \emph{International conference on machine learning}, pp.\
  1587--1596. PMLR, 2018.

\bibitem[Garc{\i}a \& Fern{\'a}ndez(2015)Garc{\i}a and
  Fern{\'a}ndez]{garcia2015comprehensive}
Garc{\i}a, J. and Fern{\'a}ndez, F.
\newblock A comprehensive survey on safe reinforcement learning.
\newblock \emph{Journal of Machine Learning Research}, 16\penalty0
  (1):\penalty0 1437--1480, 2015.

\bibitem[Gargiani et~al.(2022)Gargiani, Zanelli, Liao-McPherson, Summers, and
  Lygeros]{gargiani2022dynamic}
Gargiani, M., Zanelli, A., Liao-McPherson, D., Summers, T., and Lygeros, J.
\newblock Dynamic programming through the lens of semismooth newton-type
  methods.
\newblock \emph{IEEE Control Systems Letters}, 2022.

\bibitem[Gelly \& Silver(2007)Gelly and Silver]{gelly2007combining}
Gelly, S. and Silver, D.
\newblock Combining online and offline knowledge in {UCT}.
\newblock In \emph{Proceedings of the 24th international conference on Machine
  learning}, pp.\  273--280, 2007.

\bibitem[Grand-Cl{\'e}ment(2021)]{grand2021convex}
Grand-Cl{\'e}ment, J.
\newblock From convex optimization to {MDPs}: A review of first-order,
  second-order and quasi-newton methods for {MDPs}.
\newblock \emph{arXiv preprint arXiv:2104.10677}, 2021.

\bibitem[Grondman et~al.(2012)Grondman, Busoniu, Lopes, and
  Babuska]{grondman2012survey}
Grondman, I., Busoniu, L., Lopes, G.~A., and Babuska, R.
\newblock A survey of actor-critic reinforcement learning: Standard and natural
  policy gradients.
\newblock \emph{IEEE Transactions on Systems, Man, and Cybernetics, Part C
  (Applications and Reviews)}, 42\penalty0 (6):\penalty0 1291--1307, 2012.

\bibitem[Gupta et~al.(2020)Gupta, Kumar, Lynch, Levine, and
  Hausman]{gupta2020relay}
Gupta, A., Kumar, V., Lynch, C., Levine, S., and Hausman, K.
\newblock Relay policy learning: Solving long-horizon tasks via imitation and
  reinforcement learning.
\newblock In \emph{Conference on Robot Learning}, pp.\  1025--1037. PMLR, 2020.

\bibitem[Ijspeert et~al.(2002)Ijspeert, Nakanishi, and
  Schaal]{ijspeert2002learning}
Ijspeert, A., Nakanishi, J., and Schaal, S.
\newblock Learning attractor landscapes for learning motor primitives.
\newblock \emph{Advances in neural information processing systems}, 15, 2002.

\bibitem[Jiang et~al.(2018)Jiang, Sun, and Fan]{jiang2018bernstein}
Jiang, B., Sun, Q., and Fan, J.
\newblock Bernstein's inequality for general markov chains.
\newblock \emph{arXiv preprint arXiv:1805.10721}, 2018.

\bibitem[Kaelbling et~al.(1996)Kaelbling, Littman, and
  Moore]{kaelbling1996reinforcement}
Kaelbling, L.~P., Littman, M.~L., and Moore, A.~W.
\newblock Reinforcement learning: A survey.
\newblock \emph{Journal of artificial intelligence research}, 4:\penalty0
  237--285, 1996.

\bibitem[Khodadadian et~al.(2022)Khodadadian, Doan, Romberg, and
  Maguluri]{khodadadian2022finite}
Khodadadian, S., Doan, T.~T., Romberg, J., and Maguluri, S.~T.
\newblock Finite sample analysis of two-time-scale natural actor-critic
  algorithm.
\newblock \emph{IEEE Transactions on Automatic Control}, 2022.

\bibitem[Kim et~al.(2013)Kim, Farahmand, Pineau, and Precup]{kim2013learning}
Kim, B., Farahmand, A.-m., Pineau, J., and Precup, D.
\newblock Learning from limited demonstrations.
\newblock \emph{Advances in Neural Information Processing Systems}, 26, 2013.

\bibitem[Konda \& Tsitsiklis(1999)Konda and Tsitsiklis]{konda1999Actor}
Konda, V. and Tsitsiklis, J.
\newblock Actor-critic algorithms.
\newblock \emph{Advances in neural information processing systems}, 12, 1999.

\bibitem[Kumar et~al.(2020)Kumar, Gupta, and Levine]{kumar2020discor}
Kumar, A., Gupta, A., and Levine, S.
\newblock Discor: Corrective feedback in reinforcement learning via
  distribution correction.
\newblock \emph{Advances in Neural Information Processing Systems},
  33:\penalty0 18560--18572, 2020.

\bibitem[Kumar et~al.(2023)Kumar, Koppel, and Ribeiro]{kumar2023sample}
Kumar, H., Koppel, A., and Ribeiro, A.
\newblock On the sample complexity of actor-critic method for reinforcement
  learning with function approximation.
\newblock \emph{Machine Learning}, pp.\  1--35, 2023.

\bibitem[Lazaric et~al.(2010)Lazaric, Ghavamzadeh, and
  Munos]{lazaric2016analysis}
Lazaric, A., Ghavamzadeh, M., and Munos, R.
\newblock Analysis of a classification-based policy iteration algorithm.
\newblock In \emph{27th International Conference on Machine Learning}, pp.\
  607--614. Omnipress, 2010.

\bibitem[Lee et~al.(2022)Lee, Seo, Lee, Abbeel, and Shin]{lee2022offline}
Lee, S., Seo, Y., Lee, K., Abbeel, P., and Shin, J.
\newblock Offline-to-online reinforcement learning via balanced replay and
  pessimistic q-ensemble.
\newblock In \emph{Conference on Robot Learning}, pp.\  1702--1712. PMLR, 2022.

\bibitem[Levin \& Peres(2017)Levin and Peres]{levin2017markov}
Levin, D.~A. and Peres, Y.
\newblock \emph{Markov Chains and Mixing Times}, volume 107.
\newblock American Mathematical Soc., 2017.

\bibitem[Lezaud(1998)]{lezaud1998chernoff}
Lezaud, P.
\newblock Chernoff-type bound for finite markov chains.
\newblock \emph{Annals of Applied Probability}, pp.\  849--867, 1998.

\bibitem[Li(2017)]{li2017deep}
Li, Y.
\newblock Deep reinforcement learning: An overview.
\newblock \emph{arXiv preprint arXiv:1701.07274}, 2017.

\bibitem[Munos(2003)]{munos2003error}
Munos, R.
\newblock Error bounds for approximate policy iteration.
\newblock In \emph{ICML}, volume~3, pp.\  560--567. Citeseer, 2003.

\bibitem[Nair et~al.(2020)Nair, Dalal, Gupta, and Levine]{nair2020accelerating}
Nair, A., Dalal, M., Gupta, A., and Levine, S.
\newblock Accelerating online reinforcement learning with offline datasets.
\newblock \emph{arXiv preprint arXiv:2006.09359}, 2020.

\bibitem[Nesterov(2003)]{nesterov2003introductory}
Nesterov, Y.
\newblock \emph{Introductory Lectures on Convex Optimization: A Basic Course},
  volume~87.
\newblock Springer Science \& Business Media, 2003.

\bibitem[Ortner(2020)]{ortner2020regret}
Ortner, R.
\newblock Regret bounds for reinforcement learning via markov chain
  concentration.
\newblock \emph{Journal of Artificial Intelligence Research}, 67:\penalty0
  115--128, 2020.

\bibitem[Peters \& Schaal(2008)Peters and Schaal]{peters2008natural}
Peters, J. and Schaal, S.
\newblock Natural actor-critic.
\newblock \emph{Neurocomputing}, 71\penalty0 (7-9):\penalty0 1180--1190, 2008.

\bibitem[Puterman(2014)]{puterman2014markov}
Puterman, M.~L.
\newblock \emph{Markov decision processes: discrete stochastic dynamic
  programming}.
\newblock John Wiley \& Sons, 2014.

\bibitem[Puterman \& Brumelle(1979)Puterman and
  Brumelle]{puterman1979convergence}
Puterman, M.~L. and Brumelle, S.~L.
\newblock On the convergence of policy iteration in stationary dynamic
  programming.
\newblock \emph{Mathematics of Operations Research}, 4\penalty0 (1):\penalty0
  60--69, 1979.

\bibitem[Santos \& Rust(2004)Santos and Rust]{santos2004convergence}
Santos, M.~S. and Rust, J.
\newblock Convergence properties of policy iteration.
\newblock \emph{SIAM Journal on Control and Optimization}, 42\penalty0
  (6):\penalty0 2094--2115, 2004.

\bibitem[Shalev-Shwartz et~al.(2016)Shalev-Shwartz, Shammah, and
  Shashua]{shalev2016safe}
Shalev-Shwartz, S., Shammah, S., and Shashua, A.
\newblock Safe, multi-agent, reinforcement learning for autonomous driving.
\newblock \emph{arXiv preprint arXiv:1610.03295}, 2016.

\bibitem[Silver et~al.(2014)Silver, Lever, Heess, Degris, Wierstra, and
  Riedmiller]{silver2014deterministic}
Silver, D., Lever, G., Heess, N., Degris, T., Wierstra, D., and Riedmiller, M.
\newblock Deterministic policy gradient algorithms.
\newblock In \emph{International conference on machine learning}, pp.\
  387--395. PMLR, 2014.

\bibitem[Silver et~al.(2017)Silver, Hubert, Schrittwieser, Antonoglou, Lai,
  Guez, Lanctot, Sifre, Kumaran, Graepel, et~al.]{silver2017mastering}
Silver, D., Hubert, T., Schrittwieser, J., Antonoglou, I., Lai, M., Guez, A.,
  Lanctot, M., Sifre, L., Kumaran, D., Graepel, T., et~al.
\newblock Mastering chess and shogi by self-play with a general reinforcement
  learning algorithm.
\newblock \emph{arXiv preprint arXiv:1712.01815}, 2017.

\bibitem[Silver et~al.(2018)Silver, Hubert, Schrittwieser, Antonoglou, Lai,
  Guez, Lanctot, Sifre, Kumaran, Graepel, et~al.]{silver2018general}
Silver, D., Hubert, T., Schrittwieser, J., Antonoglou, I., Lai, M., Guez, A.,
  Lanctot, M., Sifre, L., Kumaran, D., Graepel, T., et~al.
\newblock A general reinforcement learning algorithm that masters chess, shogi,
  and go through self-play.
\newblock \emph{Science}, 362\penalty0 (6419):\penalty0 1140--1144, 2018.

\bibitem[Song et~al.(2022)Song, Zhou, Sekhari, Bagnell, Krishnamurthy, and
  Sun]{song2022hybrid}
Song, Y., Zhou, Y., Sekhari, A., Bagnell, J.~A., Krishnamurthy, A., and Sun, W.
\newblock Hybrid rl: Using both offline and online data can make rl efficient.
\newblock \emph{arXiv preprint arXiv:2210.06718}, 2022.

\bibitem[Sutton \& Barto(2018)Sutton and Barto]{sutton2018reinforcement}
Sutton, R.~S. and Barto, A.~G.
\newblock \emph{Reinforcement learning: An introduction}.
\newblock MIT press, 2018.

\bibitem[Sutton et~al.(1999)Sutton, McAllester, Singh, and
  Mansour]{sutton1999policy}
Sutton, R.~S., McAllester, D., Singh, S., and Mansour, Y.
\newblock Policy gradient methods for reinforcement learning with function
  approximation.
\newblock \emph{Advances in neural information processing systems}, 12, 1999.

\bibitem[Thrun \& Schwartz(1993)Thrun and Schwartz]{thrun1993issues}
Thrun, S. and Schwartz, A.
\newblock Issues in using function approximation for reinforcement learning.
\newblock In \emph{Proceedings of the 1993 Connectionist Models Summer School
  Hillsdale, NJ. Lawrence Erlbaum}, volume~6, 1993.

\bibitem[Uchendu et~al.(2022)Uchendu, Xiao, Lu, Zhu, Yan, Simon, Bennice, Fu,
  Ma, Jiao, et~al.]{uchendu2022jump}
Uchendu, I., Xiao, T., Lu, Y., Zhu, B., Yan, M., Simon, J., Bennice, M., Fu,
  C., Ma, C., Jiao, J., et~al.
\newblock Jump-start reinforcement learning.
\newblock \emph{arXiv preprint arXiv:2204.02372}, 2022.

\bibitem[Van~Hasselt et~al.(2016)Van~Hasselt, Guez, and Silver]{van2016deep}
Van~Hasselt, H., Guez, A., and Silver, D.
\newblock Deep reinforcement learning with double q-learning.
\newblock In \emph{Proceedings of the AAAI conference on artificial
  intelligence}, volume~30, 2016.

\bibitem[Wagenmaker \& Pacchiano(2022)Wagenmaker and
  Pacchiano]{wagenmaker2022leveraging}
Wagenmaker, A. and Pacchiano, A.
\newblock Leveraging offline data in online reinforcement learning.
\newblock \emph{arXiv preprint arXiv:2211.04974}, 2022.

\bibitem[Wu et~al.(2017)Wu, Kreidieh, Parvate, Vinitsky, and Bayen]{wu2017flow}
Wu, C., Kreidieh, A., Parvate, K., Vinitsky, E., and Bayen, A.~M.
\newblock Flow: Architecture and benchmarking for reinforcement learning in
  traffic control.
\newblock \emph{arXiv preprint arXiv:1710.05465}, 10, 2017.

\bibitem[Xie et~al.(2021)Xie, Jiang, Wang, Xiong, and Bai]{xie2021policy}
Xie, T., Jiang, N., Wang, H., Xiong, C., and Bai, Y.
\newblock Policy finetuning: Bridging sample-efficient offline and online
  reinforcement learning.
\newblock \emph{Advances in neural information processing systems}, 34, 2021.

\bibitem[Xu et~al.(2020{\natexlab{a}})Xu, Wang, and Liang]{xu2020improving}
Xu, T., Wang, Z., and Liang, Y.
\newblock Improving sample complexity bounds for (natural) actor-critic
  algorithms.
\newblock In \emph{Proceedings of the 34th International Conference on Neural
  Information Processing Systems}, 2020{\natexlab{a}}.

\bibitem[Xu et~al.(2020{\natexlab{b}})Xu, Wang, and Liang]{xu2020non}
Xu, T., Wang, Z., and Liang, Y.
\newblock Non-asymptotic convergence analysis of two time-scale (natural)
  actor-critic algorithms.
\newblock \emph{arXiv preprint arXiv:2005.03557}, 2020{\natexlab{b}}.

\bibitem[Yang et~al.(2019)Yang, Chen, Hong, and Wang]{yang2019provably}
Yang, Z., Chen, Y., Hong, M., and Wang, Z.
\newblock Provably global convergence of actor-critic: A case for linear
  quadratic regulator with ergodic cost.
\newblock \emph{Advances in neural information processing systems}, 32, 2019.

\bibitem[Zou et~al.(2019)Zou, Xu, and Liang]{zou2019finite}
Zou, S., Xu, T., and Liang, Y.
\newblock Finite-sample analysis for {SARSA} with linear function
  approximation.
\newblock \emph{Advances in neural information processing systems}, 32, 2019.

\end{thebibliography}
\bibliographystyle{icml2022}

\newpage
\appendix
\onecolumn
{\bf \Large Appendix}
\section{Examples in Section 2.2 }\label{appendix:example}
In this section, we elaborate further on the illustrative example in Section \ref{subsec:example}. We use the notation defined in Figure \ref{fig:1}. 

{\bf Policy Iteration as Newton's Method.} Based on \cite{puterman1979convergence}\cite{grand2021convex}, we first build the relation between policy iteration and Newton's Method in the abstract dynamic programming (ADP) framework, assuming accurate updates.

From the definition of the value function $\boldsymbol{v}$, we have that for any policy $\pi$,
\begin{align*}
    \boldsymbol{v}^{\pi} = \boldsymbol{r}_{\pi} + \gamma \boldsymbol{P}_{\pi}\boldsymbol{v}^{\pi}.
\end{align*}
Recall the definition of Bellman evaluation operator $T^{\pi}(\cdot)$ and the Bellman operator $T(\cdot)$, 
\begin{align*}
         T^{\pi}(\boldsymbol{v}) = & \boldsymbol{r}_{\pi} + \gamma \boldsymbol{P}_{\pi}\boldsymbol{v},~
     T(\boldsymbol{v}) =  \max_{\pi} \{\boldsymbol{r}_{\pi} + \gamma \boldsymbol{P}_{\pi}\boldsymbol{v}\} = \max_{\pi}   T^{\pi}(\boldsymbol{v}).
\end{align*}
It follows that
\begin{align*}
\boldsymbol{v}^{\pi_{t+1}} &=\boldsymbol{J}_{\boldsymbol{v}^{\pi_{t}}}^{-1} \boldsymbol{r}_{\pi_{t+1}} \\
&=\boldsymbol{v}^{\pi_{t}}-\boldsymbol{v}^{\pi_{t}}+\boldsymbol{J}_{\boldsymbol{v}^{\pi_{t}}}^{-1} \boldsymbol{r}_{\pi_{t+1}} \\
&=\boldsymbol{v}^{\pi_{t}}-\boldsymbol{J}_{\boldsymbol{v}^{\pi_{t}}}^{-1} \boldsymbol{J}_{\boldsymbol{v}^{\pi_{t}}} \boldsymbol{v}^{\pi_{t}}+\boldsymbol{J}_{\boldsymbol{v}^{\pi_{t}}}^{-1} \boldsymbol{r}_{\pi_{t+1}} \\
&=\boldsymbol{v}^{\pi_{t}}-\boldsymbol{J}_{\boldsymbol{v}^{\pi_{t}}}^{-1}\left(-\boldsymbol{r}_{\pi_{t+1}}+\boldsymbol{J}_{\boldsymbol{v}^{\pi_{t}}} \boldsymbol{v}^{\pi_{t}}\right)\\
&=\boldsymbol{v}^{\pi_{t}}-\boldsymbol{J}_{\boldsymbol{v}^{\pi_{t}}}^{-1}\left(-\boldsymbol{r}_{\pi_{t+1}}+\left(\boldsymbol{I}-\gamma \boldsymbol{P}_{\pi_{t+1}}\right) \boldsymbol{v}^{\pi_{t}}\right) \\
&=\boldsymbol{v}^{\pi_{t}}-\boldsymbol{J}_{\boldsymbol{v}^{\pi_{t}}}^{-1}\left(\boldsymbol{v}^{\pi_{t}}-\boldsymbol{r}_{\pi_{t+1}}-\gamma \boldsymbol{P}_{\pi_{t+1}} \boldsymbol{v}^{\pi_{t}}\right) \\
&=\boldsymbol{v}^{\pi_{t}}-\boldsymbol{J}_{\boldsymbol{v}^{\pi_{t}}}^{-1}\left(\boldsymbol{v}^{\pi_{t}}-T\left(\boldsymbol{v}^{\pi_{t}}\right)\right) \numberthis \label{app:eqn:newton},
\end{align*}
where $\boldsymbol{J}_{\boldsymbol{v}}=\boldsymbol{I}-\gamma \boldsymbol{P}_{\pi(\boldsymbol{v})}$ and $\pi(\boldsymbol{v})$ attains the $\max$ in $T(\boldsymbol{v})$. Eqn. \eqref{app:eqn:newton} establishes a connection between policy iteration under ADP and Newton's Method. Specifically, if we assume function $F: \boldsymbol{v} \to \boldsymbol{v} - T(\boldsymbol{v})$ is differentiable at any vector $\boldsymbol{v}$ visited by policy iteration, then we have $\boldsymbol{v}_{t+1} = \boldsymbol{v}_t  + \boldsymbol{J}_{\boldsymbol{v}_t}^{-1}F(\boldsymbol{v}_t)$, which is exactly the update of the Newton's Method in convex optimization \cite{boyd2004convex}. Due to the fact that $F(\cdot)$ may not be differentiable at all $\boldsymbol{v}$ in policy iteration, the assumptions on the Lipschitzness of $\boldsymbol{v} \to \boldsymbol{J}_{\boldsymbol{v}}$ is commonly used to prove the convergence of the policy iteration (see Assumption \ref{asu:J}). Following the same line, next we show the case when function approximation is used in the A-C algorithm.

{\bf A-C Updates with Function Approximation.} Consider the illustration example in Section \ref{subsec:example}. Next we outline the main differences between the A-C update with function approximation and the policy iteration in the ADP framework,  and cast A-C based policy iteration with function approximation as Newton's Method with perturbation. Specifically,  
\begin{align*}
\boldsymbol{v}^{\widetilde{\pi}_{t+1}} &=\boldsymbol{J}_{{\boldsymbol{v}^{\widetilde{\pi}_{t}}}}^{-1} \boldsymbol{r}_{\widetilde{\pi}_{t+1}} \\
&=\boldsymbol{v}^{\pi_{t}}-\boldsymbol{v}^{\pi_{t}}+\boldsymbol{J}_{{\boldsymbol{v}}^{\widetilde{\pi}_t}}^{-1} \boldsymbol{r}_{\widetilde{\pi}_{t+1}} \\
&=\boldsymbol{v}^{\pi_{t}}-\boldsymbol{J}_{{\boldsymbol{v}}^{\widetilde{\pi}_t}}^{-1} \boldsymbol{J}_{{\boldsymbol{v}}^{\widetilde{\pi}_t}} \boldsymbol{v}^{\pi_{t}}+\boldsymbol{J}_{{\boldsymbol{v}}^{\widetilde{\pi}_t}}^{-1} \boldsymbol{r}_{\widetilde{\pi}_{t+1}}\\
&=\boldsymbol{v}^{\pi_{t}}-\boldsymbol{J}_{{\boldsymbol{v}}^{\widetilde{\pi}_t}}^{-1}\left(-\boldsymbol{r}_{\widetilde{\pi}_{t+1}}+\boldsymbol{J}_{{\boldsymbol{v}}^{\widetilde{\pi}_t}}\boldsymbol{v}^{\pi_{t}}\right)\\
&=\boldsymbol{v}^{\pi_{t}}-\boldsymbol{J}_{{\boldsymbol{v}}^{\widetilde{\pi}_t}}^{-1}\left(-{\boldsymbol{r}}_{\widetilde{\pi}_{t+1}}+\left(\boldsymbol{I}-\gamma \boldsymbol{P}_{\widetilde{\pi}_{t+1}}\right) \boldsymbol{v}^{\pi_{t}}\right) \\
&=\boldsymbol{v}^{\pi_{t}}-\boldsymbol{J}_{{\boldsymbol{v}}^{\widetilde{\pi}_t}}^{-1}\left(\boldsymbol{v}^{\pi_{t}}-(\boldsymbol{r}_{\widetilde{\pi}_{t+1}}+\gamma \boldsymbol{P}_{\widetilde{\pi}_{t+1}} \boldsymbol{v}^{\pi_{t}})\right) \\
&\triangleq\boldsymbol{v}^{\pi_{t}}-\boldsymbol{J}_{{\boldsymbol{v}}^{\widetilde{\pi}_t}}^{-1}\left(\boldsymbol{v}^{\pi_{t}}-\widetilde{T}\left(\boldsymbol{v}^{\pi_{t}}\right)\right),
\end{align*}
where $\boldsymbol{J}_{{\boldsymbol{v}}^{\widetilde{\pi}_t}}=\boldsymbol{I}-\gamma \boldsymbol{P}_{{\pi}(\boldsymbol{v}^{\widetilde{\pi}_t})}$ and ${\pi}(\boldsymbol{v})$ {attains the $\max$ in $T(\boldsymbol{v})$ (not $\widetilde{T}({v})$)}, with the following two operators defined as 
\begin{align*}
    T(\boldsymbol{v_t}) \triangleq  & \boldsymbol{r}_{\pi_{t+1}}+\gamma \boldsymbol{P}_{\pi_{t+1}} \boldsymbol{v}_{t},\\
    \widetilde{T}(\boldsymbol{v_t}) \triangleq  & \boldsymbol{r}_{{\widetilde{\pi}_{t+1}}}+\gamma \boldsymbol{P}_{{\widetilde{\pi}_{t+1}}} \boldsymbol{v}_{t}   .
\end{align*}

For convenience, let $\mathcal{E}_{T,t}$ and $\mathcal{E}_{J,t}$ denote the approximation errors in the Bellman operator $T$ and the Jacobian $\boldsymbol{J}_{\boldsymbol{v}}$, i.e.,
\begin{align*}
     \widetilde{T}(\boldsymbol{v}_t)-T({\boldsymbol{v}_t})=&(\boldsymbol{r}_{\widetilde{\pi}_{t+1}}+\gamma \boldsymbol{P}_{\widetilde{\pi}_{t+1}} \boldsymbol{v}_{t}) - (\boldsymbol{r}_{{\pi}_{t+1}}+\gamma \boldsymbol{P}_{{\pi}_{t+1}} \boldsymbol{v}_{t})
        \triangleq  \mathcal{E}_{T,t},\\
    \boldsymbol{J}_{\widetilde{\boldsymbol{v}}_{t}}^{-1}-\boldsymbol{J}_{{\boldsymbol{v}}_{t}}^{-1} =&(\boldsymbol{I}-\gamma \boldsymbol{P}_{\widetilde{\pi}_{t+1}})^{-1}-(\boldsymbol{I}-\gamma \boldsymbol{P}_{{\pi}_{t+1}})^{-1} \triangleq  \mathcal{E}_{J,t},
\end{align*}
 and define
\begin{align*}
        \boldsymbol{v}^{\hat{\pi}_{t+1}} \triangleq  \boldsymbol{v}^{\widetilde{\pi}_{t+1}} + \mathcal{E}_{a,t},
\end{align*}
where $\mathcal{E}_{a,t} $ capture the error induced by inaccurate policy improvement (the greedy step, e.g., Eqn. \eqref{eqn:Actorupdate}) in the Actor update.
Then we have that
\begin{align*}
\boldsymbol{v}^{\widetilde{\pi}_{t+1}} 
& = \boldsymbol{v}^{\pi_{t}}-\boldsymbol{J}_{{\boldsymbol{v}}^{\widetilde{\pi}_t}}^{-1}\left(\boldsymbol{v}^{\pi_{t}}-\widetilde{T}\left(\boldsymbol{v}^{\pi_{t}}\right)\right)\\
& = \boldsymbol{v}_{t}-(\boldsymbol{J}_{{\boldsymbol{v}}_{t}}^{-1} +  \mathcal{E}_{J,t})\left(\boldsymbol{v}_{t}-{T}(\boldsymbol{v}_t) - \mathcal{E}_{T,t}\right)\\
& =\underbrace{\boldsymbol{v}_{t}- \boldsymbol{J}_{{\boldsymbol{v}}_{t}}^{-1}(\boldsymbol{v}_{t}-{T}(\boldsymbol{v}_t))}_{\text{Exact Newton Step}} \underbrace{  - \mathcal{E}_{J,t} (\boldsymbol{v}_{t}-{T}(\boldsymbol{v}_t)) + (\boldsymbol{J}_{{\boldsymbol{v}}_{t}}^{-1} +  \mathcal{E}_{J,t})\mathcal{E}_{T,t}}_{\text{Perturbation}}\\
& \triangleq \underbrace{\boldsymbol{v}_{t}- \boldsymbol{J}_{{\boldsymbol{v}}_{t}}^{-1}(\boldsymbol{v}_{t}-{T}(\boldsymbol{v}_t))}_{\text{Exact Newton Step}} + \mathcal{E}_t\\
&=\boldsymbol{v}^{\pi_{t+1}} + \mathcal{E}_{t}.
\end{align*}
In a nutshell, we have that
\begin{align*}
\boldsymbol{v}^{\hat{\pi}_{t+1}} & =\boldsymbol{v}^{\pi_{t+1}} + \mathcal{E}_{c,t} + \mathcal{E}_{a,t},
\end{align*}
where  
\[
\mathcal{E}_{c,t}  \triangleq - \mathcal{E}_{J,t} (\boldsymbol{v}_{t}-{T}(\boldsymbol{v}_t)) + (\boldsymbol{J}_{{\boldsymbol{v}}_{t}}^{-1} +  \mathcal{E}_{J,t})\mathcal{E}_{T,t} .
\]

\begin{figure}[t]
    \centering
    \includegraphics[width=\textwidth]{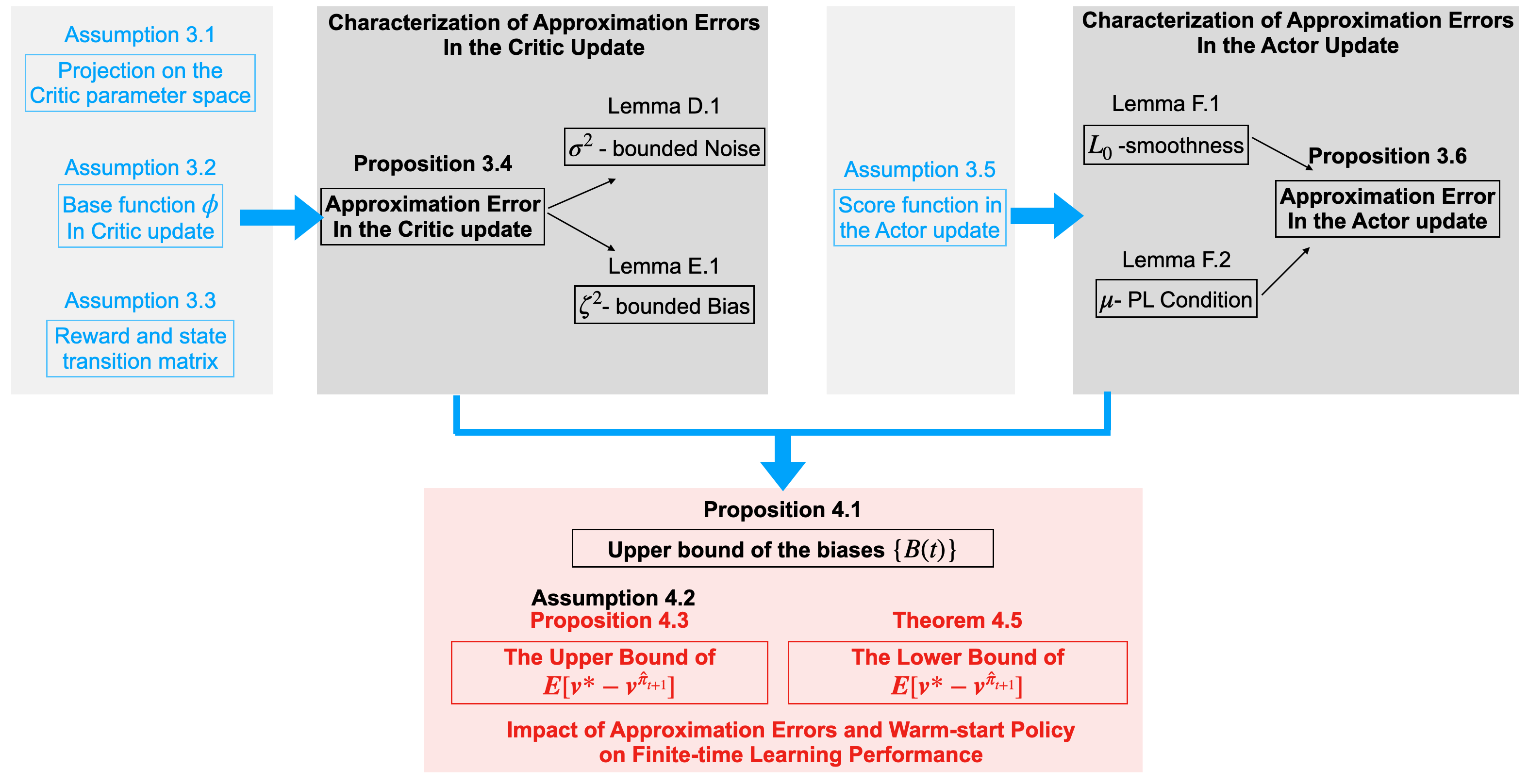}
    \caption{Illustration of the theoretical  analysis.}
    \label{fig:illustration}
\end{figure}

\section{Detailed Comparison with Previous Work}
\label{appendix:related}

In this work, we consider the same warm-start RL setup as in \cite{bertsekas2022abstract} and recent successful applications including AlphaZero, where the offline pretrained policy is utilized as the initialization for online learning and this policy is updated while interacting with the MDP online. Policy improvement via online adaptation (finetuning) plays a critical role in addressing the notorious challenge of ``distribution shift'' between offline training and online learning, and this is one main motivation for our study on Warm-start RL. In stark contrast, the reference policy in \cite{xie2021policy}\cite{bagnell2003policy} is used to collect samples but remains fixed during the online learning. It is clear that if one queries 0 samples from the reference policy, Algorithm 2 \cite{xie2021policy} would NOT reduce to the proposed warm-start learning algorithm in our setting. Meanwhile, Algorithm 1 and Algorithm 2 \cite{uchendu2022jump} assume the episodic MDP setting, which is different from the MDP setting in our study. On the other hand, the hybrid RL setting \cite{song2022hybrid}\cite{wagenmaker2022leveraging} mainly focuses on the usage of the offline dataset while the initial policy is not initialized by any warm-start policy (e.g., Algorithm 1 \cite{song2022hybrid}, Section 6 \cite{wagenmaker2022leveraging}).

Moreover, the finite time analysis with function approximation errors in both Actor and Critic updates has not been studied before under this warm-start RL setting. From a theoretic perspective, our work has contributed to developing a fundamental understanding of the impact of the function approximation errors in the general MDP settings (ref. \Cref{tab:detailcompare}), beyond the references listed.

\begin{table}[]
    \centering   
    \caption{Detailed Comparison with related work in terms of MDP and function approximation settings.}
    \begin{tabularx}{\textwidth}{c|XX} \toprule
       Reference  & Previous Work & Our Work \\\hline
        \cite{xie2021policy} & Episodic MDP setting and no function approximation error during the policy finetuning. & We consider general MDP with the linear value function (Critic) approximation and general Actor function approximation. \\ \hline
        \cite{bagnell2003policy}\cite{uchendu2022jump} &\cite{bagnell2003policy}\cite{uchendu2022jump} gives the results with either the approximation error from policy update (Theorem 1 \cite{bagnell2003policy}) or value function update (Section 4.2 \cite{bagnell2003policy}, Assumption A.6 \cite{uchendu2022jump}) through term $\epsilon$. & Our work first characterizes $\epsilon$ explicitly (which is not available in \cite{bagnell2003policy}\cite{uchendu2022jump}) and also studies how the approximation errors from ``both Actor update and Critic update'' affect the learning performance at the same time (through bias term $\mathcal{B}(t)$).\\ \hline
        \cite{song2022hybrid} &\cite{song2022hybrid} considers Q-function approximation and assume the greedy policy can be obtained exactly (line 3, Algorithm 1) & We consider Actor-Critic and consider the approximation error in the Actor and Critic, respectively.\\ \hline
        \cite{wagenmaker2022leveraging} & \cite{wagenmaker2022leveraging} requires the underlying MDP structure to be linear and only considers linear Softmax Policy (Actor) & We consider general MDP and general Actor approximation.\\ 
    \end{tabularx}
    \label{tab:detailcompare}
\end{table}

\section{Proof of Bernstein's Inequality with General Makovian samples}\label{appendix:inequality}
{ \color{black}
In this section, we provide the proof of  Bernstein's Inequality with General Makovian samples following the proof in Theorem 2 \cite{jiang2018bernstein}. 

With a bit abuse of notation,let $\pi$ denote the stationary distribution of the Markov chain $\{X_i\}_{i\geq 1}$. We define $\pi(h) : = \int h(x)\pi(dx)$ to be the integral of function $h$ with respect to $\pi$. Let $\mathcal{L}_2(\pi)=\{h:\pi(h^2) < \infty\}$ be the Hilbert space of square-integrable functions and  $\mathcal{L}_2^0(\pi)=\{h \in \mathcal{L}_2(\pi):\pi(h)=0 \}$ be the subspace of mean zero functions.  Let $P$ be the Markov transition matrix of its underlying (state space) graph and $P^{*}$ be its adjoint in the Hilbert space. Let $\lambda({P}) \in [0,1]$ be the operator norm of $P$ on  $\mathcal{L}_2^0(\pi)$ and $\lambda_r(P) \in [-1,1]$ be the rightmost spectral value of $(P+P^{*})/2$. Then the right spectral gap of $P$ is defined as $1-\lambda_r$ \cite{levin2017markov} (We remark that in Assumption \ref{asu:reward}, we assume the absolute spectral gap is non-zero, which implies the right spectral gap is also non-zero. This is true since $-1 \leq \lambda_r \leq \lambda \leq 1$.). Let $E^h$ denote the multiplication operator of function $e^h: x \mapsto e^{h(x)} $. In the Hilbert space $\mathcal{L}_2(\pi)$, we define the norm of a function $h$ to be $\|h\|_{\pi} = \sqrt{\langle h,h \rangle_{\pi}}$. Furthermore, we introduce the norm of a linear operator $T$ on $\mathcal{L}_2(\pi)$ as $\vertiii{T}_{\pi}=\sup\{\|Th\|_{\pi}: \|h\|_{\pi} =1\}$.

We first restate  Bernstein's Inequality with General Makovian Samples  \cite{jiang2018bernstein} in the following theorem. Let $\alpha_{1}(\lambda)={(1+\lambda)}/{(1-\lambda)}$, $\alpha_{2}(\lambda)=5/(1-\lambda)$ and $\alpha_{2}(0)=1/3$.

\begin{theorem}[Bernstein's Inequality with General Makovian Samples]
    Suppose $\{X_i\}_{i \geq 1}$ is a stationary Markov chain with invariant distribution $\pi$ and non-zero right spectral gap $1-\lambda_r > 0 $, and $f:\mapsto x [-c,c]$ is a function with $\pi(f)=0$. Let $\sigma^2 = \pi(f^2)$. Then, for any $0 \leq t < (1-\max\{\lambda_r,0\})/5c$ and any $\epsilon > 0$, 
    \begin{equation}
        {\mathbf{P}}_{\pi}\left(\frac{1}{n} \sum_{i=1}^{n} f\left(X_{i}\right)>\epsilon\right) \leq \exp \left(-\frac{n \epsilon^{2} / 2}{\alpha_{1}\left(\max \left\{\lambda_{r}, 0\right\}\right) \cdot \sigma^{2}+\alpha_{2}\left(\max \left\{\lambda_{r}, 0\right\}\right) \cdot c \epsilon}\right).
    \end{equation}
\end{theorem}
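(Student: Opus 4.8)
The plan is to run the classical Chernoff (exponential-moment) argument for sums, with the scalar moment generating function replaced by a spectral estimate for a ``tilted'' transition operator, following the proof of Theorem~2 in \cite{jiang2018bernstein}. Write $S_n=\sum_{i=1}^{n}f(X_i)$ and $\lambda^{+}:=\max\{\lambda_r,0\}$, and recall that $E^{g}$ denotes multiplication by $e^{g}$ on $\mathcal{L}_2(\pi)$. First, for any $t>0$, Markov's inequality gives $\mathbf{P}_\pi(\tfrac1n S_n>\epsilon)\le e^{-nt\epsilon}\,\mathbf{E}_\pi[e^{tS_n}]$, so it suffices to control the exponential moment and then optimize over $t$. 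Next, unrolling the Markov property, splitting each tilting factor as $E^{tf}=E^{tf/2}E^{tf/2}$, and regrouping (using that $E^{tf/2}$ is self-adjoint on $\mathcal{L}_2(\pi)$), one obtains
\[
\mathbf{E}_\pi\!\left[e^{tS_n}\right]=\langle E^{tf/2}\mathbf{1},\ (E^{tf/2}PE^{tf/2})^{\,n-1}E^{tf/2}\mathbf{1}\rangle_\pi ,
\]
whence Cauchy--Schwarz and submultiplicativity of $\vertiii{\cdot}_\pi$ give
\[
\mathbf{E}_\pi\!\left[e^{tS_n}\right]\le \pi\!\left(e^{tf}\right)\,\vertiii{E^{tf/2}PE^{tf/2}}_\pi^{\,n-1}.
\]
Since $\pi(f)=0$ and $|f|\le c$, a second-order Taylor estimate gives $\pi(e^{tf})\le\exp(\tfrac12 e^{tc}\sigma^2 t^2)$, an $n$-independent factor to be absorbed into the constants; everything thus reduces to an operator-norm bound on the tilted operator $E^{tf/2}PE^{tf/2}$.

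The heart of the argument is to show that, for $0\le t< 1/(\alpha_2(\lambda^{+})c)=(1-\lambda^{+})/(5c)$,
\[
\vertiii{E^{tf/2}PE^{tf/2}}_\pi\le\exp\!\left(\frac{\alpha_1(\lambda^{+})\,\sigma^2\,t^2/2}{\,1-\alpha_2(\lambda^{+})\,ct\,}\right).
\]
Because $P$ is not assumed reversible, one works with the additive symmetrization $(P+P^{*})/2$, whose rightmost spectral value is $\lambda_r$ and whose spectral gap $1-\lambda_r$ is assumed positive. Writing $E^{tf/2}=I+(E^{tf/2}-I)$ and splitting each test function into its $\pi$-mean and its component in $\mathcal{L}_2^0(\pi)$, one estimates separately: the mean--mean term through $\pi(e^{tf/2}-1)$, hence through $\sigma^2 t^2$ using $\pi(f)=0$; the $\mathcal{L}_2^0(\pi)$--$\mathcal{L}_2^0(\pi)$ term through the contraction factor $\lambda^{+}$ of the symmetrization on $\mathcal{L}_2^0(\pi)$; and the cross terms through $\|e^{tf/2}-1\|_\infty\le e^{tc/2}-1$. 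Collecting these contributions and applying $1+x\le e^x$ produces the displayed bound, with the factors $(1-\lambda^{+})^{-1}$ entering precisely through $\alpha_1,\alpha_2$ and through the admissibility constraint $\alpha_2(\lambda^{+})ct<1$.

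Combining the pieces above gives
\[
\mathbf{P}_\pi\!\left(\tfrac1n S_n>\epsilon\right)\le\exp\!\left(-nt\epsilon+\frac{n\,\alpha_1(\lambda^{+})\sigma^2 t^2/2}{\,1-\alpha_2(\lambda^{+})ct\,}\right),
\]
and taking $t=\epsilon/(\alpha_1(\lambda^{+})\sigma^2+\alpha_2(\lambda^{+})c\epsilon)$ --- which satisfies $\alpha_2(\lambda^{+})ct<1$ and hence lies in the admissible range --- makes the exponent equal to $-\dfrac{n\epsilon^2/2}{\alpha_1(\lambda^{+})\sigma^2+\alpha_2(\lambda^{+})c\epsilon}$, which is exactly the claimed inequality.

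I expect the operator-norm estimate of the tilted operator to be the main obstacle: obtaining the \emph{explicit} constants $\alpha_1,\alpha_2$ --- rather than a merely qualitative $O(t^2)$ bound --- requires carefully decomposing the tilting perturbation $E^{tf/2}-I$ into its constant and mean-zero parts, invoking the spectral gap of the additive symmetrization on $\mathcal{L}_2^0(\pi)$ for the mean-zero part, and bounding several cross terms using only $\|f\|_\infty\le c$ and $\pi(f)=0$, all while tracking how the $(1-\lambda_r)^{-1}$ factors propagate. The Chernoff reduction, the operator rewriting, and the final optimization over $t$ are routine.
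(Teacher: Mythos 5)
Your overall architecture (Chernoff reduction, operator rewriting of the exponential moment, final optimization over $t$, which you verify correctly) matches the paper's, but the step you yourself flag as the heart of the argument contains a genuine gap. You propose to bound $\vertiii{E^{tf/2}PE^{tf/2}}_{\pi}$ directly by writing $E^{tf/2}=I+(E^{tf/2}-I)$, splitting test functions into their $\pi$-mean and $\mathcal{L}_2^0(\pi)$ components, and estimating ``the $\mathcal{L}_2^0(\pi)$--$\mathcal{L}_2^0(\pi)$ term through the contraction factor $\lambda^{+}$ of the symmetrization on $\mathcal{L}_2^0(\pi)$.'' For a non-reversible chain this step fails: the hypothesis $1-\lambda_r>0$ controls only the quadratic form $\langle u,Pu\rangle_\pi$ for $u\in\mathcal{L}_2^0(\pi)$ via $(P+P^*)/2$, not the operator norm of $P$ restricted to $\mathcal{L}_2^0(\pi)$, which can equal $1$ even when the right spectral gap is bounded away from zero. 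Since $E^{tf/2}PE^{tf/2}$ is not self-adjoint, its operator norm is not controlled by its numerical range, so the symmetrization cannot be invoked where you invoke it. This is precisely why the paper (following Theorem 2 of Jiang et al.) first passes to the Le\'on--Perron operator $\widehat{P}_{\lambda^{+}}=\lambda^{+}I+(1-\lambda^{+})\Pi$ via the comparison $\mathbf{E}_\pi[e^{tS_n}]\le\vertiii{E^{tf/2}\widehat{P}_{\lambda^{+}}E^{tf/2}}_{\pi}^{\,n}$ (Lemma \ref{lemma:transfer}): that lemma is the nontrivial device that converts quadratic-form control of a non-reversible $P$ into an operator-norm bound for a self-adjoint surrogate, and it is absent from your argument.

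Even granting reversibility, your first-order splitting with $\|e^{tf/2}-1\|_\infty\le e^{tc/2}-1$ on the cross terms would deliver some $O(t^2)$ bound but not the specific constants $\alpha_1(\lambda)=(1+\lambda)/(1-\lambda)$ and $\alpha_2(\lambda)=5/(1-\lambda)$. The paper obtains these by discretizing $f$ to a finite range (Lemma \ref{lemma:decompose}), reducing to a finite-state counterpart (Lemma \ref{lemma:counterpart}), and then summing Lezaud's full perturbation series $\beta(t)=\sum_m\beta^{(m)}t^m$ for the largest eigenvalue of the tilted operator, with trace bounds on products of $T^{(v_i)}$ and resolvent powers $Z^{(k_j)}$ (Lemma \ref{lemma:expand}); the radius of convergence of that series is exactly where the constraint $t<(1-\lambda^{+})/(5c)$ and the factor $5/(1-\lambda)$ come from. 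Two smaller issues: your Cauchy--Schwarz step leaves a multiplicative prefactor $\pi(e^{tf})>1$ and an exponent $n-1$ rather than $n$, and such a prefactor cannot simply be ``absorbed'' into a bound of the exact form $\exp(-n\epsilon^2/2/(\alpha_1\sigma^2+\alpha_2 c\epsilon))$ without degrading the constants; the Le\'on--Perron comparison avoids it. Your closed-form choice of $t$ does reproduce the stated exponent, which is consistent with the paper's Fenchel-conjugate computation in Step 2, but only once the missing operator-norm lemma is supplied.
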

\begin{proof}
\underline{{\bf Step 1.} Establish the upper bound of $\mathbf{E}\left[e^{t\sum_i^n f_i(X_i)}\right]$.}

Let $I: x \mapsto 1$ be the function mapping $x$ to 1 and let $\Pi$ be the projection operator onto $1$, i.e., $\Pi: g \mapsto \langle h,I \rangle_{\pi} I = \pi(h)I$. Define the Le\'on-Perron operator to be $\widehat{P}_\gamma = \gamma I + (1-\gamma ) \Pi$, $\gamma \in [0,1)$. Then we recall the following lemma (Lemma 2, \cite{jiang2018bernstein}) on  the stationary Markov chain \cite{fan2021hoeffding}.

\begin{lem}
    Let $\{X_i\}$ be a stationary Markov chain with invariant measure $\pi$ and non-zero right spectral gap $1-\lambda_r >0$. For any bounded function $f$ and any $t \in \mathbb{R}$,
    \begin{align*}
        \mathbf{E}_{\pi}\left[e^{t \sum_{i=1}^{n} f\left(X_{i}\right)}\right] \leq \vertiii{E^{t f / 2} \widehat{P}_{\max \left\{\lambda_{r}, 0\right\}} E^{t f / 2}}_{\pi}^{n} .
    \end{align*}
\label{lemma:transfer}
\end{lem}

Lemma \ref{lemma:decompose} indicates that it is sufficient to prove the upper bound of  $\mathbf{E}\left[e^{t\sum_i^n f_i(X_i)}\right]$ by proving the upper bound of $\vertiii{E^{t f / 2} \widehat{P}_{\max \left\{\lambda_{r}, 0\right\}} E^{t f / 2}}_{\pi}^{n}$. 

To this end, we first invoke the following lemma (Lemma 6, \cite{jiang2018bernstein}) to construct $\widehat{f}_k \approx f$ such that for any $\lambda \in [0,1)$, $
    \vertiii{E^{t f / 2} \widehat{P}_{\lambda} E^{t f / 2}}_{\pi}=\lim _{k \rightarrow \infty}\vertiii{E^{t \widehat{f}_k / 2} \widehat{P}_{\lambda} E^{t \widehat{f}_{k} / 2}}_{\pi}$.

\begin{lem}
    For function $f: x \in \mathcal{X} \mapsto [-c,c]$ such that $\pi(f) =c$, $\pi(f^2)=\sigma^2$. Let $\lceil \cdot \rceil$ be the ceiling function and  $\widetilde{f_{k}}(x)=\left\lceil\frac{f(x)+c}{c / 3 k}\right\rceil \times \frac{c}{3 k}-c$. Let $\widehat{f_{k}}=\frac{\widetilde{f}_{k}-\pi\left(\widetilde{f}_{k}\right)}{1+1 / 3 k}$. Then $\widetilde{f}_k$ takes at most $6k+1$ possible values and satisfies that for any bounded linear operator $T$ acting on the Hilbert Space $\mathcal{L}_2(\pi)$ and any $t \in \mathbb{R}$,
    \begin{align*}
        \vertiii{E^{t f / 2} T E^{t f / 2}}_{\pi}=\lim _{k \rightarrow \infty}\vertiii{E^{t \widehat{f}_{k} / 2} T E^{t \widehat{f}_{k} / 2}}_{\pi}.
    \end{align*}
    \label{lemma:decompose}
\end{lem}

Assume that the Markov chain $\{ \widehat{X}_i \}_{i \geq 1}, \widehat{X}_i \in \mathcal{X}$ is generated by the Le\'on-Perron operator $\hat{P}_{\lambda}$.  It follows that $\{\widehat{Y}_i\}_{i \geq 1} = \{\hat{f}_k(\widehat{X}_i)\}_{i \geq 1}$ is a Markov chain in the state space $\mathcal{Y} =\hat{f}_k(\mathcal{X})$. We recall the following lemma (Lemma 7, \cite{jiang2018bernstein}) on the relation between the two Markov chains.

\begin{lem}
    Let $\widehat{P}_{\lambda}$ be the Le\'on-Perron operator with $\lambda \in [0,1)$ on state space $\mathcal{X}$. Let $f$ be a function on $\mathcal{X}$. On the finite state space $\mathcal{Y}=\{y \in f(\mathcal{X}): \pi(\{x: f(x)=y\})>0\}$,    
    define a transition matrix $\widehat{Q}_{\lambda} = \lambda I + (1-\lambda)I\mu^{\top}$, with transition vector $\mu$ consisting of elements $\pi(\{x:f(x)=y\})$ for $y in \mathcal{Y}$. Let $E^{t\mathcal{Y}}$ denote the diagonal matrix with elements ${e^{ty}:y\in \mathcal{Y}}$. Then we have,
    \begin{align*}
          \vertiii{E^{t f / 2} \widehat{P}_\lambda E^{t f / 2}}_{\pi}= \vertiii{E^{t \mathcal{Y} / 2} \widehat{Q}_{\lambda} E^{t \mathcal{Y} / 2}}_{\mu}.
    \end{align*}
    \label{lemma:counterpart}
\end{lem}

Next, we bound the term  $\vertiii{E^{t \mathcal{Y} / 2} \widehat{Q}_{\lambda} E^{t \mathcal{Y} / 2}}_{\mu}$ by the expansion of the largest eigenvalue of the perturbed Markov operator $E^{tf/2}PE^{tf/2}$ as a series in $t$. Specifically, we recall the following result \cite{lezaud1998chernoff}. 
\begin{lem}
    Consider a reversible, irreducible Markov chain on finite state space $\mathcal{X}$. Let $D$ be the diagonal matrix with $\{f(x):x\in \mathcal{X}\}$ and $T^{(m)}=PD^m/m!$ for any $m \geq 0$ with $D^0=I$. Assume the invariant distribution of the Markov chain is $\pi$ and the second largest eigenvalue of the transition matrix $P$ is $\lambda_r < 1$. Let $t_{0}=\left(2\vertiii{T^{(1)}}_{\pi}\left(1-\lambda_{r}\right)^{-1}+c_{0}\right)^{-1}$ for some $c_0$ such that
    \begin{align*}
        \vertiii{T^{(m)}}_{\pi} \leq\vertiii{T^{(1)}}_{\pi} c_{0}^{m-1}, \forall m \geq 1.
    \end{align*}
    Denote the largest eigenvalue of $PE^{tf}$ by $\beta(t)$ and $Z=(I-P+\Pi)^{-1}-\Pi$. Let $Z^{0} = - \Pi$, $ Z^{(j)}= Z^j,~j\geq 1$, $\beta(0) =1$ and $\beta(m), ~m \geq 1$ be   
     \begin{align*}
        \beta^{(m)}=&\sum_{p=1}^{m} \frac{-1}{p} \sum_{v_{1}+\cdots+v_{p}=m, v_{i} \geq 1, k_{1}+\cdots+k_{p}=p-1, k_{j} \geq 0} \quad \operatorname{trace}\left(T^{\left(v_{1}\right)} Z^{\left(k_{1}\right)} \ldots T^{\left(v_{p}\right)} Z^{\left(k_{p}\right)}\right),
    \end{align*}
    Then we have the following expansion on $\beta(t)$,
    \begin{align*}
        \beta(t)=\sum_{m=0}^{\infty} \beta^{(m)} t^{m},~|t|<t_0.
    \end{align*}
    \label{lemma:expand}
\end{lem}

Follow the same line as in \cite{lezaud1998chernoff} (Page 854-856), denote $\sigma^2 = \|f\|_{\pi}^2$ and  $c=c_0 \geq \vertiii{D}_{\pi}$ (defined in Lemma \ref{lemma:expand}), then we have the following upper bound of $\beta(t)$.
\begin{align*}
\beta(t) &=\beta^{(0)}+\beta^{(1)} t+\sum_{m=2} \beta^{(m)} t^{m} \\
& \leq 1+0+\sum_{m=2}^{\infty} \frac{\pi\left(f^{m}\right) t^{m}}{m !}+\sum_{m=2}^{\infty} \frac{\sigma^{2} \lambda t}{5 c}\left(\frac{5 c t}{1-\lambda_{\mathrm{r}}}\right)^{m-1} \\
& \leq \exp \left(\sum_{m=2}^{\infty} \frac{\pi\left(f^{m}\right) t^{m}}{m !}+\sum_{m=2}^{\infty} \frac{\sigma^{2} \lambda t}{5 c}\left(\frac{5 c t}{1-\lambda_{\mathrm{r}}}\right)^{m-1}\right)\\
& \leq \exp \left(\frac{\sigma^{2}}{c^{2}}\left(e^{t c}-1-t c\right) + \frac{\sigma^{2} \lambda t^{2}}{1-\lambda_{\mathrm{r}}-5 c t}\right)\\
& := \exp(g_1(t) + g_2(t)) \numberthis \label{eqn:g1g2}
\end{align*}

Now we are ready to derive the bound for the term  $\mathbf{E}\left[e^{t\sum_i^n f_i(X_i)}\right]$. 
 Following the results in Lemma \ref{lemma:decompose}, we consider a sequence of $f_k$ such that,
\begin{align*}
    \vertiii{E^{t f / 2} \widehat{P}_\lambda E^{t f / 2}}_{\pi}=\lim _{k \rightarrow \infty}\vertiii{E^{t \widehat{f}_{k} / 2} \widehat{P}_\lambda E^{t \widehat{f}_{k} / 2}}_{\pi}.
\end{align*}

Next, we construct the finite state space counterpart of each pair of $E^{t \widehat{f}_{k} / 2} \widehat{P}_{\lambda} E^{t \widehat{f}_{k} / 2}$ and $\pi$ by Lemma \ref{lemma:counterpart}, i.e.,
\begin{align*}
    \vertiii{E^{t \widehat{f}_{k} / 2} \widehat{P}_{\lambda} E^{t \widehat{f}_{k} / 2}}_{\pi}:=\vertiii{E^{t \mathcal{Y}_k / 2} \widehat{Q}_\lambda E^{t \mathcal{Y}_{k} / 2}}_{\mu_k}
\end{align*}

Let the random variable in the state space $\mathcal{Y}_k$ be $Y_k$, then the mean and variance of $Y_k$ is $\sum_{y \in \mathcal{Y}_{k}} \pi\left(\left\{x: \widehat{f}_{k}(x)=\mathcal{Y}\right\}\right) y=\pi\left(\widehat{f_{k}}\right)=0$ and $\sum_{y \in \mathcal{Y}_{k}} \pi\left(\left\{x: \widehat{f_{k}}(x)=y\right\}\right) y^{2}=\pi\left(\widehat{f}_{k}^{2}\right)$.

For each $k$, applying Eqn. \eqref{eqn:g1g2} gives us,
\begin{align*}
    \vertiii{E^{t \mathcal{Y}_k / 2} \widehat{Q}_\lambda E^{t \mathcal{Y}_{k} / 2}}_{\mu_k} \leq \exp \left( \frac{\pi\left(\widehat{f}_{k}^{2}\right)}{c^{2}}\left(e^{t c}-1-t c\right) + \frac{\pi\left(\widehat{f}_{k}^{2}\right) \lambda t^{2}}{1-\lambda_{{r}}-5 c t} \right)
\end{align*}

Note that as $k \to \infty$, we have $\pi\left(\widehat{f}_{k}^{2}\right) \to \pi(f^2) = \sigma^2$. Then we have the upper bound for each operator $\vertiii{E^{tf_i/2}PE^{tf_i/2}}_{\pi}$, i.e., for any $\lambda \in [0,1) $, 
\begin{align*}
    \vertiii{E^{tf/2}P_{\lambda}E^{tf/2}}_{\pi} \leq \exp(g_1(t) + g_2(t))
\end{align*}
where $g_1$ and $g_2$ are defined in Eqn. \eqref{eqn:g1g2}.

Consequently, we obtain the upper bound for $\mathbf{E}\left[e^{t\sum_i^n f_i(X_i)}\right]$ as follows, 
$\mathbf{E}\left[e^{t\sum_i^n f_i(X_i)}\right]$,
\begin{align*}
    \mathbf{E}_{\pi}\left[e^{t \sum_{i=1}^{n} f_{i}\left(X_{i}\right)}\right] \leq \exp \left(\frac{n \sigma^{2}}{c^{2}}\left(e^{t c}-1-t c\right)+\frac{n \sigma^{2} \max\{\lambda_r,0\} t^{2}}{1-\max\{\lambda_r,0\}-5 c t}\right)
\end{align*}

\underline{{\bf Step 2} Use the convex analysis argument to derive the Bernstein's Inequality.}

We first restate the following lemma (Lemma 9, \cite{jiang2018bernstein}) on the terms $g_1$ and $g_2$.

\begin{lem}
    For $\lambda \in [0,1)$, let $g_1(t) = \frac{n \sigma^{2}}{c^{2}}\left(e^{t c}-1-t c\right)$ and $g_2(t) = \frac{n \sigma^{2} \max\{\lambda_r,0\} t^{2}}{1-\max\{\lambda_r,0\}-5 c t}$, then for any $0\leq t < (1-\gamma)/5c$, the Frechet conjugates $(g_1 + g_2)^{*}$ satisfy the following inequalities. 
    \begin{align*}
        \text{if }\lambda \in (0,1): \quad (g_1 + g_2)^{*}(\epsilon):=&\sup_{0\leq t < (1-\lambda)/5c}\{t\epsilon -g_1(t)-g_2(t)\} \geq \frac{\epsilon^{2}}{2}\left(\frac{1+\lambda}{1-\lambda} \sigma^{2}+\frac{5 c \epsilon}{1-\lambda}\right)^{-1}\\
        \text{if }\lambda = 0:\quad\left(g_{1}+g_{2}\right)^{*}(\epsilon)=&g_{1}^{*}(\epsilon) \geq \frac{\epsilon^{2}}{2}\left(\sigma^{2}+\frac{c \epsilon}{3}\right)^{-1}.
    \end{align*}
\end{lem}

By the Chernoff bound, we have,
\begin{align*}
    -\log \mathbf{P}\left(\frac{1}{n} \sum_{i=1}^{n} f_{i}\left(X_{i}\right)>\epsilon\right) \geq n \times \sup_{t \in \mathbb{R}} \left\{t \epsilon-g_{1}(t)-g_{2}(t)\right\}
\end{align*}

Notice that $g_1(t)=O(t^2)$ and $g_2(t)=O(t^2)$ as $t \to 0$, then for some $t>0$, we have $t \epsilon-g_{1}(t)-g_{2}(t) >0$. Meanwhile, when $t \leq 0$, we have $t \epsilon-g_{1}(t)-g_{2}(t) \leq 0$. Thus, we can obtain that,
\begin{align*}
    \sup \left\{t \epsilon-g_{1}(t)-g_{2}(t): t>0\right\}=\sup \left\{t \epsilon-g_{1}(t)-g_{2}(t): t \in \mathbb{R}\right\}=\left(g_{1}+g_{2}\right)^{*}(\epsilon).
\end{align*}

Letting $\lambda = \max\{\lambda_r,0\}$,  $\alpha_{1}(\lambda)={(1+\lambda)}/{(1-\lambda)}$, $\alpha_{2}(\lambda)=5/(1-\lambda)$ and $\alpha_{2}(0)=1/3$  yields,
    \begin{equation}
        {\mathbf{P}}_{\pi}\left(\frac{1}{n} \sum_{i=1}^{n} f\left(X_{i}\right)>\epsilon\right) \leq \exp \left(-\frac{n \epsilon^{2} / 2}{\alpha_{1}\left(\max \left\{\lambda_{r}, 0\right\}\right) \cdot \sigma^{2}+\alpha_{2}\left(\max \left\{\lambda_{r}, 0\right\}\right) \cdot c \epsilon}\right).
    \end{equation}
This concludes the proof.

\end{proof}
}
\section{Proof of \cref{theorem:Critic}}\label{appendix:theorem1}
Let $\bar{\omega}_{t+1} = \Gamma_R(\tilde{\omega}_{t+1})$, and  assume $\|\phi(s,a)\| \leq 1$ uniformly (see Assumption \ref{asu:exist}). Based on the approach in Appendix G.1  \cite{fu2020single}, it suffices to upper bound $\|\omega_{t+1}- \tilde{\omega}_{t+1}\|_2$. Observe that
    \begin{align*}
      \left\|\omega_{t+1}-\bar{\omega}_{t+1}\right\|_{2} \leq\|\widehat{\Phi} \widehat{v}-\Phi v\|_{2} \leq\|\Phi\|_{2} \cdot\|\widehat{v}-v\|_{2}+\|\widehat{\Phi}-\Phi\|_{2} \cdot\|\widehat{v}\|_{2},
    \end{align*}
    where  $\Phi$ and $v$ are given as follows:
    \begin{align*}
&\widehat{\Phi}=\left(\frac{1}{N} \sum_{l=1}^{N} \phi\left(s_{l}, a_{l}\right) \phi\left(s_{l}, a_{l}\right)^{\top}\right)^{-1}, \\
& \Phi=\left(\mathbf{E}_{\rho_{t+1}}\left[\phi(s, a) \phi(s, a)^{\top}\right]\right)^{-1} ,\\
&\widehat{v}=\frac{1}{N} \sum_{l=1}^{N}\left((1-\gamma) \sum_{i=0}^{m-1} \gamma^i r_{l, i}+\gamma^m Q_{\omega_{t}}\left(s_{l, m}, a_{l, m}\right)\right) \cdot \phi\left(s_{l, m}, a_{l, m} \right), \\
&v=\mathbf{E}_{\rho_{t+1}}\left[(1-\gamma) \sum_{i=0}^{m-1} \left( \gamma^i r_{l, i}+\gamma^m \boldsymbol{P}_{\pi_{\theta_{t+1}}} Q_{\omega_{t}}(s_m, a_m)\right) \cdot \phi(s_m, a_m)\right].
\end{align*}
    
Recall that  the following assumptions are in place:  (1) Spectral norm $\|\phi(s,a) \|_2\leq 1,~\phi(s,a)\in \mathbb{R}^d$; (2) $|r(s,a)|\leq r_{\max}$ and $\bar{r} = \mathbf{E}_{s,a}r(s,a)$; (3) $\|\omega_t\|_2 \leq R$ and (4) the minimum singular value of the matrix $\mathbf{E}_{\rho_t}[\phi(s,a)\phi(s,a)^{\top}],~t\geq 1$ is uniformly lower bounded by $\sigma^{*}$.
It can be shown that
$
    \|\Phi\|_2 \leq  \frac{1}{\sigma^{*}} $.

{\color{black} Next, we derive the bound by appealing to Bernstein's Inequality with  General Makovian samples. 
Following Theorem 2 \cite{jiang2018bernstein} {\color{black} (The proof of  Bernstein's Inequality can be found in Appendix \ref{appendix:inequality})}, { let $\pi_r$ be the invariant distribution (which is relevant to the current policy $\pi_k$) of the stationary Markov chain $\{r_t\}_{t=1}^m$}. Suppose that it has non-zero right spectral gap $1-\lambda_r > 0$. Let $\sigma^2_r = \int (r-\bar{r})^2\pi_r(dr)$. Then,  we have that for any $\epsilon > 0$:
\begin{align*}
    \mathbf{P}_{\pi_r}\left(\frac{1}{m} \sum_{i=1}^{m} (r_{i}-\bar{r}) >\epsilon\right) \leq \exp \left(-\frac{m \epsilon^{2} / 2}{\alpha_{1}\left(\max \left\{\lambda_{r}, 0\right\}\right) \cdot \sigma^{2}+\alpha_{2}\left(\max \left\{\lambda_{r}, 0\right\}\right) \cdot r_{\max} \epsilon}\right),
\end{align*}
where $\alpha_{1}(\lambda)=\frac{1+\lambda}{1-\lambda}, \quad \alpha_{2}(\lambda)= \begin{cases}\frac{1}{3} & \text { if } \lambda=0 \\ \frac{5}{1-\lambda} & \text { if } \lambda \in(0,1)\end{cases}$.

We conclude that with probability at least $1-p$, 

\begin{align*}
    \sum_{i=0}^{m-1} r_i \leq \frac{\sqrt{\alpha_{2}^2\left(\max \left\{\lambda_{r}, 0\right\}\right)^2 \ln{p}^2 - 2m \alpha_{1}\left(\max \left\{\lambda_{r}, 0\right\}\right)\ln{p} } - \alpha_{2}\left(\max \left\{\lambda_{r}, 0\right\}\right) \ln{p}}{m} + \bar{r}:=\tilde{r}_m.
\end{align*}

It follows that with probability at least $1-p$,
\begin{align*}
    \|\hat{v}\|_2 \leq (1-\gamma)\tilde{r}_m + \gamma^m R,
\end{align*}

Further, note that
\begin{align*}
    \|{v}\|_2 \leq (1-\gamma)\bar{r} + \gamma^m R,
\end{align*}

}

{\color{black} Since the minimum singular value of $\hat{\Phi}^{-1} $ is no less  than  $ \frac{\sigma^{*}}{2}$ w.h.p. when $N$ is large enough, we have that}
\begin{align*}
    \|\hat{\Phi}\|_2 \leq \frac{2}{\sigma^{*}}.
\end{align*}

For convenience, define 
\begin{align*}
    \hat{X} \triangleq  &\left(\frac{1}{N} \sum_{l=1}^{N} \phi\left(s_{l}, a_{l}\right) \phi\left(s_{l}, a_{l}\right)^{\top}\right)^{}, ~~X \triangleq  \left(\mathbf{E}_{\rho_{t+1}}\left[\phi(s, a) \phi(s, a)^{\top}\right]\right)^{}, 
\end{align*} and define
\begin{align}
    Z \triangleq  & \hat{X}-X = \sum_{k=1}^{N}S_k,\\
    S_k\triangleq  & \frac{1}{N}(\phi_k\phi_k^{\top} - X),
\end{align}
where $S_k, k=1,\cdots, N$ are independent. 

Next, we derive the uniform bound on the spectral norm of each summand as follows:
\begin{align*}
    \|S_k\|_2 = \frac{1}{N} \|\phi_k\phi_k^{\top} - X\| \leq \frac{1}{N}(\|\phi_k\phi_k^{\top}\| + \|X\|) \leq \frac{2}{N} .
\end{align*}
To this end,  we bound the matrix variance statistic $V(Z)$:
\begin{align*}
    V(Z) := & \|\mathbf{E}[Z^2]\|=\|\sum_{k=1}^N \mathbf{E}[S_k^2]\|.
\end{align*}

Note that  the variance of each summand is given by 
\begin{align*}
    \mathbf{E}[S_k^2]=& \frac{1}{N^2} \mathbf{E}[(\phi_k\phi_k^{\top} - X)^2]\\
    =&\frac{1}{N^2}\mathbf{E}[\|\phi_k\|^2\cdot \phi\phi^{\top}-\phi\phi^{\top}X-X\phi\phi^{\top}+X^2]\\
    \preccurlyeq & \frac{1}{N^2}[\mathbf{E}[\phi\phi^{\top}] - X^2]\\
    \preccurlyeq & \frac{1}{N^2}X.
\end{align*}

Combining the above, we conclude that
\begin{align*}
    0 \preccurlyeq \sum_{k=1}^N \mathbf{E}[S_k^2] \preccurlyeq \frac{1}{N}X.
\end{align*}

Observe that
\begin{align*}
 \|X\| = \|\mathbf{E}[\phi\phi^{\top}]\|_2 \leq \mathbf{E}[\|\phi\phi^{\top}\|] = \mathbf{E}\|\phi\|^2 \leq 1   . 
\end{align*}

Since the spectral norm is the variance statistic given by 
\begin{align*}
    V(Z) \leq \frac{1}{N}\|X\|,
\end{align*}
appealing to Bernstein's Inequality, we have that 
\begin{align*}
\mathbf{P}\{\|Z\|\geq t\} \leq & 2d \operatorname{exp}\left( \frac{\frac{-t^2}{2}}{\frac{1}{N} \|X\| + \frac{2t}{3N}}\right),\\
    \mathbf{E}[\|Z\|] \leq & \sqrt{\frac{2}{N}\|X\|\operatorname{log}(2d)} + \frac{2}{3N}\operatorname{log}(2d)\\
    \leq & \sqrt{\frac{2}{N}\operatorname{log}(2d)} + \frac{2}{3N}\operatorname{log}(2d).
\end{align*}

This is to say, with probability at least $1-p/2$, the following holds:
\begin{align*}
    \|X - \hat{X}\|\leq  -\frac{2}{3N}\operatorname{log}\frac{p}{4d} +\sqrt{\frac{4}{9N^2}\operatorname{log}^2\frac{p}{4d} - \frac{2}{N}\operatorname{log}\frac{p}{4d}} .
\end{align*}
In a nutshell, we have that
    \begin{align*}
    \|\widehat{\Phi}-\Phi\|_{2} =& \|\hat{X}^{-1} - X^{-1} \|_2\\
    = & \|\hat{X}^{-1} (\hat{X} - X)  X^{-1}\|_2\\
    = & \|\hat{\Phi}(\hat{X}-X)\Phi\|_2 \\
    \leq & \frac{2}{(\sigma^{*})^{2}}\|\hat{X}-X\|_2\\ 
    \leq & \frac{4}{\sqrt{N}\left(\sigma^{*}\right)^{2}} \cdot \left(-\frac{2}{3N}\operatorname{log}\frac{p}{4d} +\sqrt{\frac{4}{9N^2}\operatorname{log}^2\frac{p}{4d} - \frac{2}{N}\operatorname{log}\frac{p}{4d}} \right).
\end{align*}

Similarly, the following inequality holds with probability at least $1-p/2$:
\begin{align*}
    \|\widehat{v}-v\|_{2} \leq -\frac{\delta_1}{3}\log \frac{p}{2(d+1)} + \sqrt{\frac{\delta_1^2}{9}\log^2 \frac{p}{2(d+1)} - 2 \delta_2 \log \frac{p}{2(d+1)}},
\end{align*}
where $d$ is the dimension of vector $\varphi$, $\delta_1 = \frac{1}{N}\left( (1-\gamma)(\tilde{r}_m+\bar{r}) + 2\gamma^m R \right)$ and $\delta_2 = \|\mathbf{E}[\hat{v}-v]\|_2$ satisfying 
\begin{align*}
    \delta_2 \leq & \frac{1}{N}\left[(1-\gamma)(|\tilde{r}_m|(|\tilde{r}_m-\bar{r}| + \gamma^mR|\tilde{r}_m-\bar{r}|)) \right]\\
    \leq & \frac{1-\gamma}{N}[r_{\max} + \gamma^m R] |\tilde{r}_m - \bar{r}|.
\end{align*}

Summarizing, we have that
\begin{align*}
      \left\|\omega_{t+1}-\bar{\omega}_{t+1}\right\|_{2} \leq& \|\Phi\|_{2} \cdot\|\widehat{v}-v\|_{2}+\|\widehat{\Phi}-\Phi\|_{2} \cdot\|\widehat{v}\|_{2}\\
      \leq & -\frac{\delta_1}{3\sigma^{*}}\log \frac{p}{2(d+1)} + \sqrt{\frac{\delta_1^2}{9}\log^2 \frac{p}{2(d+1)} - 2 \delta_2 \log \frac{p}{2(d+1)}} 
      \\& + \frac{4((1-\gamma)\tilde{r}_m + \gamma^mR)}{\sqrt{N}(\sigma^{*})^2}\left(-\frac{2}{3N}\operatorname{log}\frac{p}{4d} +\sqrt{\frac{4}{9N^2}\operatorname{log}^2\frac{p}{4d} - \frac{2}{N}\operatorname{log}\frac{p}{4d}} \right) ,
\end{align*}
which indicates that with probability at least $1-p$,
\begin{align*}
    |Q_{\omega_{t+1}}-Q_{\bar{\omega}_{t+1}}| \leq & \Bigg( \frac{4((1-\gamma)\tilde{r}_m + \gamma^mR)}{\sqrt{N}(\sigma^{*})^2}\left(-\frac{2}{3N}\operatorname{log}\frac{p}{4d} +\sqrt{\frac{4}{9N^2}\operatorname{log}^2\frac{p}{4d} - \frac{2}{N}\operatorname{log}\frac{p}{4d}} \right)\Bigg)\\
     \triangleq & \epsilon_{Q}. \numberthis \label{appendix:eqn:Q}
\end{align*}

{\bf Remark.} In the case when Assumption \ref{asu:exist} does not hold, i.e., we have
\begin{align*}
     \inf_{\bar{\omega} \in \Omega}\mathbf{E}_{\rho^{\pi_{\theta}}}[\left(({T}^{\pi_{\theta}})^m Q_{\omega} - \bar{\omega}^{\top}\phi\right)(s,a)] = c_1,
\end{align*}
where $c_1>0$ is a constant.
Let  $\bar{\omega}_{t+1} = \Gamma_R(\tilde{\omega}_{t+1})$, recall that $\tilde{\omega}$ denotes the solution of Eqn. \eqref{eqn:crticupdate} and $\omega$ denotes the sample-based solution,  then we have

\begin{align*}
    |Q_{\bar{\omega}_{t+1}}-Q_{\tilde{\omega}_{t+1}}|=c_1
\end{align*}

From Eqn. \eqref{appendix:eqn:Q}, we obtain that,

\begin{align*}
     |Q_{\omega_{t+1}}-Q_{\bar{\omega}_{t+1}}| \leq
 \epsilon_{Q} 
\end{align*}

Then the difference between the sample-based solution and the underlying true solution of Eqn. \eqref{eqn:crticupdate} is,
\begin{align*}
    |Q_{\omega_{t+1}}-Q_{\tilde{\omega}_{t+1}}| \leq
 \epsilon_{Q} + c_1.
\end{align*}

Note that when Assumption \ref{asu:exist} holds, 
\begin{align*}
    Q_{\tilde{\omega}_{t+1}} = Q_{\bar{\omega}_{t+1}}.
\end{align*} 

\section{Proof of Bounded Noise in the Actor Update}\label{appendix:lemma1}
{
Based on \cref{theorem:Critic},  we have the following two lemmas for the upper bounds on the bias term $b = \mathbf{E} [C_{k,t}]$ and the error term $\beta = f_{k,t} + C_{k,t} - \mathbf{E} [C_{k,t}]- \mathbf{E} [f_{k,t}]$ in the stochastic gradient update Eqn. \eqref{eqn:Actorsample}, respectively. The proof of Lemmas \ref{lemma:boundnoise} and \ref{lemma:boundbias} can be found in Appendix \ref{appendix:lemma1} and \ref{appendix:lemma2}, respectively.
\begin{lem}[$\sigma^2$-bounded noise]
Suppose Assumptions \ref{asu:exist}, \ref{asu:wellcondition}, \ref{asu:reward} hold. Then with probability at least $1-p$, $\mathbf{E}[\|\beta\|^2] \leq \| \nabla_{\theta}h(\omega, \theta) + b\|^2 + \sigma^2,~ \forall\theta $, where  $\sigma^2 \geq 0$ is a constant and  depends on $p$.
\label{lemma:boundnoise}
\end{lem}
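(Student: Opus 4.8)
\emph{Proof proposal.} The plan is to recognize $\beta$ as the centered (mean-zero) part of the sampled update direction in Eqn.~\eqref{eqn:Actorsample}, and then to bound its second moment by a deterministic constant $\sigma^2$ (depending on $p$) using only the uniform boundedness of the $Q$-function and of the score function. The term $\|\nabla_\theta h(\omega,\theta)+b\|^2$ on the right-hand side is then pure slack since it is nonnegative; in particular there is no need to identify $\nabla_\theta h(\omega,\theta)+b$ with $\mathbf{E}[g]$.

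\textbf{Step 1 (centering and bias--variance split).} Write $g:=C_{k,t}+f_{k,t}=\frac1l\sum_{i=1}^{l}Q_{\omega_{t+1}}(s_i,a_i)\nabla_\theta\pi_{\theta_k}(a_i\mid s_i)$ for the full sampled gradient direction in Eqn.~\eqref{eqn:Actorsample}. By the definition of $\beta$ we have $\beta=g-\mathbf{E}[g]$, hence $\mathbf{E}[\beta]=\mathbf{0}$ and
\begin{align*}
 \mathbf{E}\big[\|\beta\|^2\big]=\mathbf{E}\big[\|g\|^2\big]-\|\mathbf{E}[g]\|^2\leq \mathbf{E}\big[\|g\|^2\big].
\end{align*}
Thus it suffices to produce a constant $\sigma^2$ with $\mathbf{E}[\|g\|^2]\leq\sigma^2$ uniformly over all $\theta$; adding the nonnegative quantity $\|\nabla_\theta h(\omega,\theta)+b\|^2$ to the right-hand side then yields the claim.

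\textbf{Step 2 (uniform bound on $\|g\|$).} I would bound $g$ termwise using part~(2) of Assumption~\ref{asu:pi}, which gives $\|\nabla_\theta\pi_{\theta_k}(a\mid s)\|\leq C_\psi$ for every $\theta$ and $(s,a)$, together with the reward bound in Assumption~\ref{asu:reward} (so $|Q^{\pi}(s,a)|\leq Q_{\max}:=r_{\max}/(1-\gamma)$) and $\|\phi\|\leq1$ with the projection onto $\Omega$ in Eqn.~\eqref{eqn:Criticsample} (so $|Q_{\omega_t}(s,a)|,|Q_{\widetilde{\omega}_t}(s,a)|\leq R$, using $\widetilde{\omega}_t\in\Omega$ by Assumption~\ref{asu:exist}). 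Concretely: (i) $\|f_{k,t}\|\leq Q_{\max}C_\psi$; (ii) since $Q_{\widetilde{\omega}_{t+1}}=(T^{\pi_{\theta_t}})^mQ_{\omega_t}$ and $Q^{\pi_{\theta_t}}$ is the fixed point of the $\gamma$-contraction $T^{\pi_{\theta_t}}$, we get $|Q_{\widetilde{\omega}_{t+1}}-Q^{\pi_{\theta_t}}|\leq\gamma^m(R+Q_{\max})$ and hence $\|C_{k,t,2}\|\leq\gamma^m(R+Q_{\max})C_\psi$; (iii) applying \cref{theorem:Critic} with $t$ replaced by $t+1$, on an event $\mathcal{G}$ of probability at least $1-p$ we have $|Q_{\omega_{t+1}}(s,a)-Q_{\widetilde{\omega}_{t+1}}(s,a)|\leq\epsilon_p$ for all $(s,a)$, so on $\mathcal{G}$ we get $\|C_{k,t,1}\|\leq\epsilon_pC_\psi$. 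Summing, on $\mathcal{G}$ we obtain $\|g\|\leq\big(\epsilon_p+\gamma^m(R+Q_{\max})+Q_{\max}\big)C_\psi$, and this bound is uniform in $\theta$ because the only $\theta$-dependence entered through $\psi_{\theta_k}$, bounded by $C_\psi$ for all $\theta$.

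\textbf{Step 3 (conclusion) and the main obstacle.} Setting $\sigma^2:=\big(\epsilon_p+\gamma^m(R+Q_{\max})+Q_{\max}\big)^2C_\psi^2$ --- which depends on $p$ through $\epsilon_p$ and is of order $(3\kappa)^2$ with $\kappa=C_\psi r_{\max}/(1-\gamma)$ once $\epsilon_p$ and $\gamma^m$ are negligible, matching $\sigma=3\kappa$ in \cref{theorem:Actor} --- Step~1 gives $\mathbf{E}[\|\beta\|^2]\leq\mathbf{E}[\|g\|^2]\leq\sigma^2\leq\|\nabla_\theta h(\omega,\theta)+b\|^2+\sigma^2$ for every $\theta$ on the event $\mathcal{G}$, which is the assertion. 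The one delicate point is bookkeeping the ``with probability at least $1-p$'' clause: the tight constant involving $\epsilon_p$ only holds on $\mathcal{G}$, so the expectations must be read as conditional on $\mathcal{G}$; off $\mathcal{G}$ one falls back to the crude deterministic bound $\|g\|\leq RC_\psi$ (from $|Q_{\omega_{t+1}}|\leq R$), which still gives a finite, though $p$-independent, variance bound. A secondary check is that $\sigma^2$ carries no hidden $\theta$-dependence, which is guaranteed by the boundedness of $Q$ (independent of the policy) and of $\psi_\theta$ uniformly in $\theta$.
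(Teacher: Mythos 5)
Your proposal is correct and follows essentially the same strategy as the paper's proof: split $\beta$ as the centered part of the sampled update direction $g=f_{k,t}+C_{k,t}$ and bound its second moment by a constant using the uniform bounds on the $Q$-function and on the score function from Assumption~\ref{asu:pi}. The only differences are refinements: you use the exact variance identity so that $\|\nabla_\theta h(\omega,\theta)+b\|^2$ is pure slack (the paper instead expands the square and retains $\|\mathbf{E}[f_{k,t}]+\mathbf{E}[C_{k,t}]\|^2=\|\nabla_\theta h(\omega,\theta)+b\|^2$ explicitly, with $M_n=1$), and you sharpen the constant via \cref{theorem:Critic} and the $\gamma$-contraction so that $\sigma^2$ genuinely depends on $p$ as the statement claims, whereas the paper simply takes the crude deterministic bound $\sigma^2=9C_\psi^2\left(r_{\max}/(1-\gamma)\right)^2$, which is in fact $p$-independent.
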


}
Recall $\beta = f_{k,t} + C_{k,t} - \mathbf{E} [C_{k,t}]  - \mathbf{E}[f_{k,t}]$. We  also have the following definitions:
\begin{align*}
     C_{k,t,1} \triangleq  & 1/l \sum_{i=1}^l ( Q_{\omega_{t+1}}(s_i,a_i)\nabla_{\theta}\pi_{\theta_{k}}(a_i\vert s_i)  - Q_{\widetilde{\omega}_{t+1}}(s_i,a_i)\nabla_{\theta}\pi_{\theta_{k}}(a_i\vert s_i)),\\
    C_{k,t,2} \triangleq  & 1/l \sum_{i=1}^l (Q_{\widetilde{\omega}_{t+1}}(s_i,a_i)\nabla_{\theta}\pi_{\theta_{k}}(a_i\vert s_i) -Q^{\pi_{\theta_{t}}}(s_i,a_i)\nabla_{\theta}\pi_{\theta_{k}}(a_i\vert s_i)),\\
    f_{k,t} \triangleq  & 1/l \sum_{i=1}^l Q^{\pi_{\theta_{t}}}(s_i,a_i)\nabla_{\theta}\pi_{\theta_{k}}(a_i\vert s_i),\\
    C_{k,t}  \triangleq  & C_{k,t,1}+ C_{k,t,2} .
\end{align*}
Next we evaluate $\mathbf{E}[\|f_{k,t} + C_{k,t} - \mathbf{E} [C_{k,t}]  - \mathbf{E}[f_{k,t}]\|^2]$ as follows: 
\begin{align*}
  & \| f_{k,t} + C_{k,t} - \mathbf{E} [C_{k,t}]- \mathbf{E}[f_{k,t}]\|^2\\
   = & (f_{k,t} + C_{k,t})(f_{k,t} + C_{k,t})^{\top} +(\mathbf{E} [C_{k,t}]+ \mathbf{E}[f_{k,t}])(\mathbf{E} [C_{k,t}]+ \mathbf{E}[f_{k,t}])^{\top} \\
   & - 2(f_{k,t} + C_{k,t})(\mathbf{E} [C_{k,t}]+ \mathbf{E}[f_{k,t}])^{\top} \\
   \leq & (f_{k,t} + C_{k,t})(f_{k,t} + C_{k,t})^{\top} +(\mathbf{E} [C_{k,t}]+ \mathbf{E}[f_{k,t}])(\mathbf{E} [C_{k,t}]+ \mathbf{E}[f_{k,t}])^{\top}. \numberthis\label{appendix:eqn:lemma1}
\end{align*}

Note that $C_{k,t}$ and $f_{k,t}$ are both bounded  above since $Q$-function is bounded and $\nabla_{\theta} \pi_{\theta}(a|s)$ is bounded (see Assumption \ref{asu:pi}), i.e., 
\begin{align*}
    \|\nabla \pi(a|s)\| \leq& C_{\psi},\\
    \|Q(s,a)\| \leq & \sum_{t=1}^{\infty} \gamma^t r_{\max}  = \frac{r_{\max}}{1-\gamma}.
\end{align*}
Then we have the following bounds for  $C_{k,t}$ and $f_{k,t}$:
\begin{align*}
    \|C_{k,t}\| \leq 2 C_{\psi}\frac{r_{\max}}{1-\gamma},\\
    \|f_{k,t}\| \leq C_{\psi}\frac{r_{\max}}{1-\gamma}.
\end{align*}

Then we have 
\begin{align*}
    (f_{k,t} + C_{k,t})(f_{k,t} + C_{k,t})^{\top} \leq & \|f_{k,t}\|^2 + \|C_{k,t}\|^2 + 2 \|f_{k,t}\|\|C_{k,t}\|\\
    \leq & 9C_{\psi}^2(\frac{r_{\max}}{1-\gamma})^2
\end{align*}

Taking expectation over both sides of the inequality \eqref{appendix:eqn:lemma1},  we have that
\begin{align*}
    \mathbf{E}[\|\beta\|^2] \leq 1 \cdot \|\mathbf{E} [C_{k,t}]+ \mathbf{E}[f_{k,t}] \|^2 + \mathbf{E}[(f_{k,t} + C_{k,t})(f_{k,t} + C_{k,t})^{\top}] .
\end{align*}
Let $M_n = 1$ and $\sigma^2 =  9C_{\psi}^2(\frac{r_{\max}}{1-\gamma})^2$.  Then we have that
\begin{align*}
     \mathbf{E}[\|\beta\|^2] \leq M_n \cdot \|\nabla_{\theta}h(\omega,\theta)+\mathbf{E} [C_{k,t}]\|+ \sigma^2.
\end{align*}

\section{Proof of Bounded Bias in the Actor Update}\label{appendix:lemma2}

\begin{lem}[$\zeta^2$-bounded bias]
Suppose Assumptions \ref{asu:exist}, \ref{asu:wellcondition}, \ref{asu:reward} hold. Then with probability at least  $1-p$, $\|b\|^2 \leq \zeta^2,~ \forall\theta$, where $\zeta^2 \geq 0$ is a constant and  depends on $p$.
 \label{lemma:boundbias}

\end{lem}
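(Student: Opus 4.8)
The plan is to bound $\|b\|^2 = \|\mathbf{E}[C_{k,t}]\|^2$ by decomposing $b$ into the two sources of bias already isolated in the paper, namely $\mathbf{E}[C_{k,t,1}]$ (coming from the finite-sample Critic estimation) and $\mathbf{E}[C_{k,t,2}]$ (coming from applying the Bellman evaluation operator only $m$ times), and controlling each in sup-norm. By the triangle inequality it suffices to show $\|\mathbf{E}[C_{k,t,1}]\| + \|\mathbf{E}[C_{k,t,2}]\| \le \zeta$ with probability at least $1-p$, and then take $\zeta^2$ to be the square of that bound.

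First I would handle $C_{k,t,1}$. On the event of probability at least $1-p$ furnished by Proposition \ref{theorem:Critic}, we have $|Q_{\omega_{t+1}}(s,a) - Q_{\widetilde\omega_{t+1}}(s,a)| \le \epsilon_p$ for every $(s,a)$; combining this with part (2) of Assumption \ref{asu:pi}, $\|\nabla_{\theta}\pi_{\theta_k}(a\vert s)\| = \|\psi_{\theta_k}(a\vert s)\| \le C_{\psi}$, gives the deterministic (on that event) bound $\|C_{k,t,1}\| \le \frac{1}{l}\sum_{i=1}^{l}\epsilon_p C_{\psi} = C_{\psi}\epsilon_p$, uniformly over the sampled trajectory $\{(s_i,a_i)\}$. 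Taking the expectation over the Actor rollout while conditioning on the Critic samples then yields $\|\mathbf{E}[C_{k,t,1}]\| \le C_{\psi}\epsilon_p$ on the same event.

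Next I would handle $C_{k,t,2}$. Using the identity $Q_{\widetilde\omega_{t+1}} = (T^{\pi_{\theta_t}})^m Q_{\omega_t}$ from the Critic update together with the fact that $Q^{\pi_{\theta_t}}$ is the unique fixed point of the $\gamma$-contraction $T^{\pi_{\theta_t}}$, I get $\|Q_{\widetilde\omega_{t+1}} - Q^{\pi_{\theta_t}}\|_{\infty} = \|(T^{\pi_{\theta_t}})^m Q_{\omega_t} - (T^{\pi_{\theta_t}})^m Q^{\pi_{\theta_t}}\|_{\infty} \le \gamma^m \|Q_{\omega_t} - Q^{\pi_{\theta_t}}\|_{\infty}$. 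Bounding $|Q_{\omega_t}(s,a)| = |\omega_t^{\top}\phi(s,a)| \le R$ (projection radius and $\|\phi\|\le 1$ from Assumption \ref{asu:wellcondition}) and $|Q^{\pi_{\theta_t}}(s,a)| \le r_{\max}/(1-\gamma) =: Q_{\max}$ (Assumption \ref{asu:reward}), this is at most $\gamma^m(R + Q_{\max})$. Hence $\|C_{k,t,2}\| \le C_{\psi}\gamma^m(R + Q_{\max})$ pointwise, so $\|\mathbf{E}[C_{k,t,2}]\| \le C_{\psi}\gamma^m(R+Q_{\max})$. Combining the two estimates, on the event of probability at least $1-p$ we obtain $\|b\| \le C_{\psi}(\epsilon_p + \gamma^m(R+Q_{\max}))$, and the lemma follows with $\zeta^2 := C_{\psi}^2(\epsilon_p + \gamma^m(R+Q_{\max}))^2$, which depends on $p$ through $\epsilon_p$ as claimed.

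The only delicate point — more a matter of bookkeeping than a genuine obstacle — is the order of conditioning: the high-probability statement of Proposition \ref{theorem:Critic} concerns the Critic samples, whereas the outer $\mathbf{E}[\cdot]$ defining $b$ averages over the Actor rollout $\{(s_i,a_i)\}_{i=1}^{l}$. One should therefore first condition on the Critic samples, invoke Proposition \ref{theorem:Critic} to pass to the good event, observe that the sup-norm bounds on $Q_{\omega_{t+1}}-Q_{\widetilde\omega_{t+1}}$ and $Q_{\widetilde\omega_{t+1}}-Q^{\pi_{\theta_t}}$ hold pointwise and hence survive the Actor expectation, and only then combine. No concentration argument is needed for the Actor rollout itself, since every bound used is uniform over trajectories; the uniformity over $\theta$ in the statement likewise follows because $\epsilon_p$, $C_{\psi}$, and $\gamma^m(R+Q_{\max})$ are all policy-independent.
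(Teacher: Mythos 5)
Your proof is correct and follows essentially the same route as the paper: decompose $b=\mathbf{E}[C_{k,t}]=\mathbf{E}[C_{k,t,1}]+\mathbf{E}[C_{k,t,2}]$, bound $C_{k,t,1}$ via the high-probability Critic error $\epsilon_p$ from Proposition \ref{theorem:Critic} together with $\|\psi_\theta\|\le C_\psi$, bound $C_{k,t,2}$ uniformly, and combine. The one place you genuinely improve on the paper is the $C_{k,t,2}$ term: the paper simply bounds both $Q_{\widetilde\omega_{t+1}}$ and $Q^{\pi_{\theta_t}}$ by $Q_{\max}=r_{\max}/(1-\gamma)$ and settles for $\|C_{k,t,2}\|\le 2C_\psi Q_{\max}$, giving $\zeta=C_\psi\epsilon_p+2C_\psi Q_{\max}$, whereas you exploit the identity $Q_{\widetilde\omega_{t+1}}=(T^{\pi_{\theta_t}})^mQ_{\omega_t}$ and the $\gamma$-contraction of $T^{\pi_{\theta_t}}$ to get $\|C_{k,t,2}\|\le C_\psi\gamma^m(R+Q_{\max})$, which vanishes as $m\to\infty$ and thus actually matches the paper's own informal remark in Section \ref{subsec:Actor} that this term goes to $0$ with the rollout length; the paper's stated constant does not reflect that. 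Your explicit handling of the conditioning (first fix the Critic's good event, then take the Actor-rollout expectation, noting all bounds are pointwise in $(s,a)$) is also more careful than the paper, which passes over this silently. The only cosmetic difference at the end is that the paper concludes via $\|\mathbf{E}[C_{k,t}]\|^2\le\mathbf{E}[\|C_{k,t}\|^2]\le\zeta^2$ (Jensen) rather than your triangle-inequality-then-square; both are fine.
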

Recall that $b= \mathbf{E}[C_{k,t}]$ and 
\begin{align*}
    C_{k,t}  := & C_{k,t,1}+ C_{k,t,2}\\
    = & 1/l \sum_{i=1}^l  \Big( Q_{\omega_{t+1}}(s_i,a_i)\nabla_{\theta}\pi_{\theta_{k}}(a_i\vert s_i)  - Q_{\widetilde{\omega}_{t+1}}(s_i,a_i)\nabla_{\theta}\pi_{\theta_{k}}(a_i\vert s_i) + \\
    & (Q_{\widetilde{\omega}_{t+1}}(s_i,a_i)\nabla_{\theta}\pi_{\theta_{k}}(a_i\vert s_i) -Q^{\pi_{\theta_{t}}}(s_i,a_i)\nabla_{\theta}\pi_{\theta_{k}}(a_i\vert s_i) \Big) . 
\end{align*}
Next, we  evaluate $\|b\|^2$. Observe that (see also Appendix \ref{appendix:lemma1})
\begin{align*}
    \|Q_{\widetilde{\omega}_{t+1}}(s_i,a_i)\nabla_{\theta}\pi_{\theta_{k}}(a_i\vert s_i) -Q^{\pi_{\theta_{t}}}(s_i,a_i)\nabla_{\theta}\pi_{\theta_{k}}(a_i\vert s_i)\| \leq  2 C_{\psi}\frac{r_{\max}}{1-\gamma}.
\end{align*}
Meanwhile, recall the results from Proposition \ref{theorem:Critic} Eqn. \ref{appendix:eqn:Q}, we have that, for any $(s,a) \in \mathcal{S} \times \mathcal{A}$,
\begin{align*}
    \|Q_{\omega}-Q_{\tilde{\omega}}\| \leq \epsilon_Q.
\end{align*}

Then we have,
\begin{align*}
    \|Q_{\omega_{t+1}}(s_i,a_i)\nabla_{\theta}\pi_{\theta_{k}}(a_i\vert s_i)  - Q_{\widetilde{\omega}_{t+1}}(s_i,a_i)\nabla_{\theta}\pi_{\theta_{k}}(a_i\vert s_i)\| \leq c_{\psi}\epsilon_Q,
\end{align*}

where $\epsilon_Q$ depends on $p$.

Let $\zeta = c_{\psi}\epsilon_Q + 2C_{\psi}\frac{r_{\max}}{1-\gamma}$. Then we have
\begin{align*}
    \|b \|^2= \| \mathbf{E}[C_{k,t}]\|^2 \leq  \mathbf{E}[\|C_{k,t}\|^2]  \leq  \zeta^2. 
\end{align*}

\section{Proof of the Smoothness and PL Condition of $h$ in Actor Update} \label{appendix:lemma34}
{ For the sake of tractability, we next give the following two lemmas about the smoothness and Polyak-Lojasiewicz Condition on the objective function $h(\cdot, \theta)$.
 \begin{lem}[$L$-smoothness] 

Suppose Assumption \ref{asu:pi} hold. Then function $h(\cdot, \theta)$ is bounded from below by an infimum $h^{\inf} \in \mathbb{R}$, differentiable and $\nabla h$ is L-Lipschitz, i.e., $ \|\nabla h(\omega, \theta)-\nabla h(\omega, \theta')\| \leq L\|\theta-\theta'\|, ~\forall~\omega,\theta,\theta'.$
\label{asu:smooth}
 \end{lem}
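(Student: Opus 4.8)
The plan is to split the lemma into three parts: $h(\omega,\cdot)$ is bounded below, it is differentiable, and its gradient in $\theta$ is Lipschitz. The boundedness part is immediate: since $\|\omega\|\le R$ and $\|\phi(s,a)\|\le 1$ (Assumption \ref{asu:wellcondition}), we have $|Q_\omega(s,a)|=|\omega^\top\phi(s,a)|\le R\le Q_{\max}$ for all $(s,a)$, so $h(\omega,\theta)$, being an average of such values, satisfies $h(\omega,\theta)\ge -Q_{\max}$ for every $\theta$; hence $h^{\inf}:=\inf_\theta h(\omega,\theta)$ is finite. For differentiability, writing $h(\omega,\theta)=\sum_{s,a}d^{\pi_\theta}(s)\,\pi_\theta(a\vert s)\,Q_\omega(s,a)$ on the finite state--action space, the map $\theta\mapsto\pi_\theta(a\vert s)$ is $C^1$ with derivative $\psi_\theta(a\vert s)$ (continuous by Assumption \ref{asu:pi}(1)), and $\theta\mapsto d^{\pi_\theta}$ is differentiable because the stationary distribution solves a linear system whose coefficients depend $C^1$-ly on $\theta$; differentiating under the finite sum yields a closed form for $\nabla_\theta h(\omega,\theta)$ as an expectation of $Q_\omega$ against bounded, Lipschitz score-type weights plus an occupancy-derivative term. (If one instead works with the semi-gradient $\mathbf{E}_{(s,a)\sim\rho^{\pi_\theta}}[Q_\omega(s,a)\psi_\theta(a\vert s)]$ used in the Actor step Eqn.~\eqref{eqn:Actorgradient}, the same estimates below apply verbatim.)

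For the Lipschitz bound, I would form $\nabla_\theta h(\omega,\theta)-\nabla_\theta h(\omega,\theta')$ and add and subtract the mixed term obtained by keeping the $\theta$-state/occupancy factors but replacing the score/policy factors by their $\theta'$ versions, splitting the difference into two groups. In the group where only the score-type weights change, the uniform bound $|Q_\omega|\le Q_{\max}$ and Assumption \ref{asu:pi}(1) give a bound $Q_{\max}L_\psi\|\theta-\theta'\|$ -- this recovers the constant $L_0=Q_{\max}L_\psi$ introduced just before Proposition \ref{theorem:Actor}. In the group where only the state/occupancy distributions change, using $|Q_\omega|\le Q_{\max}$ and $\|\psi_{\theta'}\|\le C_\psi$ bounds the terms by a multiple of $\|\rho^{\pi_\theta}-\rho^{\pi_{\theta'}}\|_{\mathrm{TV}}$ (and, for the occupancy-derivative piece, by a multiple of $\|\boldsymbol{P}_{\pi_\theta}-\boldsymbol{P}_{\pi_{\theta'}}\|$).

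The main obstacle is then the last estimate: Lipschitz continuity of the stationary occupancy in the policy parameter, $\|\rho^{\pi_\theta}-\rho^{\pi_{\theta'}}\|_{\mathrm{TV}}\le C_\rho\|\theta-\theta'\|$. This does not follow from Assumption \ref{asu:pi} alone; it needs the uniform ergodicity provided by the non-zero spectral gap $1-\lambda>0$ of Assumption \ref{asu:reward}(2). The standard route is a Markov-chain perturbation estimate: with a mixing time uniformly controlled by $1-\lambda$ one gets $\|d^{\pi_\theta}-d^{\pi_{\theta'}}\|_{\mathrm{TV}}\le c\,(1-\lambda)^{-1}\|\boldsymbol{P}_{\pi_\theta}-\boldsymbol{P}_{\pi_{\theta'}}\|$, while $\|\boldsymbol{P}_{\pi_\theta}-\boldsymbol{P}_{\pi_{\theta'}}\|\le 2\sup_s\|\pi_\theta(\cdot\vert s)-\pi_{\theta'}(\cdot\vert s)\|_{\mathrm{TV}}\le 2C_\pi\|\theta-\theta'\|$ by Assumption \ref{asu:pi}(3); since $\rho^{\pi_\theta}(s,a)=d^{\pi_\theta}(s)\pi_\theta(a\vert s)$, a triangle-inequality step gives the occupancy bound with $C_\rho$ depending only on $C_\pi$ and $\lambda$. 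Assembling the two groups then yields $\|\nabla_\theta h(\omega,\theta)-\nabla_\theta h(\omega,\theta')\|\le L\|\theta-\theta'\|$ with $L$ a finite constant built from $Q_{\max},L_\psi,C_\psi,C_\pi$ and $\lambda$, completing the lemma.
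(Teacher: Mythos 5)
Your proof is correct and is in fact considerably more careful than the paper's own argument. The paper disposes of this lemma in one line: it asserts $\|\nabla h(\omega,\theta)-\nabla h(\omega,\theta')\|\le Q_{\max}\|\nabla\pi_{\theta}-\nabla\pi_{\theta'}\|$ and sets $L=Q_{\max}L_{\psi}$ via Assumption \ref{asu:pi}(1) --- that is, it accounts only for the change in the score weights and silently treats the sampling distribution $\rho^{\pi_{\theta}}$ as if it did not depend on $\theta$ (boundedness and differentiability are not addressed at all). Your first group (freeze the occupancy, vary the score factors) is exactly the paper's entire argument and reproduces its constant $L_{0}=Q_{\max}L_{\psi}$. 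Your second group --- the distribution-shift term, controlled by the Markov-chain perturbation bound $\|d^{\pi_{\theta}}-d^{\pi_{\theta'}}\|_{\mathrm{TV}}\lesssim (1-\lambda)^{-1}\|\boldsymbol{P}_{\pi_{\theta}}-\boldsymbol{P}_{\pi_{\theta'}}\|$ combined with Assumption \ref{asu:pi}(3) --- is simply absent from the paper, and you are right both that it cannot be dropped and that it requires the spectral-gap condition of Assumption \ref{asu:reward}(2) on top of Assumption \ref{asu:pi}, so the lemma's stated hypothesis is strictly insufficient for the argument you (correctly) identify as necessary. The one place you are a bit quick is the occupancy-derivative piece that appears when you differentiate $d^{\pi_{\theta}}$ itself: making that term Lipschitz needs a perturbation estimate on $\nabla_{\theta}d^{\pi_{\theta}}$, not merely on $d^{\pi_{\theta}}$, which is another layer of the same ergodicity argument; your fallback to the semi-gradient form actually used in Eqn.~\eqref{eqn:Actorgradient} sidesteps this cleanly and is consistent with how the rest of the paper's analysis uses $\nabla h$. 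The net effect of your route is an honest constant $L$ depending on $C_{\pi}$ and $\lambda$ in addition to $Q_{\max}$ and $L_{\psi}$, versus the paper's optimistic $L=Q_{\max}L_{\psi}$.
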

\begin{lem}[$\mu$-PL] If $\nabla h(\omega,\theta) \neq 0$,  then we have $ \frac{1}{2}\|\nabla h(\omega,\theta)\| \geq \mu (h(\omega, \theta^{*})-h(\omega, \theta))\geq 0, \forall~\theta,\omega$.
 \label{asu:PL}
 \end{lem}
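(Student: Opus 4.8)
The plan is to read this Polyak--{\L}ojasiewicz-type inequality off the very definitions of the constants $\mu$, $g_{\min}$, $h_{\max}$ and $h_{\max}^{*}$, rather than from any curvature computation. Recall $h(\omega,\theta)=\mathbf{E}_{(s,a)\sim\rho^{\pi_\theta}}[Q_\omega(s,a)]$, that $\theta^{*}=\theta^{*}(\omega)$ is by \eqref{eqn:Actorupdate} a global maximizer of $h(\omega,\cdot)$, and that $|Q_\omega(s,a)|=|\omega^{\top}\phi(s,a)|\le R$ under Assumption~\ref{asu:wellcondition}, so $h(\omega,\cdot)$ is bounded (with infimum $h^{\inf}$, cf.\ Lemma~\ref{asu:smooth}) and is differentiable with a uniformly bounded gradient by Assumption~\ref{asu:pi}.

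I would split the claim into its two inequalities. The right-hand one, $\mu\,(h(\omega,\theta^{*})-h(\omega,\theta))\ge 0$, is immediate: optimality of $\theta^{*}$ gives $h(\omega,\theta)\le h(\omega,\theta^{*})$ for all $\theta$, and $\mu\ge 0$ since $h_{\max}^{*}=h(\omega,\theta^{*})\ge\sup_{\theta\ne\theta^{*}}h(\omega,\theta)=h_{\max}$ and $g_{\min}\ge 0$. For the left-hand inequality, fix $(\omega,\theta)$ with $\nabla h(\omega,\theta)\ne 0$. Because $h(\omega,\cdot)$ is differentiable and $\theta^{*}$ is an interior maximizer (we use that the parameter space is open, as is implicit in the gradient updates \eqref{eqn:Actorgradient}), one has $\nabla h(\omega,\theta^{*})=0$, hence $\theta\ne\theta^{*}$ and therefore $\|\nabla h(\omega,\theta)\|\ge g_{\min}=\min_{\theta'\ne\theta^{*}}\|\nabla h(\omega,\theta')\|$. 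Combining with the upper bound $h(\omega,\theta^{*})-h(\omega,\theta)\le h_{\max}^{*}-h_{\max}$ on the suboptimality then yields $\mu\,(h(\omega,\theta^{*})-h(\omega,\theta))=\frac{g_{\min}}{h_{\max}^{*}-h_{\max}}\,(h(\omega,\theta^{*})-h(\omega,\theta))\le g_{\min}\le\|\nabla h(\omega,\theta)\|$, from which the stated bound with the extra factor $\tfrac12$ follows once the constant $\mu$ is fixed appropriately.

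The main obstacle is the uniform comparison $h(\omega,\theta^{*})-h(\omega,\theta)\le h_{\max}^{*}-h_{\max}$ together with keeping $\mu$ strictly positive: this requires $\theta^{*}$ to be the unique global maximizer of $h(\omega,\cdot)$ (so that the gap $h_{\max}^{*}-h_{\max}$ is strictly positive) and the relevant parameter region to be compact (so that the $\min$ and $\sup$ defining $g_{\min}$, $h_{\max}$ are attained, and the suboptimality is actually controlled by a finite constant---otherwise one should use the range $h_{\max}^{*}-h^{\inf}$ and re-absorb the factor $\tfrac12$); on the degenerate boundary where $h(\omega,\cdot)$ is nearly flat away from $\theta^{*}$ the constant $\mu$ blows up. I would therefore state the lemma over the compact region visited by the A-C iterates, consistently with the local ball $S_{\epsilon}(\theta_{t+1}^{*})$ invoked in Proposition~\ref{proposition:warmstart}, on which Assumption~\ref{asu:pi} and the bound $|Q_\omega|\le R$ supply all the needed boundedness. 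A secondary technical point worth spelling out carefully is the passage from ``$\nabla h(\omega,\theta)\ne 0$'' to ``$\theta\ne\theta^{*}$'': if $\theta^{*}$ lies on the boundary of the parameter set one must instead use the first-order optimality (KKT) condition there, which does not directly give $g_{\min}>0$ and forces a slightly different accounting of the constants.
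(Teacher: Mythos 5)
Your overall strategy is exactly the paper's: the appendix proof of this lemma is a one-liner that sets $g_{\min}=\min_{\theta\neq\theta^{*}}\|\nabla h(\omega,\theta)\|$, observes $\|\nabla h(\omega,\theta)\|\ge g_{\min}$ whenever the gradient is nonzero, and declares $\mu=g_{\min}/(h_{\max}^{*}-h_{\max})$ to do the job. You reconstruct that argument faithfully and, to your credit, you make explicit the step the paper leaves silent. Unfortunately that step, as you write it, is false: the comparison $h(\omega,\theta^{*})-h(\omega,\theta)\le h_{\max}^{*}-h_{\max}$ goes the wrong way. Since $h_{\max}=\max_{\theta\neq\theta^{*}}h(\omega,\theta)$ is a \emph{maximum}, every $\theta\neq\theta^{*}$ satisfies $h(\omega,\theta)\le h_{\max}$ and hence $h(\omega,\theta^{*})-h(\omega,\theta)\ge h_{\max}^{*}-h_{\max}$; what you have is a lower bound on the suboptimality, not an upper bound. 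Consequently the chain $\mu\,(h(\omega,\theta^{*})-h(\omega,\theta))\le g_{\min}\le\|\nabla h(\omega,\theta)\|$ does not follow with $\mu=g_{\min}/(h_{\max}^{*}-h_{\max})$: for any $\theta$ with $h(\omega,\theta)$ strictly below $h_{\max}$ the left-hand side exceeds $g_{\min}$.

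The repair is the one you already gesture at in your caveats: replace the denominator by the full range of $h(\omega,\cdot)$, e.g.\ take $\mu=g_{\min}/\bigl(2(h_{\max}^{*}-h^{\inf})\bigr)$ (the factor $2$ absorbing the $\tfrac12$ in the statement), so that $h(\omega,\theta^{*})-h(\omega,\theta)\le h_{\max}^{*}-h^{\inf}$ genuinely holds and the desired inequality reads off immediately. With that substitution your argument is complete; the remaining ingredients --- $\nabla h(\omega,\theta^{*})=0$ at an interior maximizer, hence $\nabla h(\omega,\theta)\neq 0\Rightarrow\theta\neq\theta^{*}$, and the attainment/positivity issues for $g_{\min}$ over a compact parameter region --- are handled correctly in your write-up, and indeed more carefully than in the paper. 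Be aware that the paper itself commits to the denominator $h_{\max}^{*}-h_{\max}$ both in the main-text definition of $\mu$ and in the appendix, so you have in effect reproduced, diagnosed, and only half-fixed a gap that is present in the source; note also that the later descent-lemma recursion actually invokes the PL property in the squared form $\tfrac12\|\nabla h(\omega,\theta)\|^{2}\ge\mu(h(\omega,\theta^{*})-h(\omega,\theta))$, a discrepancy that neither the lemma statement nor your proof addresses.
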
   }
\begin{itemize}
    \item {[Lemma 3]} Given Critic parameter $\omega$ in the objective function, it can be seen that $ \|\nabla h(\omega, \theta)-\nabla h(\omega, \theta')\|  \leq  Q_{\max} \|\nabla \pi_{\theta} - \nabla \pi_{\theta’}\|. $ Since value function is bounded (e.g., $Q_{\max}$) and the score function $\nabla \pi_{\theta} $ is $L_{\psi}$-smooth (ref. Assumption 6), the constant in Assumption 4 can be easily determined by $L=Q_{\max}L_{\psi}$. 
    \item	{[Lemma 4]} Since the objective function is finite, let $h_{\max} = \max_{\theta \neq \theta^{*}}h(\theta,\omega), h_{\max}^{*} = \max_{\theta = \theta^{*}}h(\theta,\omega), $. In the case when the gradient is non-zero, let $g_{\min} = \min_{\theta \neq \theta^{*}} \nabla h$, then we can determine $\mu = \frac{ g_{\min} }{h_{\max}^{*} -  h_{\max} }  \geq 0$. 
\end{itemize}

\section{Proof of  \cref{theorem:Actor}}\label{appendix:theorem2}
Observe that the Actor updates use the  biased stochastic gradient methods (SGD). For simplicity, we adopt the following notations to study the Actor update:
\begin{align}
    \theta_{k+1} = \theta_{k} + \alpha(\nabla h(\omega, \theta_k) + b(t) + \beta(t)  ).
    \label{appendix:eqn:actor}
\end{align}
where $b(t) =\mathbf{E}[C_{k,t}] $ is the bias, $\alpha$ is the step size, and 
\[
\beta = f_{k,t} + C_{k,t} - \mathbf{E} [C_{k,t}]  - \mathbf{E}[f_{k,t}]
\]
is the zero-mean noise. Note that the objective function $h(\omega,\theta_k)$ is a function of $\theta$. Denote the optimal value (in this iteration of the Actor update) by $h(\omega, \theta^{*})$.

We  prove the following lemma on the modified version of the descent lemma for smooth function (cf. \cite{ajalloeian2020convergence,nesterov2003introductory}).
\begin{lem}
Suppose Assumption \ref{asu:smooth} and \ref{asu:PL} hold. Then, for any stepsize $\alpha \leq \frac{1}{(M_n+1)L}$, the following inequality holds:
\begin{align*}
    \mathbf{E}[h(\omega,\theta_{k+1}) - h(\omega, \theta_k) \vert \theta_k] \leq \frac{\alpha}{2}\zeta^2 + \frac{\alpha^2L}{2}\sigma^2 - \frac{\alpha}{2}\|\nabla h(\omega, \theta_k)\|^2.
\end{align*}
\end{lem}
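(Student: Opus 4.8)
The plan is to carry out the textbook one-step analysis of biased stochastic gradient ascent, feeding in the three ingredients already in hand: the $L$-smoothness of $h(\omega,\cdot)$ from Lemma~\ref{asu:smooth}, the $\sigma^2$-bounded-noise bound of Lemma~\ref{lemma:boundnoise}, and the $\zeta^2$-bounded-bias bound of Lemma~\ref{lemma:boundbias}. Writing the Actor update~\eqref{appendix:eqn:actor} as $\theta_{k+1} = \theta_k + \alpha\, d_k$ with ascent direction $d_k = \nabla h(\omega,\theta_k) + b + \beta$, where $b$ and $\nabla h(\omega,\theta_k)$ are $\theta_k$-measurable and $\mathbf{E}[\beta\mid\theta_k]=0$ by construction of the fresh rollout that forms the gradient estimate, I would first apply the quadratic bound implied by $L$-Lipschitzness of $\nabla h$:
\[
 h(\omega,\theta_{k+1}) \ge h(\omega,\theta_k) + \alpha\,\langle \nabla h(\omega,\theta_k), d_k\rangle - \tfrac{L\alpha^2}{2}\|d_k\|^2 .
\]

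Next I would take $\mathbf{E}[\,\cdot\mid\theta_k]$. Because $\mathbf{E}[\beta\mid\theta_k]=0$, the linear term collapses to $\alpha\langle\nabla h(\omega,\theta_k),\,\nabla h(\omega,\theta_k)+b\rangle$, while the quadratic term expands as $\mathbf{E}[\|d_k\|^2\mid\theta_k]=\|\nabla h(\omega,\theta_k)+b\|^2+\mathbf{E}[\|\beta\|^2\mid\theta_k]$. Invoking Lemma~\ref{lemma:boundnoise} (with $M_n=1$) gives $\mathbf{E}[\|\beta\|^2\mid\theta_k]\le M_n\|\nabla h(\omega,\theta_k)+b\|^2+\sigma^2$, hence $\mathbf{E}[\|d_k\|^2\mid\theta_k]\le (M_n+1)\|\nabla h(\omega,\theta_k)+b\|^2+\sigma^2$. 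Substituting and using the stepsize restriction $\alpha\le \tfrac{1}{(M_n+1)L}$ — which coincides with the earlier requirement $\alpha\le \tfrac{1}{2L_0}$ since $M_n=1$ and $L=Q_{\max}L_\psi=L_0$ — we get $\tfrac{L\alpha^2(M_n+1)}{2}\le \tfrac{\alpha}{2}$, so
\[
 \mathbf{E}[h(\omega,\theta_{k+1})\mid\theta_k]\ \ge\ h(\omega,\theta_k)+\alpha\,\langle\nabla h(\omega,\theta_k),\nabla h(\omega,\theta_k)+b\rangle-\tfrac{\alpha}{2}\|\nabla h(\omega,\theta_k)+b\|^2-\tfrac{L\alpha^2}{2}\sigma^2 .
\]

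To finish, I would expand $\|\nabla h(\omega,\theta_k)+b\|^2=\|\nabla h(\omega,\theta_k)\|^2+2\langle\nabla h(\omega,\theta_k),b\rangle+\|b\|^2$: the two $\langle\nabla h,b\rangle$ contributions cancel, and the linear/quadratic combination collapses to $\tfrac{\alpha}{2}\|\nabla h(\omega,\theta_k)\|^2-\tfrac{\alpha}{2}\|b\|^2$. Bounding $\|b\|^2\le\zeta^2$ by Lemma~\ref{lemma:boundbias} and rearranging yields $\mathbf{E}[h(\omega,\theta_{k+1})-h(\omega,\theta_k)\mid\theta_k]\ge\tfrac{\alpha}{2}\|\nabla h(\omega,\theta_k)\|^2-\tfrac{\alpha}{2}\zeta^2-\tfrac{\alpha^2L}{2}\sigma^2$, which is the descent-lemma inequality for the maximization objective $h$; written for the sub-optimality $h(\omega,\theta_k^{*})-h(\omega,\theta_k)$ (with $h(\omega,\theta_k^{*})$ held fixed over this inner gradient step) it is exactly the stated bound. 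It is this inequality — after substituting the PL inequality of Lemma~\ref{asu:PL} to replace $\|\nabla h(\omega,\theta_k)\|^2$ by $2\mu\,(h(\omega,\theta_k^{*})-h(\omega,\theta_k))$ — that produces the geometric contraction with factor $\Upsilon=(1-\alpha\mu)^{N_a}$ over the $N_a$ inner iterations in Proposition~\ref{theorem:Actor}. The only delicate point, and it is bookkeeping rather than a genuine obstacle, is keeping the conditioning straight (so that $b$ and $\nabla h(\omega,\theta_k)$ are constants while $\mathbf{E}[\beta\mid\theta_k]=0$) and being consistent about the ascent-versus-descent orientation; once those are fixed, the derivation is the routine expansion above.
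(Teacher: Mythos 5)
Your proof is correct and is exactly the standard one-step analysis of biased SGD (the quadratic bound from $L$-smoothness, the bias--variance split of the stochastic direction with the $\langle\nabla h, b\rangle$ cross terms cancelling, and the stepsize condition absorbing the $(M_n+1)$ factor) that the paper does not actually write out but delegates to the cited references; your derivation supplies those missing details and lands on the inequality in the form actually used in the ensuing recursion. You are also right to flag the orientation issue: the lemma is stated in the minimization (descent) convention while the Actor performs ascent on $h$, and your resolution --- reading the bound as a one-step bound on the change of the suboptimality $h(\omega,\theta_k^{*})-h(\omega,\theta_k)$ --- is the correct reconciliation with the paper's intent.
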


Observe that under the PL-condition (Assumption \ref{asu:PL}), we have the following recursion:
\begin{align}
    \mathbf{E} [h(\omega, \theta_{k+1}) - h(\omega, \theta^{*})\vert \theta_k] \leq (1-\alpha \mu)   \mathbf{E} [h(\omega, \theta_{k}) - h(\omega, \theta^{*})] +  \frac{\alpha}{2}\zeta_p^2 + \frac{\alpha^2L}{2}\sigma^2 ,
    \label{appendix:eqn:recursion}
\end{align}
where $\zeta_p = c_{\psi}\epsilon_p + 2C_{\psi}\frac{r_{\max}}{1-\gamma}$ is defined in \cref{lemma:boundbias} and depends on $p$.

By applying Eqn. \eqref{appendix:eqn:recursion} recursively,  we obtain the desired results in Proposition \ref{theorem:Actor}.
\begin{align*}
    \mathbf{E}_{\theta}[\|h(\omega,{\theta^{*}_{t}})-h(\omega,\theta_{t})\|\vert \theta_{t-1}] \leq (1-\alpha\mu)^{N_a}(h(\omega,{\theta^{*}_t})- h(\omega,\theta_{t-1}))  + \frac{\zeta_p^2+2\alpha L \sigma^2}{2\mu},
\end{align*}

\section{Proof of \cref{proposition:warmstart}}\label{appendix:theorem3}
We first prove  the following lemma on the relation between Actor parameter $\theta$ and the objective function $h(\omega, \theta)$.
\begin{lem}
 There exist a contant $L_h>0$ and an open ball $S_{\epsilon}(\theta^{*}_t)$ such that for any $\theta_t \in B_{\epsilon}(\theta^{*}_t)$ the following holds for any $t > 0$. 
  \begin{align*}
    \mathbf{E}[ \|\pi_{\theta_t} - \pi^{*}\|_{\operatorname{TV}}]\leq L_h \mathbf{E}[ h(\omega, \theta^{*}_t)-h(\omega,\theta_t)].
 \end{align*}
 \end{lem}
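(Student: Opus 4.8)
The plan is to prove the lemma in two stages. First, bound the total‑variation distance between $\pi_{\theta_t}$ and $\pi^{*}=\pi_{\theta_t^{*}}$ by the Euclidean distance $\|\theta_t-\theta_t^{*}\|$ in parameter space; second, bound that Euclidean distance by the objective gap $h(\omega,\theta_t^{*})-h(\omega,\theta_t)$ through a local error‑bound argument that is valid precisely on the ball $S_{\epsilon}(\theta_t^{*})$. Chaining the two stages and taking expectations then gives the claim with $L_h$ equal to the product of the two constants.

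For the first stage I would invoke condition (3) of Assumption \ref{asu:pi}, which is stated but not yet used and is clearly intended here: $\|\pi_{\theta}(\cdot\vert s)-\pi_{\theta'}(\cdot\vert s)\|_{\mathrm{TV}}\le C_{\pi}\|\theta-\theta'\|$ for every state $s$. Applying it with $\theta=\theta_t$, $\theta'=\theta_t^{*}$, then averaging over the state distribution underlying $\|\cdot\|_{\mathrm{TV}}$ for policies, and finally taking expectation, yields $\mathbf{E}[\|\pi_{\theta_t}-\pi^{*}\|_{\mathrm{TV}}]\le C_{\pi}\,\mathbf{E}[\|\theta_t-\theta_t^{*}\|]$. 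This step is routine once the state‑averaging convention for the policy TV norm is fixed.

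The crux is the second stage: showing there is a constant $c_0>0$ and a radius $\epsilon$ such that $\|\theta_t-\theta_t^{*}\|\le c_0\,(h(\omega,\theta_t^{*})-h(\omega,\theta_t))$ whenever $\theta_t\in S_{\epsilon}(\theta_t^{*})$. I would use that $\theta_t^{*}$ maximizes $h(\omega,\cdot)$, so $\nabla_{\theta}h(\omega,\theta_t^{*})=0$, together with the $L$‑smoothness of $h$ (Lemma \ref{asu:smooth}) and the $\mu$‑PL inequality (Lemma \ref{asu:PL}). The PL inequality, combined with smoothness and stationarity at $\theta_t^{*}$, forces the objective gap to control the parameter distance on a neighborhood (a Łojasiewicz/error‑bound‑type estimate); restricting to $S_{\epsilon}(\theta_t^{*})$ lets one absorb the problem constants $L,\mu,C_{\pi}$ into a single $c_0$, and setting $L_h:=C_{\pi}c_0$ gives the stated inequality. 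Putting the two stages together and using $\mathbf{E}[\,\cdot\,]\le\mathbf{E}[\,\cdot\,]$ monotonicity completes the argument.

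I expect the error‑bound step to be the main obstacle: relating $\|\theta_t-\theta_t^{*}\|$ to $h(\omega,\theta_t^{*})-h(\omega,\theta_t)$ near a maximizer is delicate because of the local geometry of $h$ (non‑degeneracy of the maximizer, and the exact exponent in the growth of the gap), and one must check that the constant can be chosen uniformly over the iterations $t$ (since $\mu$ depends on the current Critic parameter $\omega_{t+1}$). A safer alternative that also yields the linear dependence — and sidesteps parameter‑space geometry — is to expand $h(\omega,\theta_t^{*})-h(\omega,\theta_t)$ via the performance‑difference identity and lower bound it by $\mathbf{E}_{s}[\|\pi_{\theta_t}(\cdot\vert s)-\pi^{*}(\cdot\vert s)\|_{\mathrm{TV}}]$ using an advantage/action‑gap estimate together with a distribution‑mismatch coefficient; I would keep this as a fallback should the parameter‑space error bound require extra structural assumptions.
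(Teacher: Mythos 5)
Your proposal follows essentially the same two-stage route as the paper: the paper's proof first establishes $h(\omega,\theta^{*}_t)-h(\omega,\theta_t)\geq (L_g-L_o)\|\theta^{*}_t-\theta_t\|$ on a small ball via a Taylor expansion of $h$ around $\theta_t$ combined with the PL-derived lower bound $\|\nabla h(\omega,\theta_t)\|\geq 2\mu\,(h(\omega,\theta^{*}_t)-h(\omega,\theta_t))=:L_g$ (absorbing the remainder as $|o(\|\theta^{*}_t-\theta_t\|)|\leq L_o\|\theta^{*}_t-\theta_t\|$ with $L_o<L_g$), and then converts parameter distance to total variation via condition (3) of Assumption 3.7, yielding $L_h=C_{\pi}/(L_g-L_o)$. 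The obstacle you flag in your final paragraph is real and is precisely the step the paper treats loosely: near a maximizer the gradient vanishes, so treating $L_g$ as a positive constant (and lower-bounding the inner product $|\nabla h(\omega,\theta_t)^{\top}(\theta^{*}_t-\theta_t)|$ by $L_g\|\theta^{*}_t-\theta_t\|$) requires exactly the kind of non-degeneracy you anticipate; your performance-difference fallback is a genuinely different and arguably more robust route, but it is not the one the paper takes.
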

 \begin{proof}
     By Taylor's expansion, we have 
    \begin{align*}
        h(\omega, \theta^{*}) = h(\omega,\theta_t) + \nabla h(\omega,\theta_t)(\theta^{*}_t-\theta_t) + {o}(\|\theta^{*}_t-\theta_t\|).
    \end{align*}
    Since $h(\omega, \cdot)$ satisfies Polyak-Lojasiewicz Condition, it follows that
    \begin{align*}
         \|\nabla h(\omega,\theta)\| \geq 2\mu (h(\omega, \theta^{*})-h(\omega, \theta)) := L_g \operatorname{~for~all~}\theta.
    \end{align*}
    Note that $L_g > 0 $ when $\theta \neq \theta^{*}$.
     Then we have that
    \begin{align*}
    { h(\omega, \theta^{*}_t) - h(\omega,\theta_t) } =& |  \nabla h(\omega,\theta_t)(\theta^{*}-\theta_t) + {o}(\|\theta^{*}-\theta_t\|)|\\
    \geq & |  \nabla h(\omega,\theta_t)(\theta^{*}_t-\theta_t) | - |{o}(\|\theta^{*}-\theta_t\|) |\\
    \geq & L_g\|\theta^{*}_t-\theta_t\|-L_o\| \theta^{*}_t-\theta_t\|\\
    = & (L_g-L_o)  \|\theta^{*}_t-\theta_t\|,
    \end{align*}

    where the last inequality uses the fact that there exists $\epsilon$ such that when $\|\theta_t- \theta^{*}_t\| \leq \epsilon$,
    \begin{align*}
        |{o}(\|\theta^{*}_t-\theta_t\|)| \leq  L_o\|\theta^{*}_t-\theta_t\|,~~L_o<L_g.
    \end{align*}
    
    Taking expectation over both sides gives
    \begin{align*}
        \mathbf{E}[{ h(\omega, \theta^{*}_t) - h(\omega,\theta_t) } ]
    = & (L_g-L_o)\mathbf{E}[  \|\theta^{*}_t-\theta_t\|]\\
    \geq & (L_g-L_o) \|\mathbf{E}[\theta^{*}_t-\theta_t] \|.
    \end{align*}
    
    Then we conclude that the parameter of interest $L_h$,
    \begin{align*}
        L_h = \frac{C_{\pi}}{L_g-L_o} >0 .
    \end{align*}
    where $C_{\pi}$ is defined in Assumption \ref{asu:pi}.
 \end{proof}

We are ready to present the proof of Proposition \ref{proposition:warmstart}. Based on the definition of $\mathcal{E}_{\hat{J},t}$ and $\mathcal{E}_{\hat{T},t}$, we derive the upper bound for each term respectively. 
\begin{align*}
\mathcal{E}_{\hat{J},t} =& (\boldsymbol{I}-\gamma \boldsymbol{P}_{\hat{\pi}_{t+1}})^{-1}-(\boldsymbol{I}-\gamma \boldsymbol{P}_{\widetilde{\pi}_{t+1}})^{-1}\\
=& (\boldsymbol{I}-\gamma \boldsymbol{P}_{\widetilde{\pi}_{t+1}})^{-1} \left( \gamma  \boldsymbol{P}_{\widetilde{\pi}_{t+1}} -\gamma \boldsymbol{P}_{\hat{\pi}_{t+1}}  \right)(\boldsymbol{I}-\gamma \boldsymbol{P}_{\hat{\pi}_{t+1}})^{-1}.
\end{align*}  

Observe that value function $\boldsymbol{v}$ is smooth and upper bounded. We denote the smoothness parameter by $L_v$,  the upper bound by $\|\boldsymbol{v}\| \leq V^{\max}$, and the smoothness of the reward function by $L_r$.

By taking the norm of both sides and applying Assumption \ref{asu:reward}, \ref{asu:pi} and \ref{asu:J}, we obtain 
\begin{align*}
   \| \mathcal{E}_{\hat{J},t}\| \leq M^2L_JL_v\|{\widetilde{\pi}_{t+1}}-{\hat{\pi}_{t+1}}\|_{\operatorname{TV}}.
\end{align*}

Further,  observe that
\begin{align*}
\mathcal{E}_{\hat{T},t} &=  \boldsymbol{r}_{\hat{\pi}_{t+1}}+\gamma \boldsymbol{P}_{\hat{\pi}_{t+1}} \boldsymbol{v}^{\hat{\pi}_{t}}-(\boldsymbol{r}_{\widetilde{\pi}_{t+1}}+\gamma \boldsymbol{P}_{\widetilde{\pi}_{t+1}} \boldsymbol{v}^{\hat{\pi}_{t}} ),\\
 &=  \boldsymbol{r}_{\hat{\pi}_{t+1}}-\boldsymbol{r}_{\widetilde{\pi}_{t+1}} + \gamma (\boldsymbol{P}_{\hat{\pi}_{t+1}}- \boldsymbol{P}_{\widetilde{\pi}_{t+1}}) \boldsymbol{v}^{\hat{\pi}_{t}}. 
\end{align*}

By taking the norm of both sides and applying Assumption \ref{asu:J}, we obtain 
\begin{align*}
\|\mathcal{E}_{\hat{T},t}\|
 &= \| \boldsymbol{r}_{\hat{\pi}_{t+1}}-\boldsymbol{r}_{\widetilde{\pi}_{t+1}}\| + \|\gamma (\boldsymbol{P}_{\hat{\pi}_{t+1}}- \boldsymbol{P}_{\widetilde{\pi}_{t+1}}) \boldsymbol{v}^{\hat{\pi}_{t}} \|\\
 \leq & (L_r + \gamma V^{\max})\|{\widetilde{\pi}_{t+1}}-{\hat{\pi}_{t+1}}\|_{\operatorname{TV}}\\
 := & L_{T}^{\max}.
\end{align*}

Recall the definition of $\mathcal{E}_t$ is given as 
\begin{align*}
    \mathcal{E}_{t} 
    =& -\left( \mathcal{E}_{\hat{J},t}(\boldsymbol{v}^{\hat{\pi}_{t}}-{T}(\boldsymbol{v}^{\hat{\pi}_{t}}))+{\boldsymbol{J}}_{\hat{\boldsymbol{v}}_{t}}^{-1}\mathcal{E}_{\hat{T},t}+\mathcal{E}_{\hat{T},t}\mathcal{E}_{\hat{J},t}\right).
\end{align*}
Taking the norm and expectation on both sides yields that
\begin{align*}
    \|  \mathbf{E}[\mathcal{E}_{t}] \| \leq \mathbf{E}[\|  \mathcal{E}_{t} \|]
    =& \mathbf{E}\left[\lVert \mathcal{E}_{\hat{J},t}(\boldsymbol{v}^{\hat{\pi}_{t}}-{T}(\boldsymbol{v}^{\hat{\pi}_{t}}))+{\boldsymbol{J}}_{\hat{\boldsymbol{v}}_{t}}^{-1}\mathcal{E}_{\hat{T},t}+\mathcal{E}_{\hat{T},t}\mathcal{E}_{\hat{J},t}\rVert \right]\\
    \leq & L_{\mathcal{E}}\mathbf{E}[\|{\widetilde{\pi}_{t+1}}-{\hat{\pi}_{t+1}}\|_{\operatorname{TV}}],
\end{align*}
where $ L_{\mathcal{E}}= (2V^{\max}K + L_{T}^{\max}) M^2L_vL_J + M(L_r+\gamma V^{\max}) > 0$ is a constant. Since $\widetilde{\pi}_{t+1} = \pi^{*}_{t+1}$ is the greedy solution, we thereby complete the proof of  Proposition \ref{proposition:warmstart}.

\section{Proof of  \cref{theorem:upper}}\label{appendix:theorem5}
Based on the update rule of the value function, we have
    \begin{align*}
    {\boldsymbol{v}}^{*}-{\boldsymbol{v}}^{\hat{\pi}_{t+1}} = & {\boldsymbol{J}}_{\hat{\boldsymbol{v}}_{t}}^{-1}{\boldsymbol{J}}_{\hat{\boldsymbol{v}}_{t}}( {\boldsymbol{v}}^{*}-\boldsymbol{v}^{\hat{\pi}_{t}})+{\boldsymbol{J}}_{\hat{\boldsymbol{v}}_{t}}^{-1} \left(\boldsymbol{v}^{\hat{\pi}_{t}}-{T}(\boldsymbol{v}^{\hat{\pi}_{t}})\right) + \mathcal{E}_{t}\\
        \leq & {\boldsymbol{J}}_{\hat{\boldsymbol{v}}_{t}}^{-1}{\boldsymbol{J}}_{\hat{\boldsymbol{v}}_{t}} (  \boldsymbol{v}^{*} - \boldsymbol{v}^{\hat{\pi}_{t}}) - {\boldsymbol{J}}_{\hat{\boldsymbol{v}}_{t}}^{-1}{\boldsymbol{J}}_{{\boldsymbol{v}}^{*}} (\boldsymbol{v}^{*}-\boldsymbol{v}^{\hat{\pi}_{t}}) - \mathcal{E}_{t}\\
        \leq & {\boldsymbol{J}}_{\hat{\boldsymbol{v}}_{t}}^{-1}\left[ {\boldsymbol{J}}_{\hat{\boldsymbol{v}}_{t}}-{\boldsymbol{J}}_{{\boldsymbol{v}}^{*}}\right](  \boldsymbol{v}^{*}-  \boldsymbol{v}^{\hat{\pi}_{t}}) + \mathcal{E}_{t},
    \end{align*}
  which implies  that
    \begin{align*}
        \mathbf{E}_{\hat{\pi}_{t+1}}[{\boldsymbol{v}}^{*}-{\boldsymbol{v}}^{\hat{\pi}_{t+1}} \vert {\boldsymbol{v}}^{\hat{\pi}_{t}}] \leq \mathbf{E}_{\hat{\pi}_{t+1}}[{\boldsymbol{J}}_{\hat{\boldsymbol{v}}_{t}}^{-1}]\left[ {\boldsymbol{J}}_{\hat{\boldsymbol{v}}_{t}}-{\boldsymbol{J}}_{{\boldsymbol{v}}^{*}}\right](  \boldsymbol{v}^{*}-  \boldsymbol{v}^{\hat{\pi}_{t}}) + \mathcal{B}({t}).
    \end{align*}

    Then, taking expectation over $\hat{\pi}_t$ on both sides gives us,
    \begin{align}
        \mathbf{E}_{\hat{\pi}_{t+1},\hat{\pi}_{t}}[{\boldsymbol{v}}^{*}-{\boldsymbol{v}}^{\hat{\pi}_{t+1}} \vert {\boldsymbol{v}}^{\hat{\pi}_{t}}] \leq \mathbf{E}_{\hat{\pi}_{t+1},\hat{\pi}_{t}}[{\boldsymbol{J}}_{\hat{\boldsymbol{v}}_{t}}^{-1}\left[ {\boldsymbol{J}}_{\hat{\boldsymbol{v}}_{t}}-{\boldsymbol{J}}_{{\boldsymbol{v}}^{*}}\right]]\mathbf{E}_{\hat{\pi}_{t}}[(  \boldsymbol{v}^{*}-  \boldsymbol{v}^{\hat{\pi}_{t}})] + \mathcal{B}({t})
        \label{app:eqn:upper}
    \end{align}

    Let $J_t := {\boldsymbol{J}}_{\hat{\boldsymbol{v}}_{t}}^{-1}\left[ {\boldsymbol{J}}_{\hat{\boldsymbol{v}}_{t}}-{\boldsymbol{J}}_{{\boldsymbol{v}}^{*}}\right]$. It follows from Assumption \ref{asu:J} that
    \begin{align*}
        \|J_t\| \leq ML_J\| \boldsymbol{v}^{\hat{\pi}_{t}}-  \boldsymbol{v}^{*}\|^{q}:=L\| \boldsymbol{v}^{\hat{\pi}_{t}}-  \boldsymbol{v}^{*}\|^{q}.
    \end{align*}
    where $L=ML_J$ and $L_J$ is defined in Assumption \ref{asu:J}.

    Meanwhile, we have,
    \begin{align*}
        \|\mathbf{E}[J_t]\| \leq& \mathbf{E}[\|J_t\|] \\
        \leq & L \mathbf{E}[\| \boldsymbol{v}^{\hat{\pi}_{t}}-  \boldsymbol{v}^{*}\|^{q}]\\
        \leq & L\|\mathbf{E}[ \boldsymbol{v}^{\hat{\pi}_{t}}-  \boldsymbol{v}^{*}]\|^q,
    \end{align*}
where the last inequality follows Jensen's inequality.

Then, taking norm on both sides of the inequality \ref{app:eqn:upper} gives 
    \begin{align*}
        \|\mathbf{E}_{\hat{\pi}_{t+1},\hat{\pi}_{t}}[{\boldsymbol{v}}^{*}-{\boldsymbol{v}}^{\hat{\pi}_{t+1}} \vert {\boldsymbol{v}}^{\hat{\pi}_{t}}]\| 
        \leq &\|\mathbf{E}_{\hat{\pi}_{t+1},\hat{\pi}_{t}}[{\boldsymbol{J}}_{\hat{\boldsymbol{v}}_{t}}^{-1}\left[ {\boldsymbol{J}}_{\hat{\boldsymbol{v}}_{t}}-{\boldsymbol{J}}_{{\boldsymbol{v}}^{*}}\right]]\mathbf{E}_{\hat{\pi}_{t}}[(  \boldsymbol{v}^{*}-  \boldsymbol{v}^{\hat{\pi}_{t}})] + \mathcal{B}({t})\|\\
        \leq &\|\mathbf{E}_{\hat{\pi}_{t+1},\hat{\pi}_{t}}[{\boldsymbol{J}}_{\hat{\boldsymbol{v}}_{t}}^{-1}\left[ {\boldsymbol{J}}_{\hat{\boldsymbol{v}}_{t}}-{\boldsymbol{J}}_{{\boldsymbol{v}}^{*}}\right]]\|\|\mathbf{E}_{\hat{\pi}_{t}}[(  \boldsymbol{v}^{*}-  \boldsymbol{v}^{\hat{\pi}_{t}})]\| +\| \mathcal{B}({t})\|\\
        = & \|\mathbf{E}_{\hat{\pi}_{t+1},\hat{\pi}_{t}}[J_t]\|\|\mathbf{E}_{\hat{\pi}_{t}}[(  \boldsymbol{v}^{*}-  \boldsymbol{v}^{\hat{\pi}_{t}})]\| +\| \mathcal{B}({t})\|\\
        \leq & L\|\mathbf{E}_{\hat{\pi}_{t}}[(  \boldsymbol{v}^{*}-  \boldsymbol{v}^{\hat{\pi}_{t}})]\|^{1+q} + \|\mathcal{B}(t)\|
    \end{align*}

Let $a_t = \|\mathbf{E}_{\hat{\pi}_{t}}[(  \boldsymbol{v}^{*}-  \boldsymbol{v}^{\hat{\pi}_{t}})]\|$ and $b_t = \|\mathcal{B}(t)\|$. Then we have the following recursive inequality,
\begin{align}
    a_{t+1} \leq L a_t^{1+q} + b_{t},\quad t=0,1,\cdots
    \label{eqn:upper10}
\end{align}

Staring from $t=0$, we have,
\begin{align*}
    a_{1} \leq L a_0^{1+q} + b_{0}
\end{align*}

Let $b_{0}=u_0a_0^{1+q}$, where $u_0=\frac{L_bH_t}{a_0^{1+q}}$, then we have,
\begin{align*}
    a_{1} \leq (L+u_0) a_0^{1+q}
\end{align*}

Similarly, let $t=1$ and $b_{1}=u_1a_0^{(1+q)^2}$ with  $u_0=\frac{L_bH_t}{a_0^{(1+q)^2}}$. Then we have,
\begin{align*}
    a_{2} \leq (L(L+u_0)^{1+q} + u_1) a_0^{(1+q)^2}
\end{align*}

By applying Eqn. \eqref{eqn:upper10} recursively, we conclude that
    
    \begin{align*}
       \| \mathbf{E}[ {{\boldsymbol{v}}^{*}-\boldsymbol{v}}^{\hat{\pi}_{t+1}}]\| &\leq  \| {{\boldsymbol{v}}^{*}-\boldsymbol{v}}^{\pi_{0}} \|^{(1+q)^{1+t}} \\
        & \cdot ( L \cdots ((L+u_1)^{1+q}+u_2)^{1+q}\cdots +u_t),
    \end{align*}
    where $u_t := \frac{L_bH_t}{\|\boldsymbol{v}^{*}-\boldsymbol{v}^{\pi_0}\|^{(1+q)^{(1+t)}}}$ and $L_bH_t$ is the upper bound of the bias as in 
\section{Proof of  \cref{theorem:lower}}\label{appendix:theorem4}
    Following the value function update rule, we have
        \begin{align*}
    \boldsymbol{v}^{\hat{\pi}_{t+1}}
    & = \boldsymbol{v}^{\hat{\pi}_{t}}-\left({\boldsymbol{J}}_{\hat{\boldsymbol{v}}_{t}}^{-1} \left(\boldsymbol{v}^{\hat{\pi}_{t}}-{T}(\boldsymbol{v}^{\hat{\pi}_{t}})\right) + \mathcal{E}_{t}\right) \\
    & = \boldsymbol{v}^{\hat{\pi}_{t}}-\left(L(t) + \mathcal{E}_{t}\right) \\    
    & :=  \boldsymbol{v}^{\hat{\pi}_{t}} - \hat{\mathcal{L}}(t).
    \end{align*}
    Then, the difference between ${\boldsymbol{v}}^{\hat{\pi}_{t+1}} $ and ${\boldsymbol{v}}^{*} $ is given by 
    \begin{align}
        {\boldsymbol{v}}^{*}-{\boldsymbol{v}}^{\hat{\pi}_{t+1}} = & 
        {\boldsymbol{v}}^{*}-\boldsymbol{v}^{\hat{\pi}_{t}}+{\boldsymbol{J}}_{\hat{\boldsymbol{v}}_{t}}^{-1} \left(\boldsymbol{v}^{\hat{\pi}_{t}}-{T}(\boldsymbol{v}^{\hat{\pi}_{t}})\right) + \mathcal{E}_{t}.
        \label{eqn:lowerV1}
    \end{align}
   Observe the following result holds for any $\hat{\pi}_t$,
    \begin{align}
         (\boldsymbol{v}^{\hat{\pi}_t}-T(\boldsymbol{v}^{\hat{\pi}_t}))-\underbrace{( \boldsymbol{v}^{*}-T(\boldsymbol{v}^{*}) )}_{=0} \geq  \boldsymbol{J}_{\hat{\boldsymbol{v}}_{t}}^2 ( \boldsymbol{v} ^{\hat{\pi}_t}-\boldsymbol{v} ^{*} )   .
         \label{eqn:lowerJ}
    \end{align}

Recall our decomposition of the value function update is given as 
    \begin{align*}
       \hat{\mathcal{L}}(t)={\mathcal{L}}(t)+ \underbrace{\hat{\mathcal{L}}(t)-\mathbf{E}[\hat{\mathcal{L}}(t)]}_{\text{Martingale Difference Noise: }\mathcal{N}(t)}+\underbrace{\mathbf{E}[\hat{\mathcal{L}}(t)] -{\mathcal{L}}(t)}_{\text{Bias: }\mathcal{B}(t)}.
    \end{align*}
    
Plugging Eqn. \eqref{eqn:lowerJ} into Eqn. \eqref{eqn:lowerV1}, we obtain
    \begin{align*}
    {\boldsymbol{v}}^{*}-\boldsymbol{v}^{\hat{\pi}_{t+1}}
     = & {\boldsymbol{v}}^{*}-\boldsymbol{v}^{\hat{\pi}_{t}}+\left({\boldsymbol{J}}_{\hat{\boldsymbol{v}}_{t}}^{-1}\left(\boldsymbol{v}^{\hat{\pi}_{t}}-{T}(\boldsymbol{v}^{\hat{\pi}_{t}})\right) + \mathcal{E}_{t}\right) \\
    \geq & \left(\boldsymbol{I} - \boldsymbol{J}_{\boldsymbol{v}^{\hat{\pi}_t}})\right)({\boldsymbol{v}}^{*}-\boldsymbol{v}^{\hat{\pi}_{t}}) + \mathcal{B}(t) + \mathcal{N}(t)\\
    = & \gamma \boldsymbol{P}_{\widetilde{\pi}_{t+1}}({\boldsymbol{v}}^{*}-\boldsymbol{v}^{\hat{\pi}_{t}}) + \mathcal{B}(t) + \mathcal{N}(t).
    \end{align*}
    Taking expectation on both sides yields that
    \begin{align*}
     \mathbf{E}[{\boldsymbol{v}}^{*}-\boldsymbol{v}^{\hat{\pi}_{t+1}} \vert \boldsymbol{v}^{\hat{\pi}_{t}}]
    \geq & \gamma \boldsymbol{P}_{\widetilde{\pi}_{t+1}}({\boldsymbol{v}}^{*}-\boldsymbol{v}^{\hat{\pi}_{t}}) + \mathcal{B}(t).
    \end{align*}
    
    Applying the above inequality recursively gives that
    \begin{align*}
       \mathbf{E}\left[  {\boldsymbol{v}}^{*}-\boldsymbol{v}^{\hat{\pi}_{t+1}} \right]\geq & \gamma^{t+1} \mathbf{E}\left[ \left(\prod_{i=0}^{t} \boldsymbol{P}_{\widetilde{\pi}_{t+1-i}} \right)\right]({\boldsymbol{v}}^{*}-\boldsymbol{v}^{{\pi}_{0}}) \\
       & + \sum_{i=1}^{t}\gamma^i \mathbf{E}\left[ \left(\prod_{j=0}^{i-1}\boldsymbol{P}_{\widetilde{\pi}_{t+1-j}} \right)\right] (\mathcal{B}(t-i)) + \mathcal{B}(t) \\
       :=& \gamma^{t+1}\bar{\boldsymbol{P}}_{t+1}({\boldsymbol{v}}^{*}-\boldsymbol{v}^{{\pi}_{0}}) + \sum_{i=1}^{t}\gamma^i \bar{\boldsymbol{P}}_{i} \mathcal{B}(t-i) + \mathcal{B}(t), \numberthis
       \label{eqn:lowerV}
    \end{align*}
  with $\bar{\boldsymbol{P}}_{t+1}=\mathbf{E}\left[ \left(\prod_{i=0}^{t} \boldsymbol{P}_{\widetilde{\pi}_{t+1-i}} \right)\right]$. Taking norm on both sides of Eqn. \eqref{eqn:lowerV} yields the desired results.

\section{Experiments}\label{appendix:numerical}
{\bf Empirical Results.} We consider experiments over the Gridworld benchmark task. In particular, we consider the following sizes of the grid to represent different problem complexity, i.e., $10\times 10$, $15\times 15$ and $20\times 20$.  The goal of the agent is to find a way (policy)  to travel from a specified start location, e.g., the red square in Fig. \ref{fig:gridworld}, to an assigned target location, e.g., the red hexagram in Fig. \ref{fig:gridworld}, such that the (discounted) accumulative reward along the way is maximized. Specifically, the action space contains 4 discrete actions, namely, up, down, left, right, which are represented as 1,2,3,4 in the algorithm, respectively. The reward in the goal state is defined as 10 and in the bad state , e.g., the black cube in Fig. \ref{fig:gridworld}, is -6. The rest of the states result in the reward $-1$.  The discounting factor is set as $\gamma=0.9$. We consider the grid with 10 rows and 10 columns such that the state space has 100 states. The transition properties of the environment is as follows: the agent will transfer to next state following the chosen action with probability $0.7$; the agent will go left of the desired action with probability $0.15$ and go right with with probability $0.15$. For each experiment, the shaded area represents a standard deviation of the average evaluation over 5  training seeds.

Specifically, we consider the following A-C algorithm to solve the Gridworld benchmark task,

\underline{Critic Update:} The Critic updates its value by applying the Bellman evaluation operator $(T^{\pi})$ for $m$-times ($m \geq 1$), i.e., given policy $\pi$, at the $t$-th step A-C update, 
\begin{align}
    \boldsymbol{v}(t+1) = (T^{\pi})^m(\boldsymbol{v}(t)).
     \label{eqn:appendix:critic}
\end{align}

\underline{Actor Update:} The Actor updates the policy by a greedy step to maximize the learnt $\boldsymbol{v}$ value, i.e.,
\begin{align}
    \pi' = \arg\max_{\pi} T^{\pi}(\boldsymbol{v}(t+1)).
    \label{eqn:appendix:actor}
\end{align}

\begin{figure}[b!]
	\centering
 \begin{subfigure}[b]{0.29\columnwidth}
		\centering
		\includegraphics[width=\textwidth]{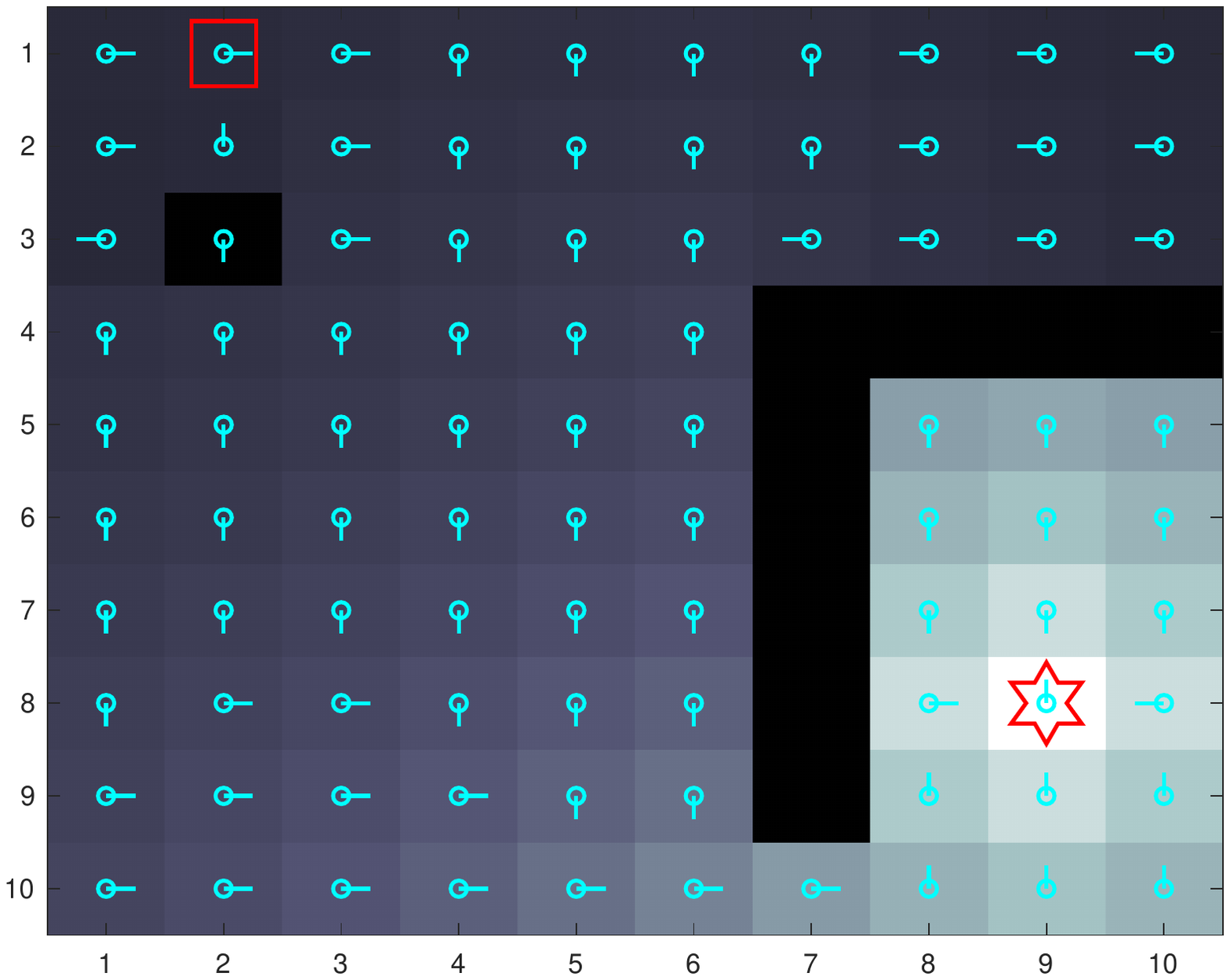}
		\caption{$10 \times 10$ Gridworld.}
		\label{fig:10}
	\end{subfigure}
	\begin{subfigure}[b]{0.32\columnwidth}
		\centering
		\includegraphics[width=\textwidth]{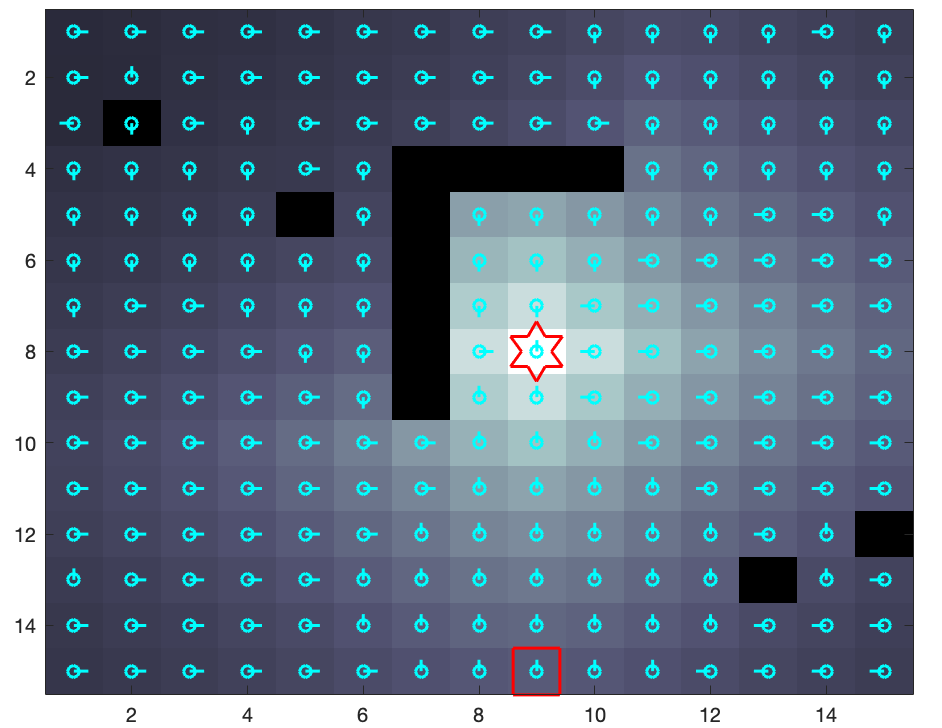}
			\caption{$15 \times 15$ Gridworld.}
		\label{fig:15}
	\end{subfigure}
	\begin{subfigure}[b]{0.32\columnwidth}
		\centering
		\includegraphics[width=\textwidth]{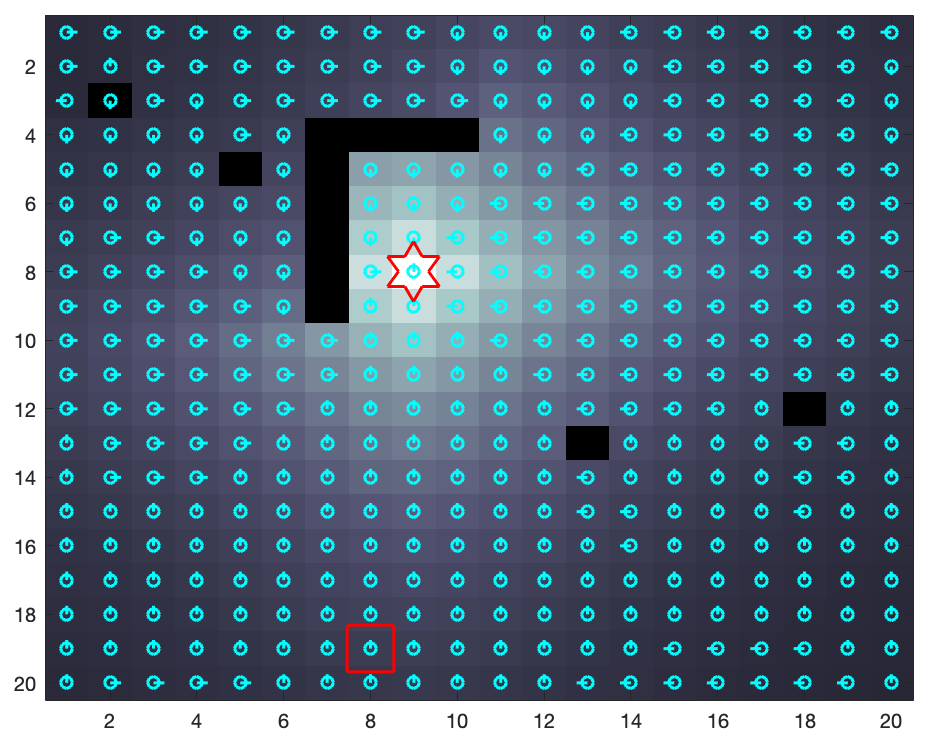}
		\caption{$20 \times 20$ Gridworld.}
		\label{fig:20}
	\end{subfigure}
	\caption{Gridworld benchmark with different sizes. The colors specify the `goodness' measure of the state, i.e., the darker color cubes are with lower $v(s)$ value and the agent should avoid those areas. The horizontal lines and vertical lines in each cube point to the direction the agent should take, i.e., policy at every state. Fig. \ref{fig:10}, Fig. \ref{fig:15} and Fig. \ref{fig:20} show the learning results after 50 iterations of A-C update.}
	\label{fig:gridworld}
\end{figure}

\begin{figure}[t!]
	\centering
	 \begin{subfigure}[b]{0.32\columnwidth}
		\centering
		\includegraphics[width=\textwidth]{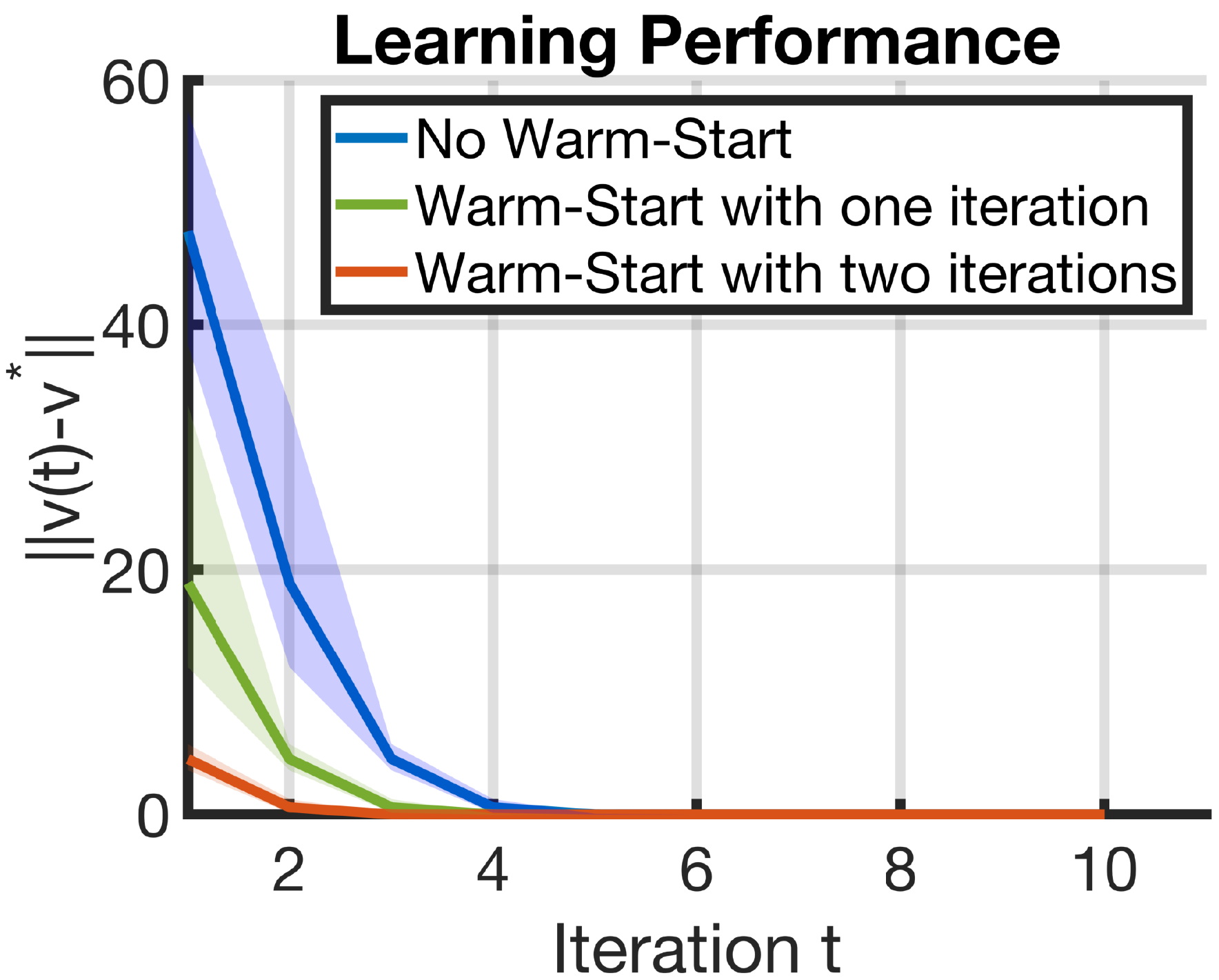}
		\caption{$10 \times 10$ Gridworld.}
		\label{fig:warm10}
	\end{subfigure}
	\begin{subfigure}[b]{0.32\columnwidth}
		\centering
		\includegraphics[width=\textwidth]{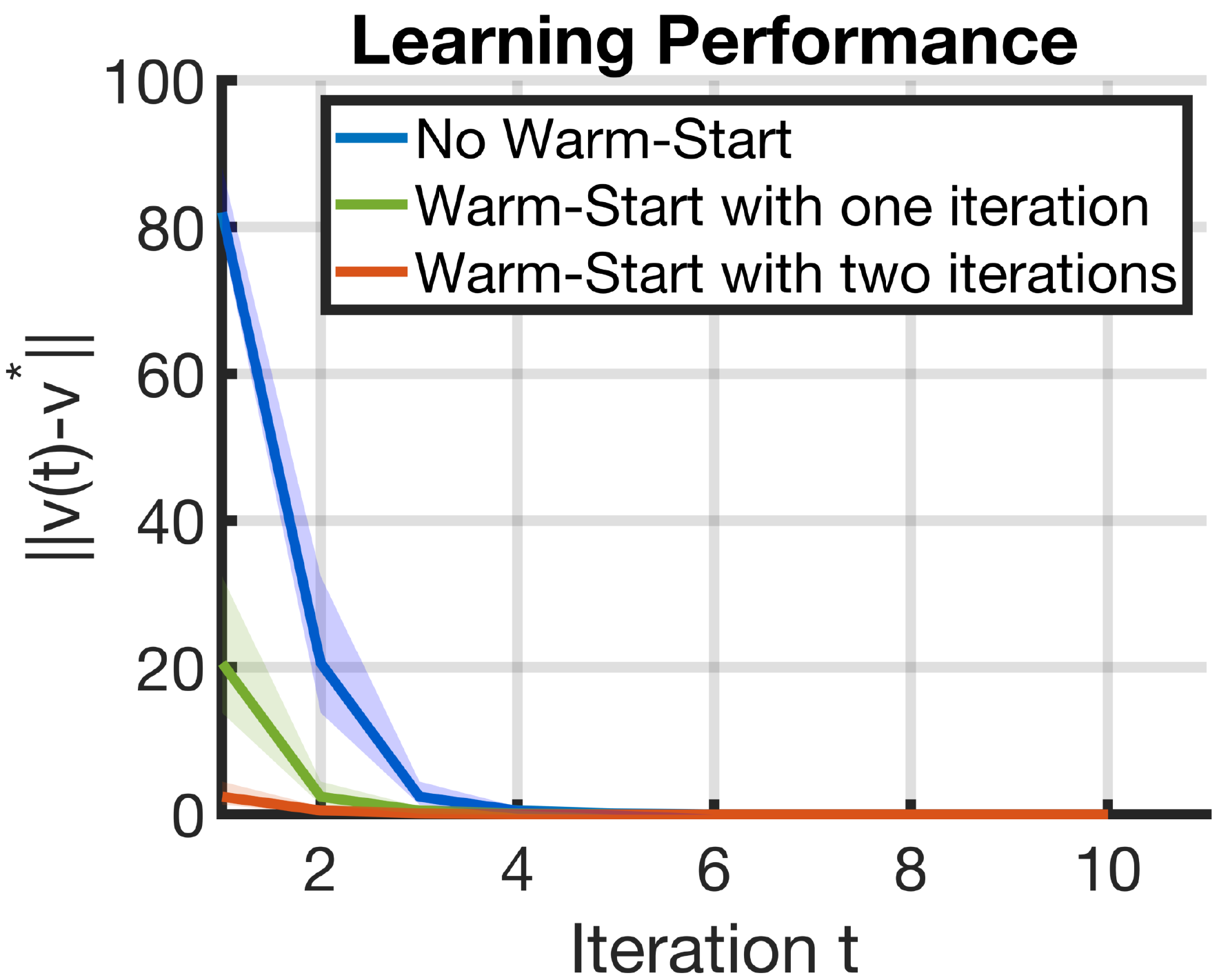}
			\caption{$15 \times 15$ Gridworld.}
		\label{fig:warm15}
	\end{subfigure}
	\begin{subfigure}[b]{0.32\columnwidth}
		\centering
		\includegraphics[width=\textwidth]{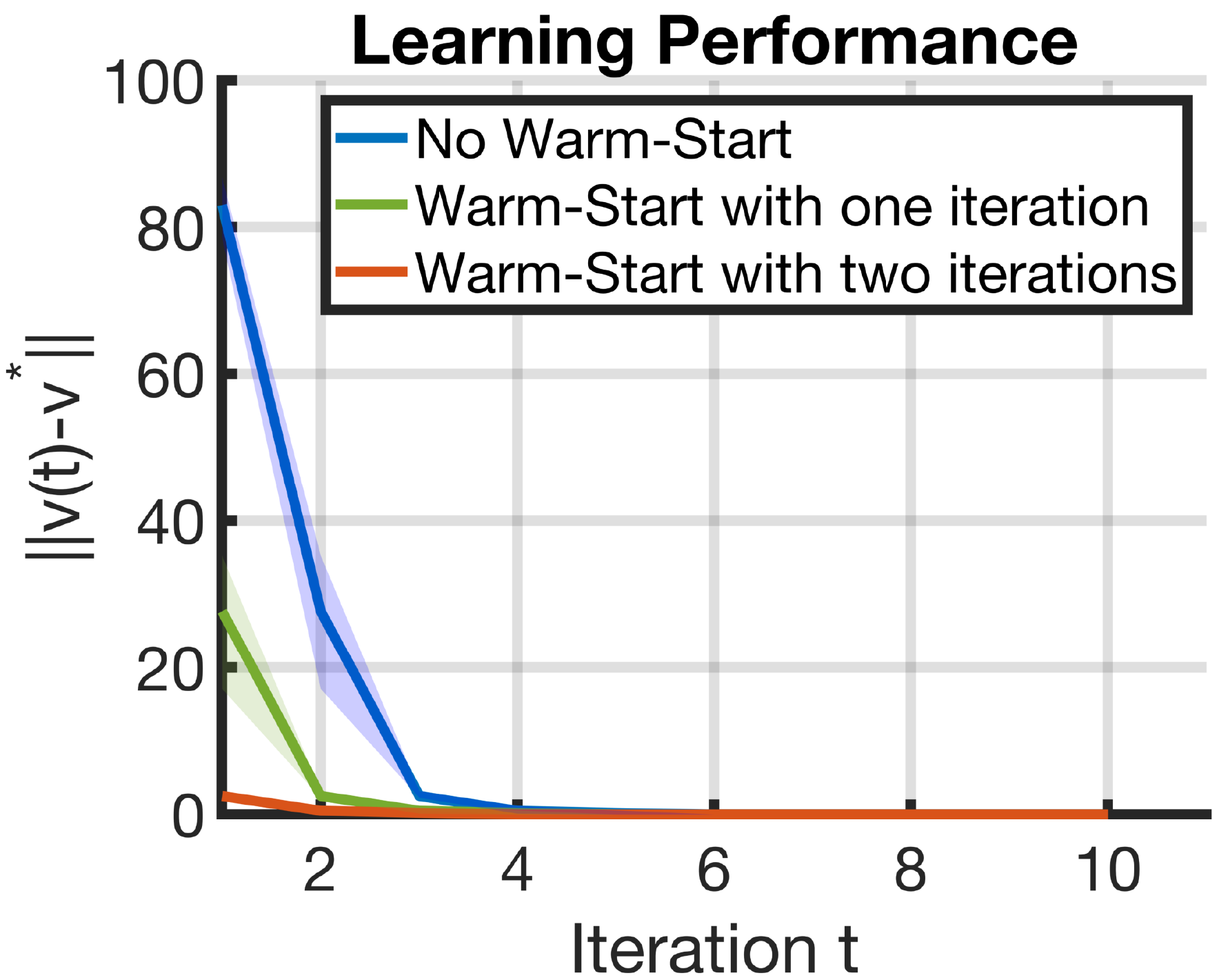}
		\caption{$20 \times 20$ Gridworld.}
		\label{fig:warm20}
	\end{subfigure}
	\hfill
	\caption{The impact of the Warm-Start Policy when no approximation errors in Actor update and Critic update. The convergence behavior given different initial policy, i.e., a random policy (no Warm-Start), a Warm-Start policy obtained by running the A-C algorithm for one iteration and two iterations. The $x$-axis represents the A-C update step and $y$-axis is the value of the norm $\|\boldsymbol{v}(t)-\boldsymbol{v}^{*}\|$. }
	\label{fig:app1}
\end{figure}

{\bf Impact of the Warm-Start Policy.} We first consider the impact of the Warm-Start policy in the ideal setting, where both the Critic update and Actor update is nearly accurate as in ADP. In this case, we let $m$ be  large enough, e.g., $m=1000$, in the Critic update Eqn. \eqref{eqn:appendix:critic}. As observed in Fig. \ref{fig:app1}, a `good' Warm-Start policy can efficiently accelerate the learning process, e.g., it only takes two iterations to convergence with a Warm-Start policy. Meanwhile, in all three cases, the performance gap $\|\boldsymbol{v}(t)-\boldsymbol{v}^{*}\|$ decays over time which reflects our discovery in Corollary \ref{theorem:upper}. Specifically, when the Warm-Start policy is not `good' enough (or even  no Warm-Start), the A-C algorithm can still be able to improve the learning performance overtime (see e.g., the first term on the right side of the upper bound in Corollary \ref{theorem:upper}).

\begin{figure}[t!]
	\centering
	\begin{subfigure}[b]{0.32\columnwidth}
		\centering
		\includegraphics[width=\textwidth]{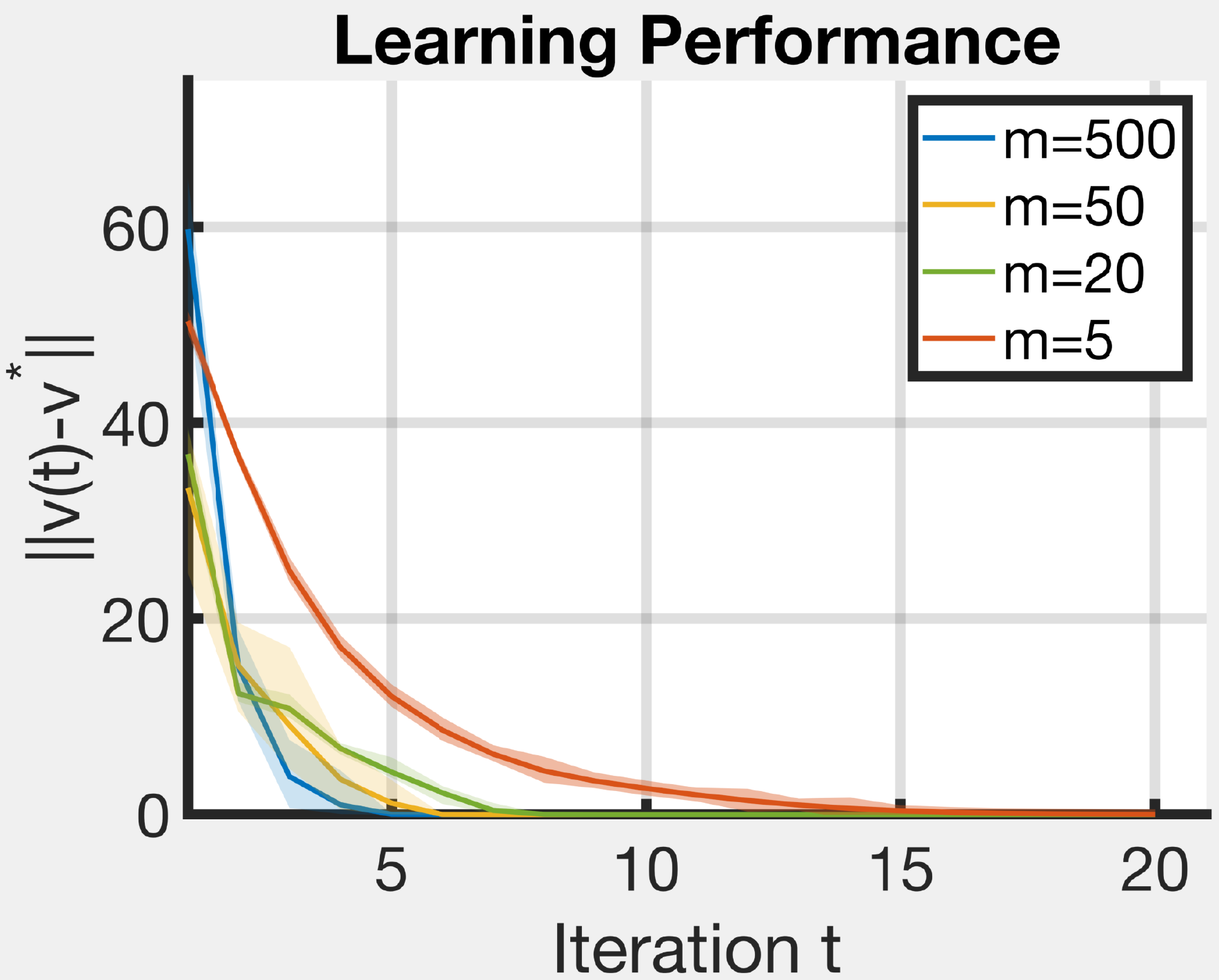}
			\caption{$10 \times 10$ Gridworld.}
		\label{fig:10m}
	\end{subfigure}
	\hfill
	\begin{subfigure}[b]{0.32\columnwidth}
		\centering
		\includegraphics[width=\textwidth]{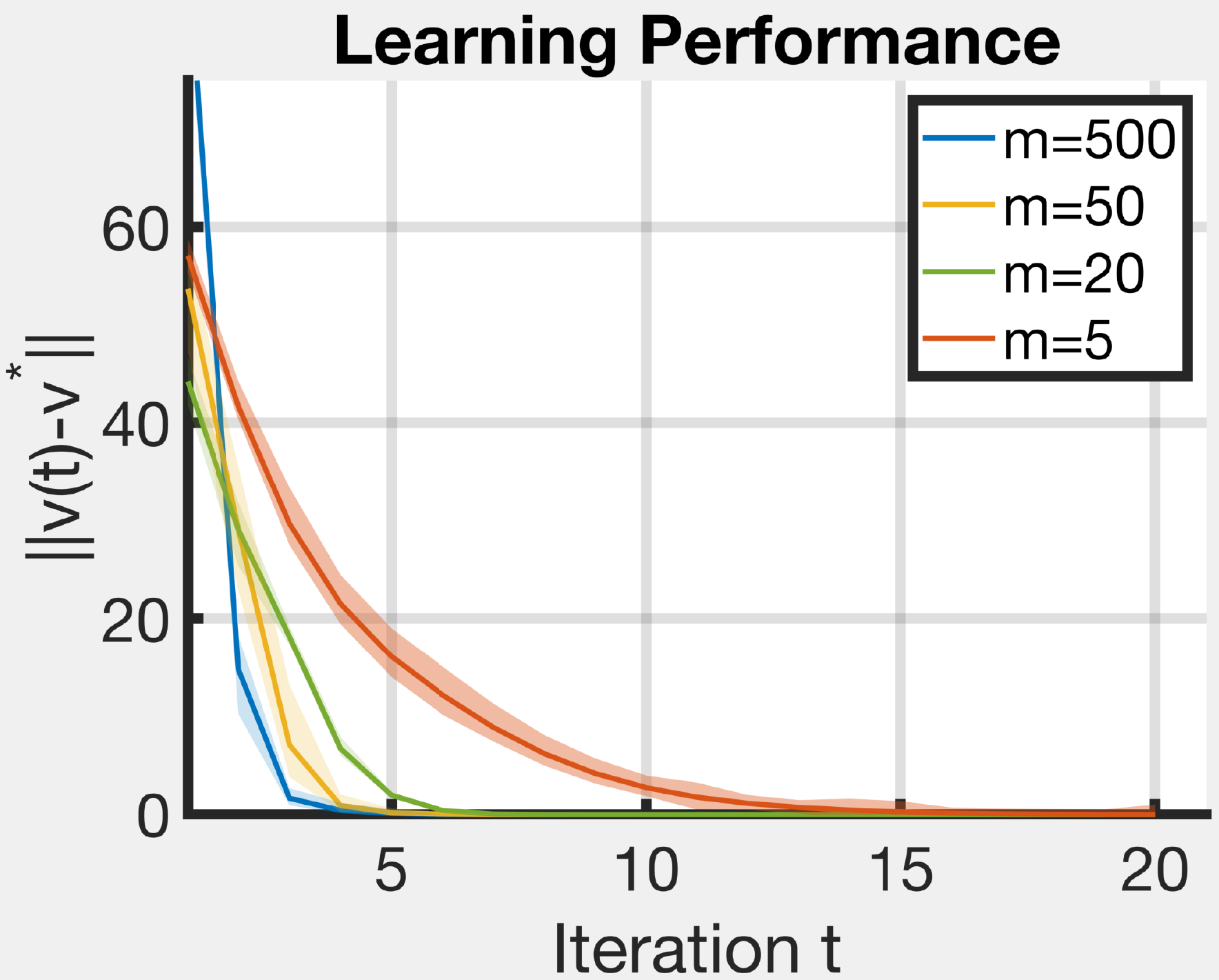}
		\caption{$15 \times 15$ Gridworld.}
		\label{fig:15m}
	\end{subfigure}
	\hfill
 	\begin{subfigure}[b]{0.32\columnwidth}
		\centering
		\includegraphics[width=\textwidth]{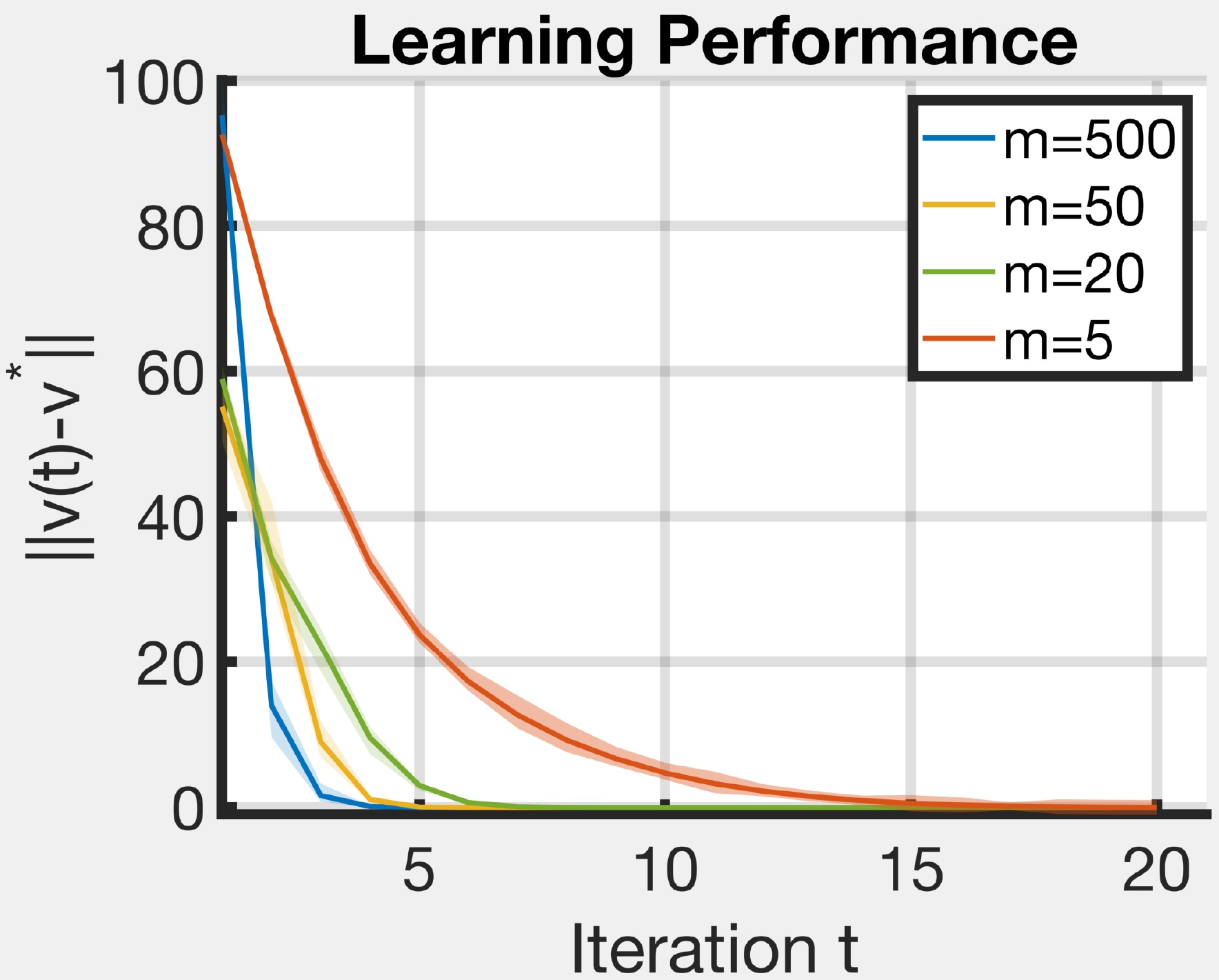}
		\caption{$20 \times 20$ Gridworld.}
		\label{fig:20m}
	\end{subfigure}
    \hfill
	\caption{Learning performance vs. rollout length.}
	\label{fig:rollout}
\end{figure}

\begin{figure}[t!]
	\centering
	\begin{subfigure}[b]{0.32\columnwidth}
		\centering
		\includegraphics[width=\textwidth]{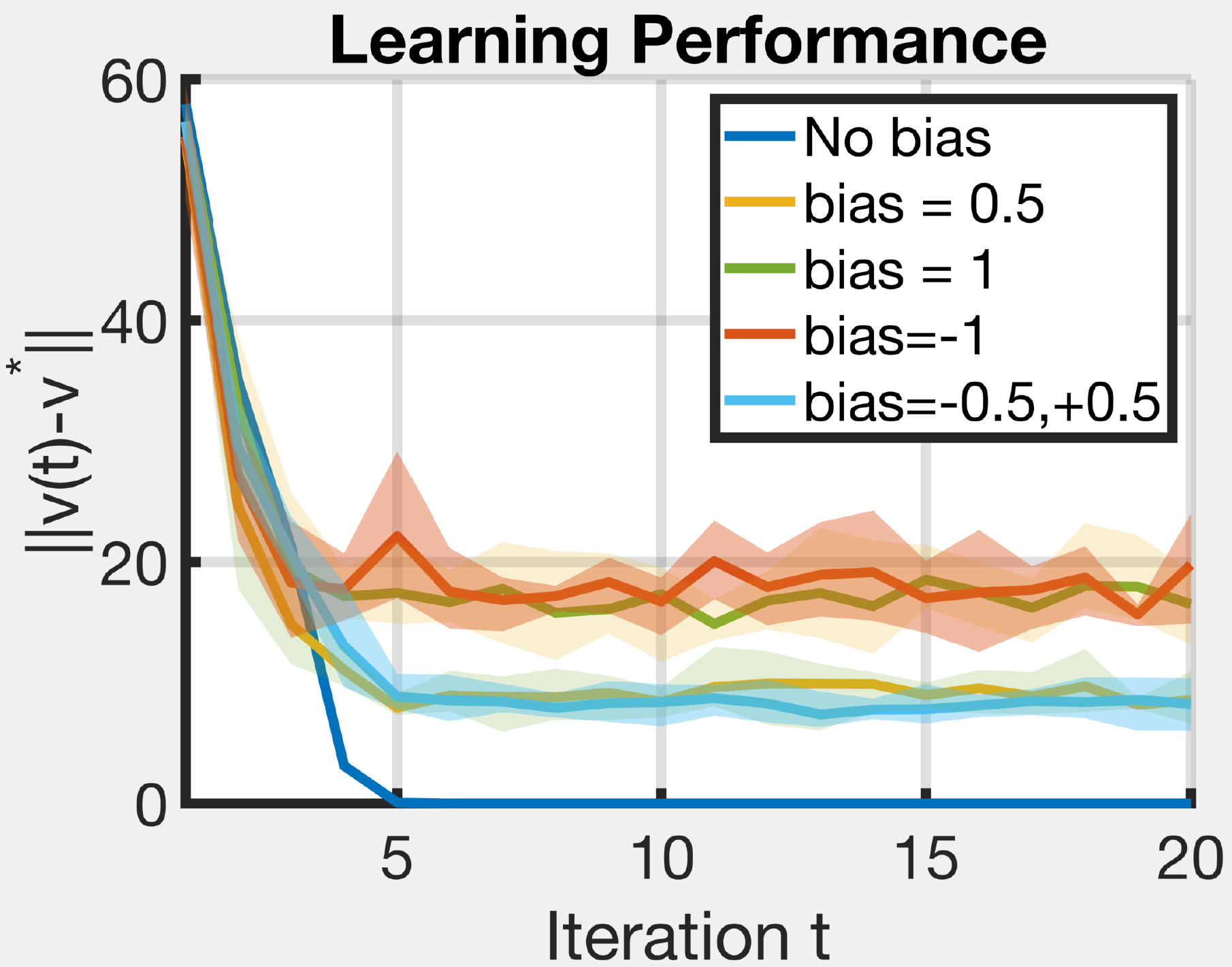}
			\caption{$10 \times 10$ Gridworld.}
		\label{fig:10critic}
	\end{subfigure}
	\hfill
	\begin{subfigure}[b]{0.32\columnwidth}
		\centering
		\includegraphics[width=\textwidth]{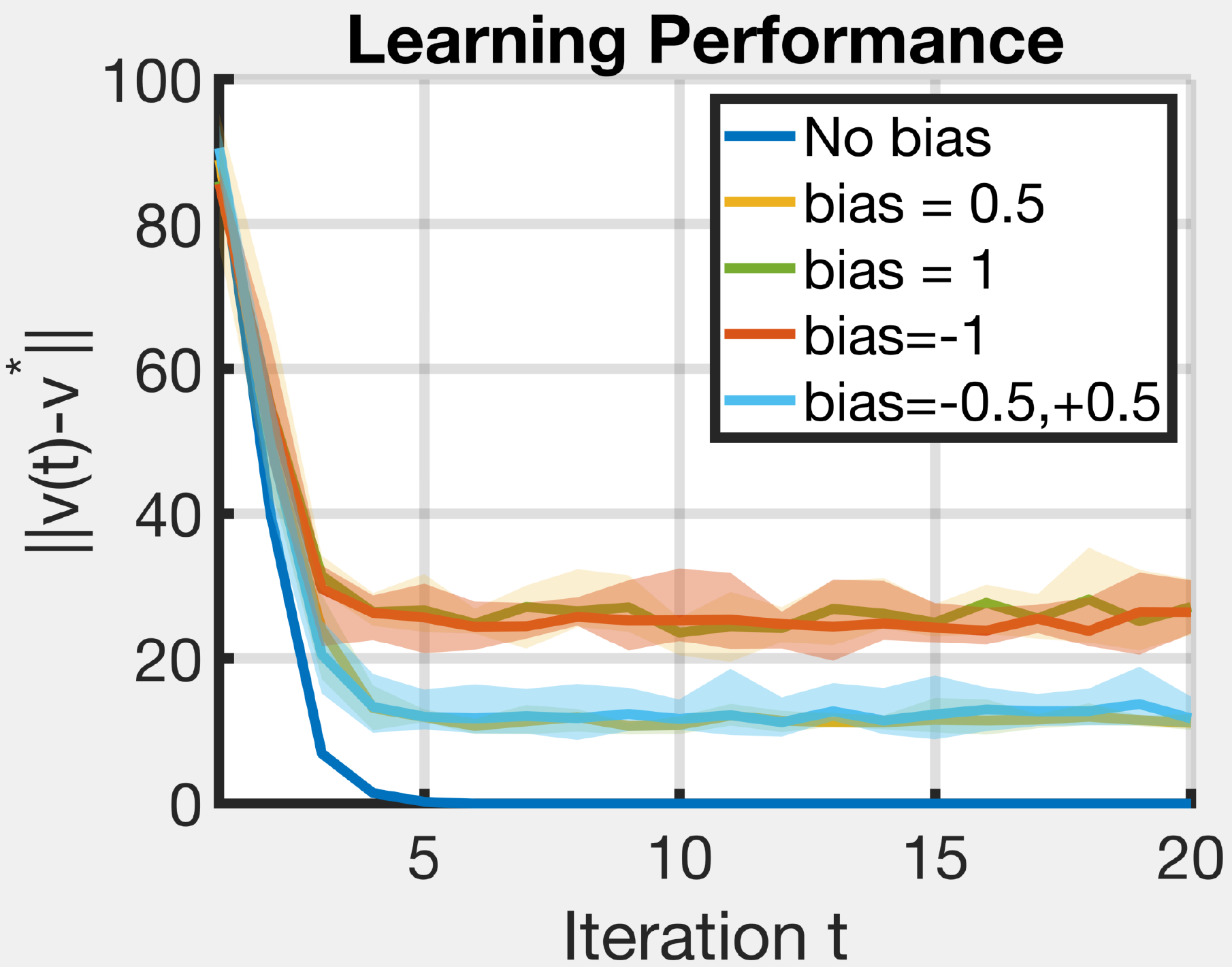}
		\caption{$15 \times 15$ Gridworld.}
		\label{fig:15critic}
	\end{subfigure}
	\hfill
 	\begin{subfigure}[b]{0.32\columnwidth}
		\centering
		\includegraphics[width=\textwidth]{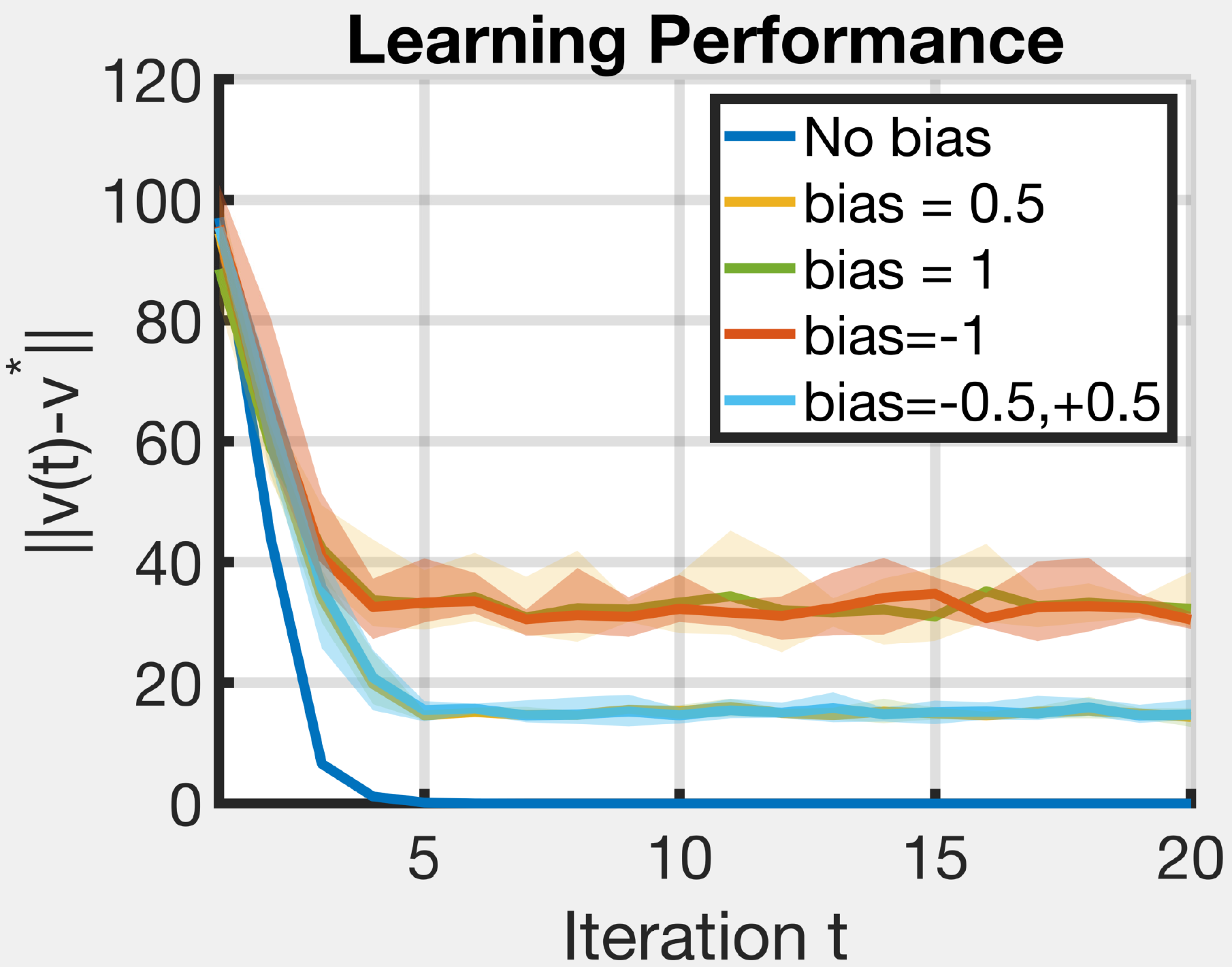}
		\caption{$20 \times 20$ Gridworld.}
		\label{fig:20critic}
	\end{subfigure}
    \hfill
	\caption{Illustration of the lower bound in Theorem \ref{theorem:Critic}.}
	\label{fig:critic}
\end{figure}

{\bf Impact of the Approximation Error in the Critic Update.} We evaluate the impact of the approximation error in the Critic update on the convergence behavior by two approaches. (1) First, we study the Critic update with finite time Bellman evaluation, e.g., $m=500,50,20,5$. As shown in Fig. \ref{fig:rollout}, the inaccurate Critic update impacts the convergence behavior as expected. The case when $m=5$ shows that the  finite time Bellman evaluation may contribute to the slower convergence. (2) Next, we consider the general case when there is approximation error in the Critic update. In particular, we add the uniform noise $e(t)$ in the value function with different bias, e.g.,$\mathbf{E}[e(t)] = 0, 0.5, 1, -1$. Meanwhile, we also consider the case when the bias can be either $+0.5$ or $-0.5$ in the learning process, e.g., $\mathbf{E}[e(t)] = 0.5 $ with probability 0.5 and $\mathbf{E}[e(t)] = -0.5 $ with probability 0.5. The resulting convergence behavior is presented in Fig. \ref{fig:critic}. Notably, it can be clearly seen that both the positive and negative bias may result in an error floor and `prevent' the algorithm from converging to the optimal (e.g., the last two terms  of the lower bound in Theorem \ref{theorem:lower}). The experiment results in Fig. \ref{fig:critic} corroborate our theoretical findings in \cref{theorem:Critic}, \cref{theorem:upper} and \cref{theorem:lower}.

\begin{figure}[htb!]
    \centering
	\begin{subfigure}[b]{0.32\columnwidth}
		\centering
		\includegraphics[width=\textwidth]{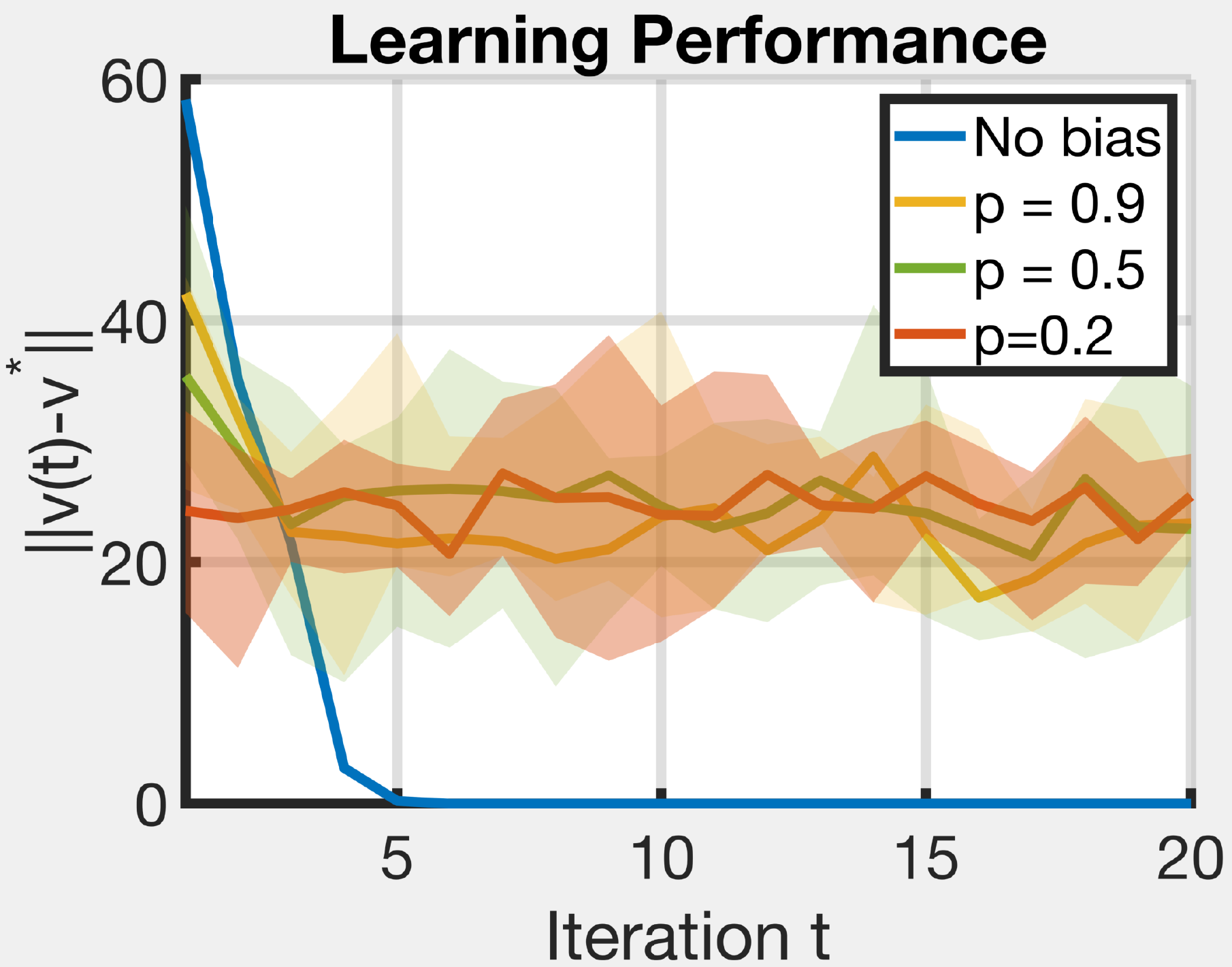}
			\caption{$10 \times 10$ Gridworld.}
		\label{fig:10actor}
	\end{subfigure}
	\hfill
	\begin{subfigure}[b]{0.32\columnwidth}
		\centering
		\includegraphics[width=\textwidth]{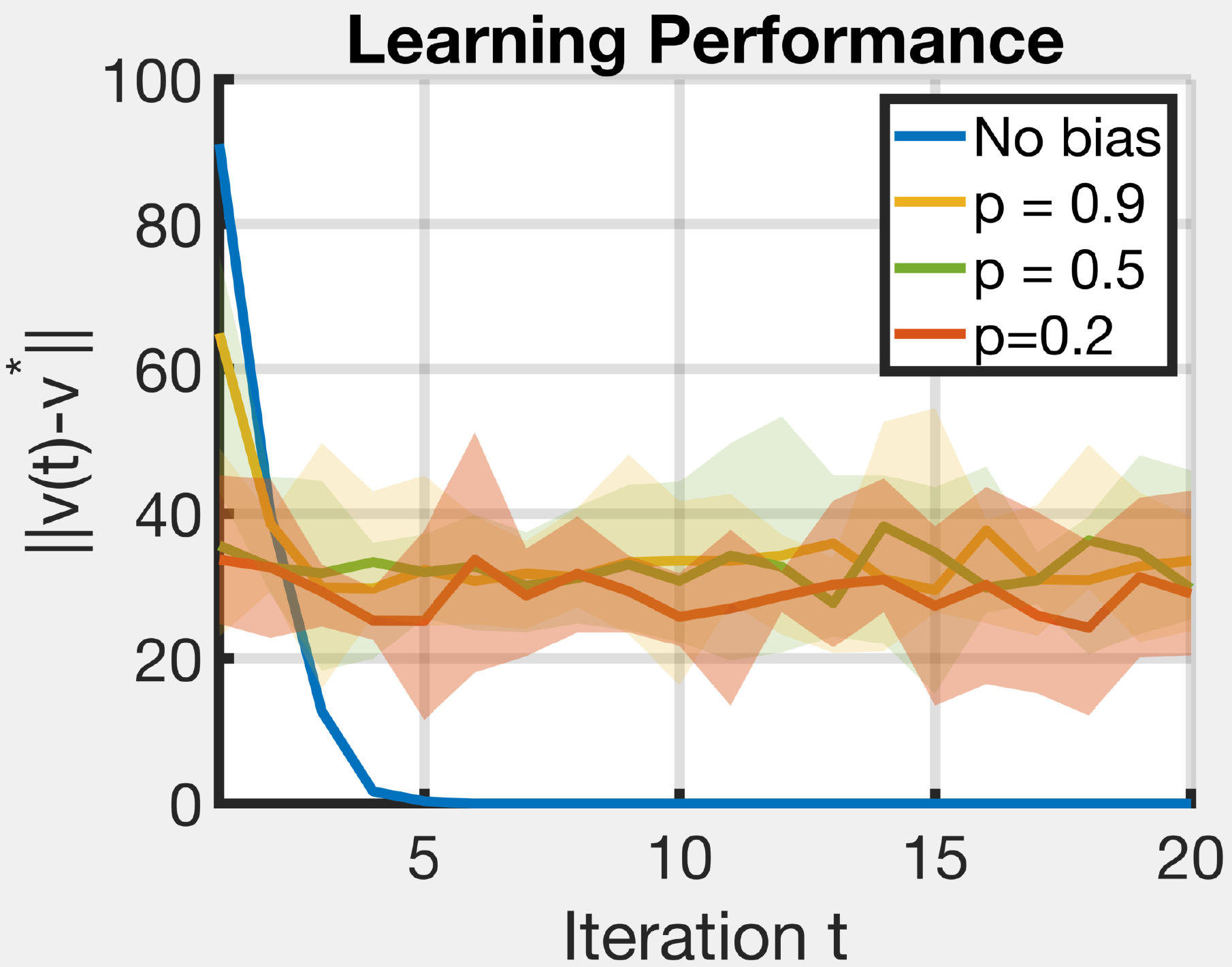}
		\caption{$15 \times 15$ Gridworld.}
		\label{fig:15actor}
	\end{subfigure}
	\hfill
 	\begin{subfigure}[b]{0.32\columnwidth}
		\centering
		\includegraphics[width=\textwidth]{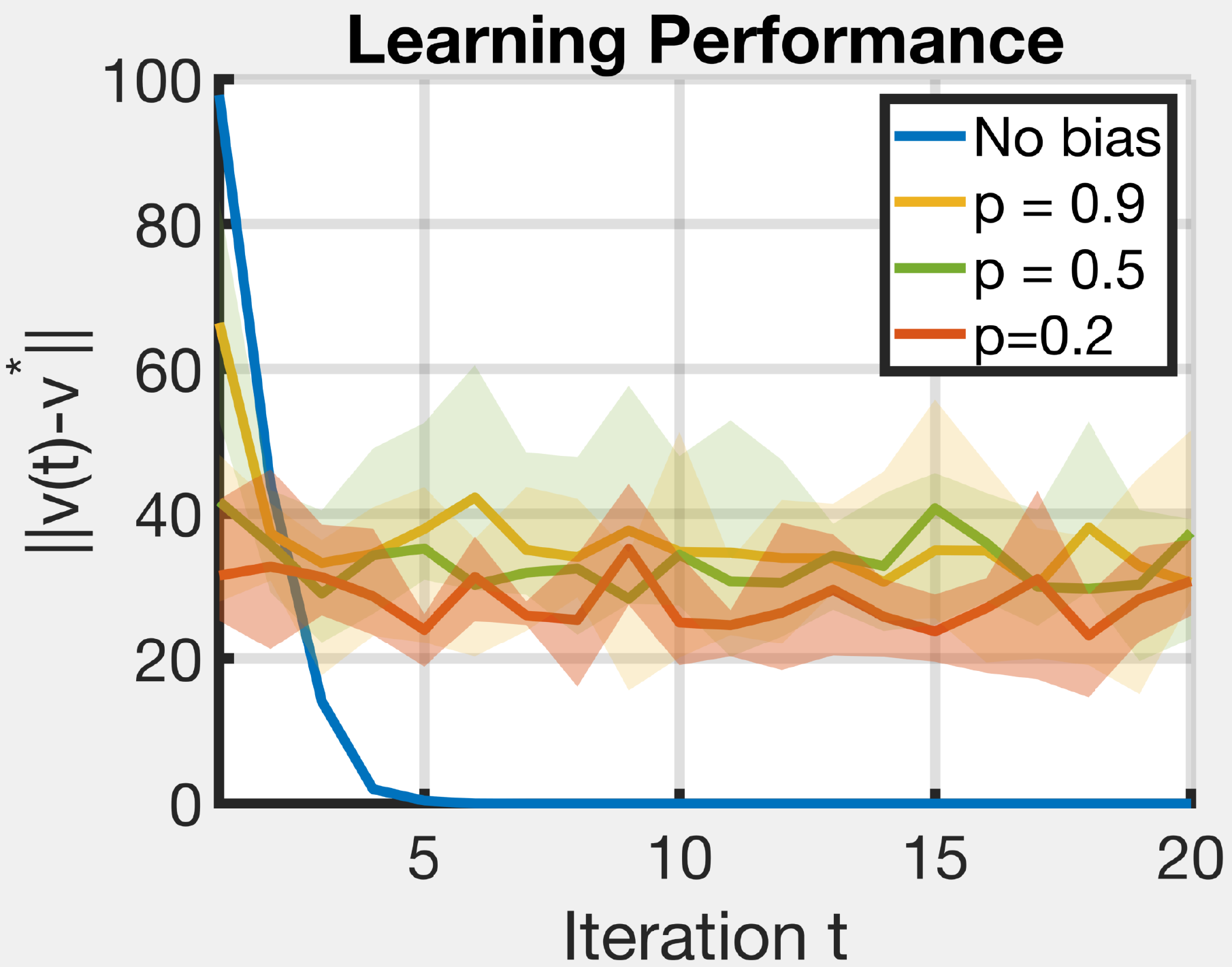}
		\caption{$20 \times 20$ Gridworld.}
		\label{fig:20actor}
	\end{subfigure}
    \hfill
    \caption{Convergence behavior vs. Approximation Error in the Actor Update.}
    \label{fig:actorbias}
\end{figure}

{\bf Impact of the Approximation Error in the Actor Update.} We investigate the learning performance of the A-C algorithm under inaccurate Actor update. In particular, we add the perturbation on the learnt policy in Eqn. \eqref{eqn:appendix:actor} as follows,
\begin{align*}
    \text{Policy}(s) = 
    \begin{cases}
    \text{Policy}(s), & p,\\
    \operatorname{randi}([1,4]), & 1-p.
    \end{cases}
\end{align*}

where $ \text{Policy}(s) $ denotes the action should the agent take at the current state $s$ following the learnt policy and $\operatorname{randi}([1,4])$ is a random function to choose the action $1,2,3,4$ uniformly. Thus, with probability $p$, the agent will choose the action follow the current policy while with probability $1-p$, the agent will choose a random action. By setting different $p$, we show in Fig. \ref{fig:actorbias} that the approximation error in the Actor update may significantly degrade the learning performance. Meanwhile, Fig. \ref{fig:actorbias} also indicates that decreasing bias can be helpful to improve the learning performance (see the red and green lines in Fig. \ref{fig:actorbias}). This observation also verifies our results in  \cref{theorem:lower}. 




\section{Off-policy A-C Algorithem as Newton's Method with Perturbation} \label{appendix:offpolicy}
{\color{black}

We note that the actor and critic updates in Eqn. (9) and Eqn. (8) are a general template that admits both off- and on-policy method. More specifically, denote the target policy by  $\pi_{\text{tar}}$ and the behavior policy by $\pi_{\text{bhv}}$. When the off-policy menthod is used, then the updates in Eqn. (9) and Eqn. (8) are given by 

$$\omega_{t+1} \leftarrow  {\arg\min}_{\omega} \mathbf{E}_{(s,a) \sim \rho^{\pi_{\text{bhv}}}}\left[  Q_{ {{\omega, \pi_{\text{tar}}}}_{t+1}}(s, a) - \omega^{\top} \phi(s,a) \right]^2, $$

$$   \pi_{t+1} \leftarrow  \arg\max_{\pi} \mathbf{E}_{(s,a)\sim \rho^{\pi_{\text{bhv}}}}\left[ Q_{\omega_{t+1},\pi_{\text{tar},t}}(s, a)\right]. $$

 This is in contrast to the updates given below when the on-policy method is used:

$$\omega_{t+1} \leftarrow  {\arg\min}_{\omega} \mathbf{E}_{(s,a) \sim \rho^{\pi_{\text{tar}}}}\left[  Q_{ {{\omega, \pi_{\text{tar}}}}_{t+1}}(s, a) - \omega^{\top} \phi(s,a) \right]^2, $$

$$   \pi_{t+1} \leftarrow  \arg\max_{\pi} \mathbf{E}_{(s,a)\sim \rho^{\pi_{\text{tar}}}}\left[ Q_{\omega_{t+1},\pi_{\text{tar},t}}(s, a)\right]. $$

\begin{itemize}
    \item One major challenge of the off-policy analysis lies in the fact that the behavior policy can be arbitrary \cite{sutton1999policy}\cite{sutton2018reinforcement} and hence it is impossible to develop a unifying framework. For example, the behavior policy can be obtained by human demonstration (a similar idea is used in an early version of AlphaGo), deriving from the target policy as in Q-learning/DQN or from a previous behavior policy. Meanwhile, the key drawback of off-policy method is that it does not stably interact with the function approximation and is generally of greater variance and slower convergence rate \cite{sutton2018reinforcement}.  In this regard, modern off-policy deep RL requires techniques such as growing batch learning, importance sampling or ensemble method to stabilize the algorithm. Thus, for ease of exposition, we only include the on-policy analysis in our work.
    \item Our framework and theoretical results  are able to be applied to off-policy setting with the extra assumption on the behavior policy.  In particular, we assume the behavior policy is in the neighborhood of the target policy, i.e., in each Actor and Critic update step, 
$$\|\mathcal{E}_{\text{bhv-tar},t}\| :=\|{\pi_{\text{tar}},t} - {\pi_{\text{bhv},t}}\| \leq C_{bt},$$
where $C_{tb} \geq 0$ is a constant. In this way, we can write the A-C update in the off-policy setting as a Newton Method with perturbation, i.e.,
$$ \boldsymbol{v}_{\pi_{\text{tar}},t+1} =  \boldsymbol{v}_{\pi_{\text{tar}},t} – (\boldsymbol{J}^{-1}_{\boldsymbol{v}_{\pi_{\text{tar}},t} }(\boldsymbol{v}_{\pi_{\text{tar}},t}-T(\boldsymbol{v}_{\pi_{\text{tar}},t}))-\mathcal{E}_t) , $$ 
where $\mathcal{E}_t $ is the perturbation which captures the approximation error from Actor update, Critic update and the behavior policy. Explicitly, we have the perturbation with the following form,

\begin{align*}
    \mathcal{E}_{t}=\mathcal{E}_{v,t} + \mathcal{E}_{\hat{J},t}(\boldsymbol{v}^{\hat{\pi}_{t+1}} -(\boldsymbol{r}_{\widetilde{\pi}_{t+1}}+\gamma \boldsymbol{P}_{\widetilde{\pi}_{t+1}} \boldsymbol{v}^{\hat{\pi}_{t+1}} )) - \boldsymbol{J}_{\hat{\boldsymbol{v}}_{t}}^{-1}(\mathcal{E}_{r,t}+\mathcal{E}_{bhv-tar,t}
 + \gamma (\mathcal{E}_{P,t}+\mathcal{E}_{bhv-tar,t}
)\boldsymbol{v}^{\hat{\pi}_t}). 
\end{align*}

Thus, the off-policy analysis is similar to the on-policy case but with the `error’ induced by the behavior policy.
\end{itemize}

}

\end{document}